%% file: main.tex
\documentclass{article} 
\usepackage{iclr2026_conference,times}

\input{math_commands.tex}

\usepackage{hyperref}
\usepackage{url}

\usepackage[utf8]{inputenc} 
\usepackage[T1]{fontenc}    
\usepackage{booktabs}       
\usepackage{amsfonts}       
\usepackage{nicefrac}       
\usepackage{microtype}      
\usepackage{xcolor}         

\usepackage{amsmath}
\usepackage{amsthm}

\newtheorem{theorem}{Theorem}[section]

\newtheorem{lemma}[theorem]{Lemma}

\newtheorem{definition}[theorem]{Definition}

\theoremstyle{remark}
\newtheorem{remark}[theorem]{Remark}
\usepackage{wrapfig}
\usepackage{graphicx}
\usepackage{caption}
\usepackage{booktabs}
\usepackage{adjustbox}
\usepackage{multirow}
\usepackage{makecell}
\usepackage{enumitem,array}

\usepackage{float} 

\usepackage{subcaption}

\title{Efficient Quantization of Mixture-of-Experts with Theoretical Generalization Guarantees}

\author{%
    Mohammed Nowaz Rabbani Chowdhury$^{1}$, 
    Kaoutar El Maghraoui$^{2}$, 
    Hsinyu Tsai$^{2}$, \\
    \textbf{Naigang Wang}$^{2}$,
    \textbf{Geoffrey W. Burr}$^{2}$, 
    \textbf{Liu Liu}$^{1}$, 
    \textbf{Meng Wang}$^{1, }$\thanks{Corresponding author. Email: wangm7@rpi.edu.}~\,  \\
    $^{1}$Rensselaer Polytechnic Institute, 
    $^{2}$IBM Research
}

\iclrfinalcopy 
\begin{document}

\maketitle

\begin{abstract}
  Sparse Mixture-of-Experts (MoE) allows scaling of language and vision models efficiently by activating only a small subset of experts per input. While this reduces computation, the large number of parameters still incurs substantial memory overhead during inference. 
Post-training quantization has been explored to address this issue. Because uniform quantization suffers from significant accuracy loss at low bit-widths, mixed-precision methods have been recently explored; however,  they often require substantial computation for bit-width allocation and overlook the varying sensitivity of model performance to the quantization of different experts. 
  We propose a theoretically grounded expert-wise mixed-precision strategy that assigns bit-width to each expert primarily based on their \textit{change in router's $l_2$ norm} during training. Experts with smaller changes are shown to capture less frequent but critical features, and model performance is more sensitive to the quantization of these experts, thus requiring higher precision. Furthermore, to avoid allocating experts to lower precision that inject high quantization noise, experts with large \textit{maximum intra-neuron variance} are also allocated higher precision. 
 Experiments on large-scale MoE models, including Switch Transformer and Mixtral, show that our method achieves higher accuracy than existing approaches, while also reducing inference cost and incurring only negligible overhead for bit-width assignment.\footnote{Code available at: \url{https://github.com/nowazrabbani/moe_quantization}}  
\end{abstract}

\input{introduction}
\input{related_works}
\input{method}

\input{theory}

\input{experiments}

\input{conclusion}
\input{reproducibility}

\bibliography{Reference/reference}
\bibliographystyle{iclr2026_conference}

\appendix
\input{appendix}

\end{document}

%% file: math_commands.tex
\usepackage{amsmath,amsfonts,bm}

\def\eqref#1{equation~\ref{#1}}

\def\1{\bm{1}}

\DeclareMathAlphabet{\mathsfit}{\encodingdefault}{\sfdefault}{m}{sl}
\SetMathAlphabet{\mathsfit}{bold}{\encodingdefault}{\sfdefault}{bx}{n}

\newcommand{\R}{\mathbb{R}}



%% file: introduction.tex
\section{Introduction}

The sparse Mixture of Experts (MoE) architecture 
allows the construction of larger pre-trained   language and vision models without increasing training costs
\citep{shazeer2017outrageously,lepikhin2021gshard,riquelme2021scaling,fedus2022switch,allingham2022sparse}. In this architecture, the transformer block's feed-forward network (FFN) is replaced by multiple FFN modules, each referred to as an \textit{expert}. Each expert is paired with a trainable \textit{router}, which selectively activates a small subset of experts for each input token.  Compared to dense models \citep{fedus2022switch,chowdhury2023patch}, MoE enables faster convergence and reduces the amount of training data required. 
Additionally, MoE maintains similar inference FLOPs to dense models despite having a larger parameter count \citep{riquelme2021scaling,zhou2022mixtureofexperts}.

Despite these advantages, MoE incurs substantial memory costs during inference due to their large size, limiting their deployment. 
Since experts learn diverse features during pre-training, not all are equally relevant for a specific downstream task, some recent pruning strategies attempt to mitigate memory usage by eliminating task-irrelevant experts \citep{chen2022task,koishekenov2023memory,chowdhury2024provably}.
However, the effectiveness of pruning diminishes in complex tasks, where a larger set of experts remains essential.

Post-training weight quantization (PTWQ) focuses on quantizing weights 
after training and has emerged as another promising technique for reducing the memory footprint of large language models (LLMs) \citep{shao2024omniquant,hubara2021accurate,lin2024awq,frantar2023optq,badri2023hqq}. Several works have applied quantization to large MoE models by uniformly reducing all expert weights to a fixed bit-width \citep{kim2022says,kim2023finequant,frantar2024qmoe}. However, this uniform approach overlooks the varying importance of different experts, resulting in substantial performance degradation under extremely low-bit settings (e.g., sub-3-bit quantization).
Although various mixed-precision quantization methods have recently been developed for other model architectures and could potentially be applied to MoEs, e.g., block-wise mixed precision of MoEs \citet{li2024examining}, these do not leverage the varying relevance of experts in MoE models. An \textit{expert-wise} mixed-precision approach, in which bit-width varies across experts based on their sensitivity, offers greater potential to preserve accuracy under low-bit constraints. Yet, this direction remains largely unexplored.
To our knowledge, only two recent works \citep{li2024examining,huang2025mixture} have explored this approach, using metrics such as expert usage frequency and mean routing weights to estimate experts' sensitivity. However, these heuristics are suboptimal and lack theoretical justification \citep{chowdhury2024provably}. This raises a fundamental question:

\begin{center}
\textit{What metric provably categorizes experts 
in the mixed-precision quantization of an MoE layer?}
\end{center}

This paper addressed the question both theoretically and empirically. Our theoretical analysis reveals that allocating higher bit-width to a group of experts with a \textbf{smaller}   \textit{change in the router's} $l_2$ \textit{norm} during training, corresponding to experts that learn less frequently used but important features, while allocating the rest of the experts in lower bit-width can significantly reduce model size without hurting performance. Moreover, allocating some of the experts with high \textit{maximum intra-neuron variance} to higher bit allows further compression. Extensive empirical evaluations support these findings, demonstrating that large state-of-the-art (SOTA) MoE models (e.g., Switch Transformer, Mixtral) can be quantized to ultra-low-bit regimes (e.g., below 3-bit) without sacrificing accuracy. Our major contributions are summarized as follows:

 1. \textbf{A theoretically grounded metric, the change in a router’s $l_2$ norm during training, for expert-wise mixed-precision quantization.}
  We theoretically analyze the training dynamics and generalization behavior of a simplified two-layer MoE model fine-tuned on classification tasks. This model is a SOTA theoretical model in understanding training and generalization of MoEs and general neural networks. We prove that experts capturing less prevalent features exhibit smaller changes in their router’s $l_2$ norm during training. We further prove that these experts exhibit lower activation levels. Hence, the model's generalization performance is more sensitive to the quantization of these experts, requiring them to have higher precision. Unlike the prior work \citep{chowdhury2024provably} that uses the router's $l_2$ norm to distinguish between relevant and irrelevant experts for pruning, our analysis offers a finer-grained view that identifies varying levels of expert importance, enabling a principled approach to diverse expert-wise bit-width allocation for mixed-precision quantization.

2. \textbf{Empirical Validation of Expert-wise Mixed Precision on large MoE models, 
  including Switch Transformer and Mixtral.} Our results show that assigning precision based on changes in the router's $l_2$ norm during fine-tuning outperforms alternative heuristics, such as expert activation frequency and activation weights. Moreover, for large pretrained models like Mixtral, where fine-tuning is computationally expensive, we demonstrate that using the router's $l_2$ norm from the pretrained model alone, without any fine-tuning, achieves test accuracy comparable to the existing expert-wise mixed-precision strategies (Table \ref{t_d_t}) while reducing inference computation (Figure \ref{f_time}). Importantly, our approach incurs negligible computational overhead to determine expert bit-widths, while the alternative methods require significant GPU computation.


%% file: related_works.tex
\section{Related works}


\textbf{Quantization of large models.}  Parallel to PTWQ, some other  methods focus on minimizing quantization error through quantization aware (re-)training (QAT),  but these methods are  every  expensive  and not suitable for large models 
\citep{wang2022deep,liao2024apiq,gu2024olmes}.   Mixed-precision strategies have also been explored, where bit-widths vary across different model components (e.g., MLP blocks, attention heads) \citep{dong2020hawq, li2024examining, huang2025slimllm, dettmers2024spqr}. However, intra-layer variation of bit-widths remains underexplored.

\textbf{MoE compression.} 
To compress MoE models, some approaches focus on expert pruning, either targeting specific downstream tasks during fine-tuning \citep{chen2022task, koishekenov2023memory, chowdhury2024provably}, or removing irrelevant experts from the pre-trained model \citep{zhang2024diversifying, xie2024moe}. However, the effectiveness of pruning diminishes for complex tasks. PTWQ has also been applied to compress large MoE models, with most methods focusing on uniform quantization of experts \citep{kim2022says, kim2023mixture, kim2023finequant, yi2025edgemoe, frantar2024qmoe}, which often results in degraded performance under ultra low-bit settings. Two very recent works have explored expert-wise mixed-precision quantization for MoE models \citep{li2024examining, huang2025mixture}, but they either rely on suboptimal metrics or require extensive memory and computational resources to determine the expert-specific bit-width distribution.


\textbf{Optimization and generalization analysis of neural networks.} Several works have established optimization and generalization guarantees for neural networks (NNs) using neural tangent kernel (NTK)-based approaches \citep{jacot2018neural, lee2019wide, du2019gradient, allen2019convergence, li2022generalization}. However, such analyses can not capture realistic training dynamics as they require the weights remain close to initialization throughout training. More recent studies have focused on the feature learning dynamics of NNs to derive   generalization guarantees \citep{karp2021local, brutzkus2021optimization, li2023a, zhang2023joint, chowdhury2023patch}, offering better alignment with practical neural network behavior. These analyses are typically restricted to shallow networks, and our work falls within this framework.

%% file: method.tex
\section{Expert-wise mixed-precision quantization of MoE}

\begin{wrapfigure}{r}{0.5\linewidth}
    \centering
    \vspace{-0.3
    in}
    \includegraphics[width=0.95\linewidth]{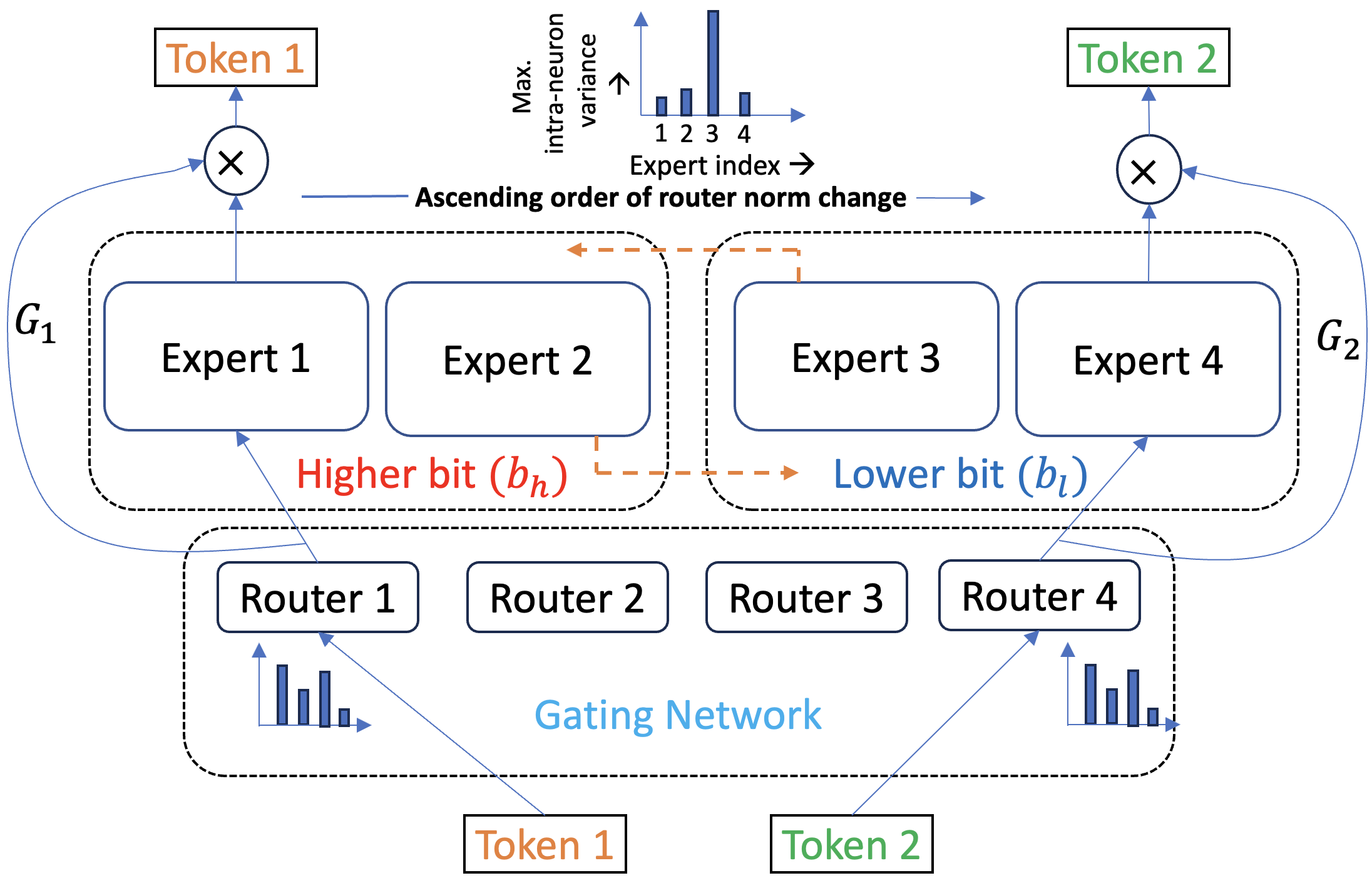}
    \caption{A schematic of Mixture-of-Experts with our proposed approach. Experts with smaller router norm changes in higher bit. Experts with large max intra-neuron variance are reordered.}
    \label{diagram}
    \vspace{-0.1in}
\end{wrapfigure}

\subsection{The basics  of mixture-of-experts architecture}



In MoE models, the standard feed-forward networks (FFNs) in transformer MLP blocks are replaced with multiple parallel FFN \textit{experts}. A gating network of \textit{routers} assigns input tokens to specific experts.

Consider an example MoE block that includes $k$ experts, each of which is a two-layer FFN. 
Let $x=[x^{(1)^{\top}},x^{(2)^{\top}}, ..., x^{(n)^{\top}}]\in\mathbb{R}^{nd}$ denote the input sequence, consisting of  $n$ tokens, each of dimension $d$. For each token $x^{(j)}\in\mathbb{R}^d$ with $j\in[n]$, the MoE block produces a $d^\prime$-dimensional output token, forming the output sequence $x_{out}=[x^{(1)^{\top}}_{out}, x^{(2)^{\top}}_{out}, ..., x^{(n)^{\top}}_{out}]\in\mathbb{R}^{nd^\prime}$. The output $x_{out}^{(j)}\in\mathbb{R}^{d^\prime}$ corresponding to the input $x^{(j)}$ is given by,
\begin{equation}\label{eqn:xout}
    x_{out}^{(j)}=\sum_{s\in[k]}f_s(x^{(j)}) \ \text{ where, }
    f_s(x^{(j)})=
    \begin{cases}
        W_2^{(s)}\sigma\left(W_1^{(s)}x^{(j)}\right)G_j^{(s)} & \text{if } G_j^{(s)}>0\\
        \vec{0}_{d^\prime} & \text{if } G_j^{(s)}=0
    \end{cases}
\end{equation}
Here $f_s(x^{(j)})\in\mathbb{R}^{d^\prime}$ denotes  the contribution of the $s$-th expert.   $W_1^{(s)}\in\mathbb{R}^{m\times d}$ and $W_2^{(s)}\in\mathbb{R}^{d^\prime\times m}$ represent the weights of the first and second layer, respectively. 
The activation function $\sigma(\cdot)$ is applied element-wise to the hidden layer output. $\vec{0}_{d^\prime}$ denotes the zero vector in $\mathbb{R}^{d^\prime}$.

\textbf{The Gating Network.} For each token  $x^{(j)}$ ($j \in [n]$) and each expert $s$ ($s\in [k]$), the gating network computes a \textit{gating value} $G_j^{(s)} \in [0, 1]$. The network includes $k$ trainable router vectors $w_s \in \mathbb{R}^d$ ($s \in [k]$), one per expert. 

In \textit{token-choice routing}\citep{fedus2022switch}, given an input token $x^{(j)} \in \mathbb{R}^d$, the routing network computes a set of \textit{routing scores} $\{ \langle w_s, x^{(j)} \rangle \}_{s=1}^k$ for all $k$ experts. The top-$l$ experts with the highest scores (where $l \ll k $) are selected, and their corresponding \textit{gating values} are computed via a softmax over the top-$l$ scores, while the remaining experts receive a gating value of zero.
In contrast, in \textit{expert-choice routing}~\citep{zhou2022mixtureofexperts}, each expert $s$ computes routing scores $\{ \langle w_s, x^{(j)} \rangle \}_{j=1}^n$ over all $n$ tokens and selects the top-$l$ tokens with the highest scores. The gating values for the selected tokens are computed via a softmax over the top-$l$ scores, and the rest are assigned zero.

\subsection{The post-training weight quantization (PTWQ)}

 PTWQ methods compress neural network weights by representing them as low-bit fixed-point integers. During inference, these quantized weights are dequantized back to floating-point values. Given a weight matrix $W\in\mathbb{R}^{in\times out}$, and a target bit-width $b$, the de-quantized weights $\hat{W}$ are computed as,
\begin{equation}\label{q_eqn}
    \hat{W}=\Delta\cdot\left(\left\lfloor W/\Delta+z\cdot1^{\textrm{in}\times \textrm{out}}\right\rceil-z\cdot1^{\textrm{in}\times \textrm{out}}\right)
\end{equation}
where $\Delta:=(\max(W)-\min(W))/(2^b-1)$ is the quantization bin size, and $z:=-\lfloor\min(W)/\Delta\rceil-2^{b-1}$ is the zero-point of the quantized weights. $1^{\textrm{in}\times \textrm{out}}$ is an all-ones matrix with dimension $(\textrm{in}\times\textrm{out})$, and $\lfloor\cdot\rceil$ is the element-wise rounding  to the nearest integer. 

Most PTWQ methods select $\Delta$ and $z$ by minimizing either the loss in activation on calibration data (e.g., GPTQ \citep{frantar2023optq}, 
AWQ \citep{lin2024awq}
). 
An alternative line of work aims to minimize the reconstruction error directly, without calibration data (e.g., HQQ \citep{badri2023hqq}). 

\subsection{The proposed  mixed-precision quantization method }
\label{sec:method}

Our quantization approach proceeds in two main steps:  
(1) \textbf{ordering the experts} based on \textit{the change in the router’s norm} and \textit{the maximum intra-neuron variance}, defined in Defs.~\ref{def:router} and \ref{def:variance} and  
(2) \textbf{assigning bit-widths} (two-level or three-level quantization) according to the ordering.  

\textbf{STEP 1: Expert Ordering}. 
We first introduce two metrics used in ordering the experts.

\begin{definition}\label{def:router}
    For expert $s \in [k]$, let $w_s^{(0)}$ and $w_s^{(T)}$ be its router vectors in the initial and trained models, respectively. We define \textbf{change in router’s $l_2$ norm} as follows, 
\begin{equation}
\Lambda_s^{(T)} := \|w_s^{(T)}\| - \|w_s^{(0)}\| .
\end{equation}
\end{definition}

 \begin{definition}\label{def:variance}
Let $W_1^{(s,T)}$ be the first-layer weight matrix of expert $s$ in the trained model, containing $m$ neurons with weights in  $\R^d$. The \textbf{maximum intra-neuron variance} evaluates the maximum variance of weight entries in each neuron, i.e., 
\begin{equation}
\mathrm{MaxVar}_s^{(T)} := \max_{r \in [m]} \frac{1}{d} \sum_{i=1}^d \Big(W_1^{(s,T)}[r,i] - \tfrac{1}{d}\sum_{i=1}^d W_1^{(s,T)}[r,i]\Big)^2 .
\end{equation}
 \end{definition}
      where $W_1^{(s,T)}[r,i]$ denotes the $i$-th element of the $r$-th row of the matrix $W_1^{(s,T)}$.

We first rank experts by the change of router's $l_2$ norm $\Lambda_s^{(T)}$, where those with smaller $\Lambda_s^{(T)}$ are placed higher, which later correspond to higher precision. This ordering is theoretically justified in  Section~\ref{theory}, and the intuition is that the model performance is more sensitive to the quantization of those experts $s$ with smaller $\Lambda_s^{(T)}$.   

Then, to assign the experts in higher precision that inject high quantization noise to the model, we adjust the ordering by promoting  experts with  larger maximum intra-neuron variance 
to higher ranks. Specifically, 
if a lower-ranked expert $s$ has its $\mathrm{MaxVar}_s^{(T)}$ at least $\zeta$ ($\zeta>1$)\footnote{We use $\zeta=3$ in experiments, since the variance of any bounded distribution is at most three times that of a uniform distribution with the same range. This adjustment affects only 4–5\% of experts in experiments.} times greater than $\mathrm{MaxVar}_{s'}^{(T)}$ of an expert $s'$, where $s'$ ranks higher than $s$ in the ordering by router norm change,  we move $s$ to be  above  $s'$ in the adjusted ordering. This process is repeated until no further changes are needed. The intuition is that larger intra-neuron variance arises either from wider weight ranges or from more skewed weight concentrations, both of which induce higher quantization noise compared to experts with smaller intra-neuron variance under the same bit-width assignment.

\textbf{Special Case}: 
For pre-trained models where the initial router vectors $w_s^{(0)}$ are unavailable, 
we use the router’s $l_2$ norm itself as a surrogate for $\Lambda_s^{(T)}$, since initial weights are typically small-variance random initializations.   $\mathrm{MaxVar}_s^{(T)}$ is computed directly from the pre-trained model as well.  
 
\textbf{STEP 2: Bit Assignment}. 
Based on the obtained ordering, we can assign two-level or three-level quantization as follows.

\textbf{Two-level assignment.} Given   $b_h > b_l$ and target average bit-width $b_{\text{avg}}$, we quantize the top
$\kappa = \frac{b_{\text{avg}} - b_l}{b_h - b_l}$ 
fraction of experts to $b_h$ and the rest to $b_l$.  

\textbf{Three-level
assignment.} 
 With bit-widths $b_h > b_m > b_l$ and target $b_{\text{avg}}$, we assign higher-ranked experts to $b_h$, mid-ranked experts to $b_m$, and lower-ranked experts to $b_l$.  
In general, multiple assignment strategies are possible according to the ranking order while still achieving the same $b_{\text{avg}}$. We select the best strategy based on the intuition   to balance between maximizing the number of experts assigned to the highest precision and minimizing those assigned to the lowest precision, depending on the value of $b_{\text{avg}}$ relative to the three levels. 

Specifically, when $b_{\text{avg}}$ is in $(b_h - (b_h-b_l)/3, b_h)$, i.e., close to $b_h$,  we maximize the number of experts assigned to $b_h$.  
When $b_{\text{avg}}$ is in $[b_h - 2(b_h-b_l)/3,  b_h - (b_h-b_l)/3]$, we again maximize the number of experts in $b_h$, but subject to the constraint that the number in $b_l$ does not exceed those in $b_m$.  
When $b_{\text{avg}}$ is in $(b_l,\, b_h - 2(b_h-b_l)/3)$, we minimize the number of experts in  $b_l$.  

%% file: theory.tex
\section{
Generalization guarantees for the router-norm-based expert-wise mixed-precision quantization of MoE}\label{theory}

\subsection{Summary of theoretical insights}

Before formally presenting our theoretical setup and results, we first summarize the key theoretical insights. We consider a setting where an  MoE model is fine-tuned for a binary classification problem. Each input sequence contains a single task-relevant token that determines the label, while the remaining tokens are task-irrelevant. For each class, there are two distinct task-relevant tokens: one is more prevalent, appearing in a \((1 - \alpha)\) fraction of the data, while the other appears in an \(\alpha\) fraction (\(\alpha < \frac{1}{4}\)).
Although   based on the simplified setup, our theoretical insights are validated empirically on practical MoE models in different language tasks. Our major theoretical takeaways include:

 \textbf{1. Experts specialized in learning less-prevalent tokens undergo smaller changes in their router’s \(l_2\) norm than experts that learn more-prevalent tokens.} We show that different experts specialize in different task-relevant tokens. The routers associated with experts that exclusively learn the less-prevalent token exhibit a smaller \(l_2\) norm change after fine-tuning, compared to routers for experts that learn more-prevalent tokens. This observation suggests that the router norm change can serve as a useful indicator for distinguishing between these two types of experts.

     \textbf{2. Experts that learn less-prevalent tokens produce weaker activations, and the model's generalization performance is more sensitive to the quantization of these experts.} We prove that experts are primarily activated by the task-relevant tokens they learn.  Experts that learn less-prevalent tokens generate significantly weaker activations than experts that learn more-prevalent tokens. Therefore, the model performance is more sensitive to the quantization of the former ones.

    \textbf{3. Quantizing experts with smaller router \(\ell_2\) norm changes to higher precision, while rest of the experts in lower precision achieves the same generalization as full-precision quantization.}   Because the model's generalization is more sensitive to the quantization of the experts learning less-prevalent tokens, and as they can be identified via router’s \(\ell_2\) norm change, quantizing them to $b_h$ allows safe reduction of other experts' precision by \(\log_2 \left(\frac{1-\alpha}{\alpha}\right)\) bits without hurting generalization.

\subsection{Data model and assumptions}\label{data_model}


\textbf{The MoE model and binary classification task.} 
 We consider a neural network that contains a single MoE block,   fine-tuned on a binary supervised classification task, where each input sequence $x$ is labeled with $y\in\{+1,-1\}$. The MoE block generates one-dimensional output tokens, i.e., $d^\prime=1$ for $x_{out}^{(j)}$ in (\ref{eqn:xout}), and the model output is computed by  aggregating all the output tokens, i.e.,
 for an input sequence $x$, 
   the model's output is
\begin{equation}\label{eq_analyzed_model}
 f(x):=\sum_{j\in[n]}x_{out}^{(j)}=\sum_{j\in[n]}\sum_{s\in[k]}f_s(x^{(j)})
\end{equation}
Let $f^{(T)}(\cdot)$ denote   the model after $T$ steps of finetuning.   $x$ is correctly classified if $yf^{(T)}(x)>0$. For each expert $s\in[k]$, the second layer weights are fixed\footnote{Fixing the output layer for analytical convenience is standard in the literature and has been adopted in prior works \citep{li2018learning,brutzkus2018sgd,arora2019fine,zhang2023joint,chowdhury2023patch}} during training and defined as $W_2^{(s)}:=a^{(s)}\cdot1^{1\times m}$, where $a^{(s)}\in\{+1,-1\}$. We refer to  each expert as positively  connected to the final output if $a^{(s)}=1$, and negatively connected if $a^{(s)}=-1$. 
Let $S_+, S_- \subset [k]$ denote the set of positively and negatively connected experts, respectively. 
The activation function $\sigma(\cdot)$ is rectified linear unit (ReLU).
 The routing mechanism follows expert-choice routing, where each expert selects $l$ tokens, satisfying   $l\le L$ for some constant $L$.  

Although our theoretical analysis is based on a two-layer MoE model, it already captures the key components, including routers, experts, and nonlinear activation, and the learning problem is already highly non-convex. In fact, the two-layer network model is SOTA for theoretical analysis of training dynamics and generalization in MoEs \citep{chen2022towards,chowdhury2023patch}, and   in general deep neural networks \citep{li2023a,zhang2023joint,allen-zhu2023towards,bu2024provably}.

\textbf{Two-precision-level quantization}. To simplify the theoretical analysis,
we consider two precision levels  and only the first layer weights of each expert $s$, i.e., $W_1^{(s,T)}$, are quantized. 
The top $\kappa$-fraction of experts in $S_+$ and the top $\kappa$-fraction in $S_-$, each with the  smallest  values of $\Lambda_s^{(T)}$, are quantized to the higher bit-width $b_h$, while the remaining experts in both sets are quantized to the lower bit-width $b_l$. The quantization is applied in a column-wise fashion: for each expert  $s$ and its corresponding bit-width, the bin size $\Delta$ is computed independently for each column of $W_1^{(s,T)}$. Without loss of generality, we assume the zero-point $ z = 0$.

\textbf{The data model}.
Let $\mathcal{P} \subset \mathbb{R}^d$ denote a set of orthonormal vectors with $|\mathcal{P}| \leq d = \Omega(L^8)$.   
Two vectors $o_1$ and $o_2$ in $\mathcal{P}$, and their negatives $-o_1$ and $-o_2$  
are called \textit{task relevant}, denoted by set $\mathcal{P}_r = \{ \pm o_1, \pm o_2\}$,  while all vectors in $\mathcal{P}\backslash\{o_1, o_2\}$ are \textit{task-irrelevant}.

Each sequence and label pair $(x,y)$ follows a distribution $\mathcal{D}$, where $x$ contains exactly one token from $\mathcal{P}_r$, 
which determines $y$: sequences containing $\pm o_1$ 
are labeled as class 1 (i.e., $y = +1$), and those containing $\pm o_2$ 
are labeled as class 2 (i.e., $y = -1$). The remaining tokens in $x$ are drawn independently from the task-irrelevant set $\mathcal{P}\backslash\{o_1, o_2\}$, each with probability $O(1/d)$.
With probability $\alpha$,   where the constant $\alpha$ is in $(0, 1/4)$, a sequence contains the \textbf{less prevalent} task-relevant tokens $o_1$ or $o_2$, and with probability $1-\alpha$ , it contains the \textbf{more prevalent} tokens $-o_1$ or $-o_2$. 

  Our data model is similar to \citet{bu2024provably} except that our task-irrelevant vectors are drawn from an orthonormal set instead of a Gaussian distribution. The assumption of orthonormal task-irrelevant vectors have been widely deployed in theoretical analysis \citep{brutzkus2021optimization,shi2022a,allen2022feature,chen2022towards,zhang2023joint,li2023a}. 

We next introduce some useful notations in presenting our theoretical results. 
After $t$ training iterations, we define the activation of expert $s$ in response to a task-relevant token as follows:

\begin{definition}
For expert $s \in [k]$, \textit{the activation of expert $s$ by a task-relevant vector} is defined as
\begin{equation}
\sigma_v^{(s,t)} := \vec{1}^{\top}\sigma(W_1^{(s,t)\top} v), \quad v \in \mathcal{P}_r,
\end{equation}
 where $W_1^{(s,t)}$ is the first-layer weights of expert $s$ at iteration $t$, and $\vec{1}$ is an all-ones vector in $\mathbb{R}^m$.
\end{definition}
Intuitively, a lower activation of an expert for a task-relevant vector leads to a smaller gap between the output of this expert and 
the output of another expert not selecting the token, leading to weaker predictions against quantization noise.

The same as \citet{chowdhury2024provably}, we  define an expert’s \textit{proficiency measure} to quantify the router's ability to select task-relevant tokens from a sequence. Specifically,  

\begin{definition}
    The \textit{proficiency} of expert $s$  after $t$ training iterations in selecting a task-relevant vector $v$ is measured by the probability that it assigns a gating value of at least $1/l$ to token $v$, i.e., 
\begin{equation}
p_v^{(s,t)} := \mathbb{P}[G_j^{(s,t)} \ge 1/l \mid x^{(j)} = v \text{ for some } j \in [n]],  \quad  v \in \mathcal{P}_r
\end{equation}
\end{definition}

\textbf{Alignment of the pretrained  model}. In a pretrained model $(t=0)$, we say the router for expert $s$ is \textit{aligned} to task-relevant vector $v$ ($v\in \mathcal{P}_r$) if $p_v^{(s,0)} = \Omega(1)$, i.e.,  it selects $v$ with a nontrivial gating value for a constant fraction of samples containing $v$.   We assume that in the pretrained model, routers of the experts in $S_+$ are  aligned to either $o_1$ or $-o_1$, and routers of the experts in $S_-$ are aligned to either $o_2$ or $-o_2$.  This assumption reflect the common intuition that a pretrained MoE model learns to specialize experts for different subtasks or feature types, 

Let $S_{v}$ ($v \in \mathcal{P}_r$) denote the set of experts whose routers  are aligned with $v$ in the pretrained model. 
Let $\gamma =\max (\frac{|S_{o_1}|}{|S_+|}, \frac{|S_{o_2}|}{|S_-|})$ denote the maximum fraction of routers that are aligned to less-prevalent task-relevant vectors in $S_+$ and $S_-$.

\subsection{Main theoretical results}

\begin{lemma}\label{lm1}
    Suppose the pretrained model is fine-tuned for 
     $T = \Theta(l^2\sqrt{\log l}/\alpha)$ 
  iterations,  the returned  $f^{(T)}$ has the following properties, 

   (i) the routers' alignment to task-relevant vectors are enhanced during training, specifically, 
   \begin{equation}\label{eqn:alignment}
   p_{v}^{(s,T)} = 1, \ p_{-v}^{(s,T)} = 0, \quad  \forall s\in S_v, \forall v \in \mathcal{P}_r=\{\pm o_1, \pm o_2 \}
   \end{equation}

 (ii) the $l_2$ norm change of the routers aligned with the more prevalent $-o_1$ and $-o_2$ are higher than those aligned with the less prevalent $o_1$ and $o_2$, specifically,  
      \begin{equation}\label{eqn:norm}
                  \Lambda_{s'}^{(T)} > \Lambda_{s}^{(T)},  \quad  \forall s \in S_{o_i}, \forall s' \in S_{-o_i}, i=1, 2
                \end{equation}
and (iii) the expert activation by  $-o_1$ and $-o_2$ are higher than that by  $o_1$ and $o_2$, specifically,   
        \begin{equation}\label{eqn:activation}
  \sigma_{o_i}^{(s,T)} = \Omega(ml\sqrt{\log l}), \quad   
  \sigma_{-o_i}^{(s',T)} = \Omega\left(\frac{(1-\alpha)}{\alpha} ml\sqrt{\log l}\right),  \quad 
\frac{\sigma_{-o_i}^{(s',T)}}{\sigma_{o_i}^{(s,T)}} \geq \frac{1 - 2\alpha}{2\alpha} 
\end{equation}

\end{lemma}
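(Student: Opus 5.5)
We track the training dynamics of SGD (or GD on the population/batch hinge or logistic loss) on the two-layer expert-choice MoE of (\ref{eq_analyzed_model}), following the feature-learning style of \citet{chowdhury2023patch,chowdhury2024provably}. Because $\mathcal{P}$ is orthonormal, every router $w_s^{(t)}$ and every hidden neuron $W_1^{(s,t)}[r,:]$ decomposes into coordinates along $\pm o_1,\pm o_2$ and along the task-irrelevant vectors. The plan is to write the gradient updates coordinatewise and show three things.

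\begin{itemize}
\item[(a)] The component of $W_1^{(s,t)}[r,:]$ along the vector $v$ that expert $s$ is aligned to grows at a rate proportional to $\mathbb{P}[v\in x]\cdot p_v^{(s,t)}/l$. Here $\mathbb{P}[v\in x]$ is $\alpha/2$ for $v=o_i$ and $(1-\alpha)/2$ for $v=-o_i$, and $1/l$ is the gating value.
\item[(b)] The router's component along $v$ grows proportionally to the same probability times the expert activation $\sigma_v^{(s,t)}$.
\item[(c)] All components along task-irrelevant vectors stay small, of order $O(1/d)$ per step with random signs. This uses $d=\Omega(L^8)$ and the $O(1/d)$ appearance probability.
\end{itemize}

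For (i), I would first show by induction that for $s\in S_v$ the router score $\langle w_s^{(t)},v\rangle$ is monotonically increasing. The argument is that a correctly labeled token $v$ selected by $s$ contributes a positive gradient, since $a^{(s)}$ matches $y$ for $v$. Meanwhile $\langle w_s^{(t)},-v\rangle=-\langle w_s^{(t)},v\rangle$ decreases. Once $\langle w_s,v\rangle$ exceeds the scores of all irrelevant tokens by a margin, $s$ selects $v$ whenever present, giving $p_v^{(s,T)}=1$. With $l\le L$ slots, $-v$ then falls below the top-$l$ cutoff, giving $p_{-v}^{(s,T)}=0$. The time needed is governed by the slowest (least prevalent) case, $v=o_i$, which has probability $\alpha$. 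This fixes $T=\Theta(l^2\sqrt{\log l}/\alpha)$. One factor of $l$ comes from the $1/l$ gating, another from the selection margin over $l$ competitors, and $\sqrt{\log l}$ from the softmax margin needed for gating at least $1/l$.

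For (iii), summing the neuron-growth rate from (a) over $T$ iterations and $m$ neurons gives $\sigma_{o_i}^{(s,T)}\gtrsim m\cdot(\alpha/2)\cdot T/l=\Omega(ml\sqrt{\log l})$. The same sum gives $\sigma_{-o_i}^{(s',T)}\gtrsim m\cdot((1-\alpha)/2)\cdot T/l$, which is larger by the factor $(1-\alpha)/\alpha$. For the ratio bound, I would compare the two trajectories directly, not just their lower bounds. I would take an upper bound on the $o_i$-activation that includes initialization and the early phase where $p^{(s,t)}<1$, and a lower bound on the $-o_i$-activation that subtracts the loss from samples already correctly classified. Since the loss gradient vanishes or shrinks on confidently classified samples, this subtraction costs an $\alpha$-fraction, and it produces the constant $(1-2\alpha)/(2\alpha)$.

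For (ii), note that $\|w_s^{(T)}\|-\|w_s^{(0)}\|$ is dominated by growth along $v$, since the irrelevant coordinates contribute $o(1)$ by (c). The router gradient along $v$ scales as $\mathbb{P}[v]\cdot\sigma_v^{(s,t)}$, and both factors are larger for $-o_i$. Integrating therefore gives a router norm change for $S_{-o_i}$ that exceeds that of $S_{o_i}$ by roughly a factor $((1-\alpha)/\alpha)^2>1$, which yields (\ref{eqn:norm}). This requires $\|w_s^{(0)}\|$ to be comparable across experts, which the small-variance pretrained alignment assumption provides.

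The main obstacle is controlling the coupled, discontinuous top-$l$ selection dynamics: showing that irrelevant tokens never overtake $v$ in the early phase and that softmax gating stays bounded below. I would handle this as in \citet{chowdhury2024provably}, using a concentration bound on irrelevant-token scores, each moved by $O(1/d)$ per step, together with a potential argument on the margin $\langle w_s,v\rangle-\max_{u\notin\mathcal{P}_r}\langle w_s,u\rangle$.
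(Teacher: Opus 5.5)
Your argument for (ii) has a genuine gap. By (\ref{eq_r_c_v}), the router gradient along $v$ is $-ya^{(s)}G_v\sum_{j}G_j(\sigma_v^{(s,t)}-\sigma_j^{(s,t)})$, which behaves like $G_v(1-G_v)\,\sigma_v^{(s,t)}$. Your step (b) drops the softmax-derivative factor $G_v(1-G_v)$, and that factor is what governs (ii).

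Once a router's margin $\langle w_s^{(t)},v-q\rangle$ over all irrelevant $q$ exceeds about $3\log l$, we have $G_v(1-G_v)\le 1/l^2$ and the router nearly stops moving. The $-o_i$ routers reach this regime first. From then on their gradients can be \emph{smaller} than those of the still-unsaturated $o_i$ routers. So comparing ``probability times activation'' pointwise and integrating does not yield $\Lambda_{s'}^{(T)}>\Lambda_s^{(T)}$. The multiplicative gap $((1-\alpha)/\alpha)^2$ you predict also does not survive saturation: margins then grow only logarithmically, so both norm changes are of order $\log l$.

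The paper argues differently. It first uses saturation to show $\Lambda_s^{(T)}<4\log l$ for $s\in S_{o_i}$. It then argues by contradiction. If $\Lambda_{s'}^{(T)}\le\Lambda_s^{(T)}$, then $\|w_{s'}^{(T)}\|<4.5\log l$, so the $-o_i$ router never saturates and keeps $G(1-G)\ge 1/(3l^4)$ for all $t\le T$. Its prevalence $1-\alpha$ then pushes $\langle w_{s'}^{(t)},-o_i\rangle$ past $\frac32\log l$ within $T''\le\frac{\alpha}{1-\alpha}T$ steps and further afterwards, contradicting the bound. The control of initial norms used here is $\|w_s^{(0)}\|\le C_1\le\frac12\log l$. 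It does not come from a small-variance assumption, since the $w_s^{(0)}$ in this lemma are aligned pretrained routers.

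Two further steps are missing.

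\emph{The ratio in (iii).} The mechanism you invoke is not available. In the analyzed setting, gradients are taken of the linearized loss $1-yf$, so nothing vanishes on correctly classified samples. A loss whose gradient did vanish would cap $\sigma_{-o_i}$ and erode the $(1-\alpha)/\alpha$ advantage rather than produce the constant. Your lower bound for $\sigma_{-o_i}$ uses gating $\ge 1/l$, so it is a factor $l$ too weak to compare with the per-neuron upper bound $C_2+\frac{\alpha}{2}\eta_eT$ for $o_i$, which uses gating $\le 1$. What is needed is that the $-o_i$ router enters the regime $G>1/2$ after $T''\le\frac{\alpha}{1-\alpha}T$ iterations. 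Each positively initialized neuron then gains at least $\frac{1-\alpha}{4}\eta_e$ per step for $T-T''\ge\frac{1-2\alpha}{1-\alpha}T$ steps, i.e.\ at least $\frac{1-2\alpha}{4}\eta_eT$. Dividing by $\frac{\alpha}{2}\eta_eT$ gives $\frac{1-2\alpha}{2\alpha}$.

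\emph{The early phase in (i).} Monotonicity of $\langle w_s^{(t)},v\rangle$ from $t=0$ does not follow from $a^{(s)}$ matching $y$. By (\ref{eq_r_c_v}), this gradient is driven by the weighted gaps $\sigma_v^{(s,t)}-\sigma_j^{(s,t)}$ to the other selected tokens, and their sign is uncontrolled for the pretrained experts. The paper first runs an expert-alignment phase (Lemma \ref{lm2}) with $\eta_r=O(\eta_eC_p/(ml^2C_2^2))$. This lets the router's initial margin $C_p$ absorb any adverse drift while $\sigma_v^{(s,t)}$ grows to $\Omega(mC_2)$. Only then does router--expert co-learning make the margin grow quadratically in $t$. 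Reaching a $\log l$ margin at that quadratic rate yields $T=\Theta(l^2\sqrt{\log l}/\alpha)$, using the $1/l$ gating in the expert update, $G(1-G)\ge 1/(4l)$ from Lemma \ref{lm3}, and $\eta_r\propto 1/(ml^2)$. Your factor-counting for $T$ does not reproduce this derivation.
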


Lemma \ref{lm1} summarizes key properties of the fine-tuned model that can be leveraged for expert-wise mixed-precision implementation.
First, (\ref{eqn:alignment}) shows that if the router of an expert $s$ is aligned with a task-relevant vector $v$ in the pretrained model, that is, $p_v^{(s,0)} = \Omega(1)$, then after fine-tuning, the alignment becomes stronger: $p_v^{(s,T)} = 1$. Moreover, the expert $s$ suppresses $-v$ by assigning zero or negligible gating value to the negative token $-v$, i.e., $p_{-v}^{(s,T)} = 0$. This implies that each expert becomes specialized in a single task-relevant vector after fine-tuning.

Second,  (\ref{eqn:norm}) shows that the change in the router’s $l_2$-norm is larger for experts aligned with more prevalent features, and smaller for those aligned with less prevalent ones. This property allows us to distinguish   two types of experts based on   their router norm changes.
Third, (\ref{eqn:activation}) demonstrates that experts aligned with less prevalent vectors produce weaker activations than those aligned with more prevalent ones,  resulting from the less frequent occurrence of these tokens in the data. The ratio of their activations is at least $(1 - 2\alpha)/(2\alpha)$. Thus, the model's generalization is expected to be more sensitive to the quantization of the experts aligning with less prevalent vector, and hence these experts need higher precision.
Lemma \ref{lm1}  is verified by synthetic data in Figs.~\ref{f_s_d_1} and \ref{f_s_d_2} in Appendix \ref{s_d}.

We next formally establish Theorem \ref{th1}
 that characterizes the generalization guarantee of the quantized model $f_Q^{(T)}$ after applying the two-level mixed-precision  quantization method  in Section \ref{sec:method} to the fine-tuned model $f^{(T)}$.  Let $\text{Var}_r^{(s,T)}$ denote the   variance of the $r$-th column of $W_1^{(s,T)}$.  

\begin{theorem}\label{th1}
    Suppose the number of fine-tuning iterations satisfies  $T = \Theta(l^2\sqrt{\log l}/\alpha)$, and $\max_{r\in[m]}\text{Var}_r^{(s,T)}=\Theta(1)$ for every expert $s$. If $\kappa \geq \gamma$, and the two quantization levels satisfy
    \begin{equation}\label{eqn:bh}
    b_h\ge\log_2(1+\Omega(d\sqrt{\log(kmd^2)/l^2\log l}))
    \end{equation}
 and 
    \begin{equation}\label{eqn:bl}
      b_l\ge\log_2(1+\frac{\alpha}{1-\alpha}\Omega(d\sqrt{\log(kmd^2)/l^2\log l})),
    \end{equation}
   then with high probability the quantized model has guaranteed generalization, i.e.,  
   \begin{equation}
         \mathbb{P}[\forall(x,y)\sim\mathcal{D}:yf_{Q}^{(T)}(x)>0]=1.
            \end{equation}
\end{theorem}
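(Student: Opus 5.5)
We treat quantization as a bounded perturbation of the fine-tuned model $f^{(T)}$ from Lemma~\ref{lm1}. We then show that the margin of every sample survives this perturbation, provided each expert's quantization noise is small compared with its own activation on the relevant token it specializes in.

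\textbf{Step 1: margin of the unquantized model.} Fix $(x,y)$ with, say, $y=+1$, so $x$ contains $o_1$ or $-o_1$. By Lemma~\ref{lm1}(i), every expert in $S_{o_1}$ (resp.\ $S_{-o_1}$) selects that token with gating value at least $1/l$ and ignores its negative. Lemma~\ref{lm1}(iii) then gives a positive contribution of order $\sigma_{o_1}^{(s,T)}/l=\Omega(m\sqrt{\log l})$ in the less-prevalent case, and $\Omega(\frac{1-\alpha}{\alpha}m\sqrt{\log l})$ in the more-prevalent case. The task-irrelevant tokens, and the class-2 experts in $S_-$ acting on $x$, contribute only lower-order terms. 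This is because the fine-tuning dynamics behind Lemma~\ref{lm1} keep the weights along irrelevant directions near their $O(1)$-variance scale, and with the orthonormal data the first-layer weights along $\pm o_2$ give no ReLU activation on $\pm o_1$. We would extract from the proof of Lemma~\ref{lm1} an explicit margin: $yf^{(T)}(x)\ge c\, m\sqrt{\log l}\cdot\min(\text{relevant activation scale})$, with the prevalence factor kept separately for the two token types.

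\textbf{Step 2: quantization noise per expert.} With column-wise quantization and $z=0$, each entry of $W_1^{(s,T)}$ is perturbed by at most $\Delta_r/2$, where $\Delta_r=\text{range}_r/(2^{b}-1)$. The assumption $\max_r\text{Var}_r^{(s,T)}=\Theta(1)$ is used to bound the ranges. Under the Gaussian-like concentration of the trained weights, a union bound over $k m d$ entries (and $d$ tokens) gives with high probability $\text{range}_r=O(\sqrt{\log(kmd^2)})$. Since tokens are orthonormal basis-like vectors and ReLU is $1$-Lipschitz, the perturbation of $f_s(x^{(j)})$ on one token is at most $m\cdot\Delta/2\cdot G$. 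Summing over the at most $l$ selected tokens per expert and over experts, the gating values (each at most $1$, summing to $1$ per expert) give a noise bound of order $m\sqrt{\log(kmd^2)}\,d/(2^b-1)$ per expert, after accounting for the normalization that produced the factor $d$ in \eqref{eqn:bh}. A further point is that the perturbation must not flip routing. Routers are not quantized, so gating values and token selection are unchanged, and this issue does not arise.

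\textbf{Step 3: matching bits to margins.} The condition $\kappa\ge\gamma$ together with Lemma~\ref{lm1}(ii) ensures that all experts in $S_{o_1}\cup S_{o_2}$ fall in the top $\kappa$-fraction by smallest $\Lambda_s^{(T)}$, so they receive $b_h$. Condition \eqref{eqn:bh} makes their noise $o(m\sqrt{\log l})$, which is below their margin. All other experts either belong to $S_{-o_i}$, whose margin is larger by the factor $(1-\alpha)/\alpha$ (by the ratio in \eqref{eqn:activation}), so \eqref{eqn:bl} suffices, or they were also given $b_h$. Combining both cases shows $yf_Q^{(T)}(x)>0$ for every $(x,y)$ in the support, which gives probability $1$.

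\textbf{Main obstacle.} The hardest part is Step 1–2 bookkeeping. A sample with a less-prevalent token is also processed by low-precision experts: those in $S_{-o_1}$, which see irrelevant tokens, and those in $S_-$. Their quantization noise must be shown to be small relative to the $\Omega(m\sqrt{\log l})$ margin, not the large one. To handle this, we would argue that on a token they do not specialize in, these experts have activation and gating supported only on irrelevant tokens. Their noise then scales with the low-precision $\Delta$ multiplied by the small, unlearned weight magnitudes, and this requires a careful use of the concentration of irrelevant-direction weights from the training analysis.
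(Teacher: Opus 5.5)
\textbf{Verdict: genuine gap.} Your outline is the paper's: the margin from Lemma~\ref{lm1}; $\kappa\ge\gamma$ with Lemma~\ref{lm1}(ii) to put all of $S_{o_1}\cup S_{o_2}$ at $b_h$; the activation ratio in Lemma~\ref{lm1}(iii) to allow a $\frac{1-\alpha}{\alpha}$-times coarser bin elsewhere; and routing left unchanged because routers are not quantized. But Step~2, which is where the bit-widths come from, is not a derivation.

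\textbf{The range bound and the factor $d$.} The hypothesis $\max_r\mathrm{Var}_r^{(s,T)}=\Theta(1)$ does not make the trained columns Gaussian, and it does not imply $\mathrm{range}_r=O(\sqrt{\log(kmd^2)})$. Under your coordinate-vector reading of the tokens, this bound is actually contradicted. By Lemma~\ref{lm1}(iii), a column of an expert in $S_{-o_i}$ has a learned coordinate of size $\Omega(\frac{1-\alpha}{\alpha}l\sqrt{\log l})$, while its other coordinates stay $O(1)$.

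What the variance hypothesis does give, deterministically, is the von Sz\H{o}kefalvi-Nagy inequality $\mathrm{Var}_r^{(s,T)}\ge\beta_r^2/(2d)$ for the range $\beta_r$ of column $r$. Hence $\beta_r=O(\sqrt d)$, which is exactly how the paper uses it.

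Your per-neuron error of $\Delta_r/2$ also presumes coordinate-vector tokens. For a general unit $v\in\mathcal{P}$, the projection $\langle w_r^{(s,T)}-w_r^{(s,T;Q)},v\rangle$ combines all $d$ rounding errors. The factor $d$ in (\ref{eqn:bh})--(\ref{eqn:bl}) is $\sqrt d$ from the range times $\sqrt d$ from this projection:
\begin{itemize}
\item The paper models the errors as independent uniforms on $[-\Delta_r/2,\Delta_r/2]$ with tail $\exp(-\Omega(l^2\log l)/(d\Delta_r^2))$.
\item Cauchy--Schwarz, $|\langle\cdot,v\rangle|\le\sqrt d\,\Delta_r/2$, gives the same factor $d$ deterministically.
\end{itemize}
The $\log(kmd^2)$ comes from a union bound over the $km$ columns at failure probability $d^{-2}$, not from a range bound. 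Inserting $d$ ``after accounting for the normalization'' does not show that the stated $b_h,b_l$ suffice.

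\textbf{Step~3 is off by a factor of $l$.} Take your own per-expert noise $md\sqrt{\log(kmd^2)}/(2^b-1)$. Condition~(\ref{eqn:bh}) only makes it $O(ml\sqrt{\log l})$, not $o(m\sqrt{\log l})$, because the bits are calibrated to the per-neuron learned activation $l\sqrt{\log l}$.

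For an expert in $S_{o_1}$ this is harmless. One gating value multiplies both the signal and the error on $o_1$, and its other tokens contribute non-negatively since $a^{(s)}=+1$. The problem is the error injected by the low-precision experts of $S_-$: their gating values sum to one over their selected tokens, so it is not damped by $1/l$.

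The per-expert margin therefore has to be $\Omega(ml\sqrt{\log l})$. That requires the bound $G_j^{(s,T)}>1/2$ from the full version of Lemma~\ref{lm1}(i). The bound $G\ge 1/l$ that you read off from $p_v^{(s,T)}=1$ is not enough.

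\textbf{The cross term.} Your proposed treatment of this term in the last paragraph also fails for general orthonormal tokens. Rounding error is additive, up to $\Delta_r/2$ per coordinate, with $\Delta_r$ fixed by the whole column. It is not proportional to the small unlearned weights.

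The paper instead uses independence of the errors across entries, columns and experts. This bounds $-\sum_{s\in S_-}f_{Q_s}^{(T)}(x)$ by $O(\sqrt{km}\,l\sqrt{\log l})$, which the $\Omega(\gamma kml\sqrt{\log l})$ margin dominates. With the deterministic Cauchy--Schwarz bound, the constant hidden in (\ref{eqn:bl}) absorbs the cross term instead. Either way, this works only once the margin is at scale $ml\sqrt{\log l}$ per expert.
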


\begin{remark}\label{rmk:bit}
    
 Theorem~\ref{th1}  states that if the maximum intra-neuron variance of all the experts are close to each other, i.e., $\forall s\in[k], \mathrm{MaxVar}_s^{(T)}=\Theta(1)$, sorting the experts in ascending order of their router norm change $\Lambda_s^{(T)}$, and quantizing the top $\kappa$-fraction ($\kappa \geq \gamma$) of experts in this sorted list to $b_h$ bits and the remaining experts to $b_l$ bits, where $b_h$ and $b_l$ satisfy conditions~(\ref{eqn:bh}) and (\ref{eqn:bl}), allows the quantized model to preserve the generalization of the full-precision model. 
 Each low-precision expert can use $\log_2\left(\frac{1-\alpha}{\alpha}\right)$ fewer bits than its high-precision counterpart. This is verified by synthetic data in Fig.~\ref{f_s_d_3} in Appendix \ref{s_d}. Note that all the experts aligned with less prevalent vectors are among the top $\kappa$-fraction (by Lemma \ref{lm1} (ii)) and exhibit smaller activation values (by Lemma \ref{lm1} (iii)). Therefore, they require higher precision. In contrast, the experts quantized to lower precision are those aligned with  more prevalent  vectors and have  larger activations, and hence can be quantized aggressively.
\end{remark}

%% file: experiments.tex
\section{Experimental results}\label{experiments} 
\subsection{Experiments on finetuned Switch-Transformer model}\label{sec_sw_ex}

Here, we present quantization results on Switch Transformer \citep{fedus2022switch} finetuned on CNN/Daily Mail (CNNDM) text summarization task \citep{see2017get}. All non-MoE weights are quantized to 8 bits. We apply HQQ \citep{badri2023hqq} for quantizing the model\footnote{We use eight V100 GPUs for fine-tuning and one NVIDIA A5000 GPU (48GB) for quantized inference.}. See section \ref{md_dtl} in appendix for more implementation details. 

\begin{wrapfigure}{r}{0.35\linewidth}
    \centering
    \vspace{-0.24
    in}
    \includegraphics[width=0.95\linewidth]{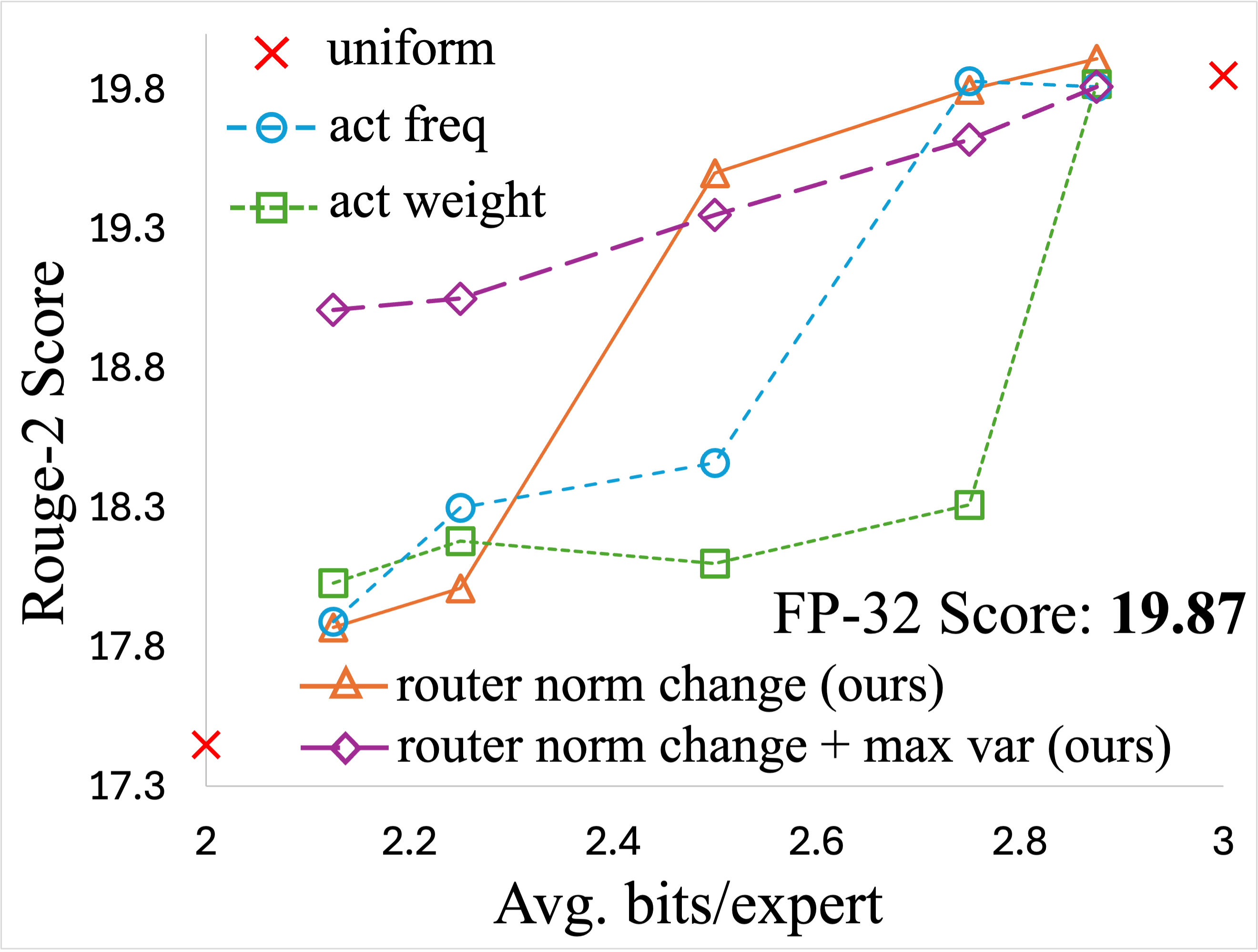}
    \caption{Expert-wise mixed-precision of
Switch Transformer on CNNDM. Bit choices: $2,3$}
    \label{f_m_sw_cnndm}
    \vspace{-0.1in}
\end{wrapfigure}

\textbf{Two-level expert-wise mixed-precision}: As shown in Figure \ref{f_m_sw_cnndm},   uniform 3-bit expert quantization nearly preserves generalization, while uniform 2-bit severely degrades the generalization.  We therefore use mixed-precision with two bit levels: 3 and 2. 

\textbf{Our method outperforms existing expert-wise mixed-precision methods}: 
We benchmark against two prior expert-wise mixed-precision strategies: (i) activation frequency (average tokens routed per expert) and (ii) activation weights (average gating weights on a calibration set) \citep{li2024examining,huang2025mixture}, where higher-frequency/weight experts are assigned higher precision. In contrast, our router-norm-change ordering itself preserves generalization down to 2.5 average bits/expert, outperforming both baselines. Furthermore, additional reordering by $\mathrm{MaxVar}$ affects only 3.7\% of experts, extending preservation to 2.125 bits.

\subsection{Experiments on pretrained Mixtral models}

 We quantize the pretrained Mixtral 8x7B (46.7B parameters) and Mixtral 8x22B (140.6B parameters) models \citep{jiang2024mixtral} using GPTQ \citep{frantar2023optq} to evaluate on eight zero-shot benchmark LLM tasks. All non-MoE parameters are assigned to 3 bits. See section \ref{md_dtl} in the appendix for details.

\textbf{Baselines}. 
We compare against the state-of-the-art expert-wise method, \textit{Pre-loading Mixed-precision Quantization} (PMQ) \citep{huang2025mixture}, which assigns bit-widths by minimizing Frobenius-norm output errors (per expert and bit level) weighted by activation frequency and gating scores. As PMQ outperforms the activation frequency and activation weights based methods, the comparison with these method are in Appendix (see Figure \ref{f_m_mixtral_1}, \ref{f_m_mixtral_l2} in Appendix). We also evaluate non-expert-wise approaches, including layer-wise (Hessian \citep{dong2020hawq}, BSP \citep{li2024examining}) and group-wise (Slim-LLM \citep{huang2025slimllm}) methods. Our method has advantages as follows.

\textbf{Three-level expert-wise mixed precision}. We consider three-level bit-assignment of (1,2,3) bits.  As shown in Table \ref{t_d_t}, the uniform 3-bit expert quantization almost maintains generalization, but uniform 2-bit quantization significantly degrades performance.  

\textbf{High accuracy with robust scaling}. 
Our method  surpasses PMQ above 2.0 average bits in terms of accuracy for Mixtral 8x7B\footnote{Compressing below 2.0 bits/expert is too aggressive, since the compressed model size ($\leq$13.1 GB) falls below the equivalent dense model (13.6 GB) that is trained with far more data and has better generalization.}. Extending to Mixtral-8x22B (140.6B parameters), our method again outperforms PMQ, demonstrating robustness to model scale. It also outperforms non-expert-wise methods (e.g., Hessian (layer-wise) \citep{dong2020hawq}, BSP (layer-wise) \citep{li2024examining},
\begin{wrapfigure}{r}{0.30\linewidth} 
    \centering
    \includegraphics[width=\linewidth,height=0.70\linewidth]{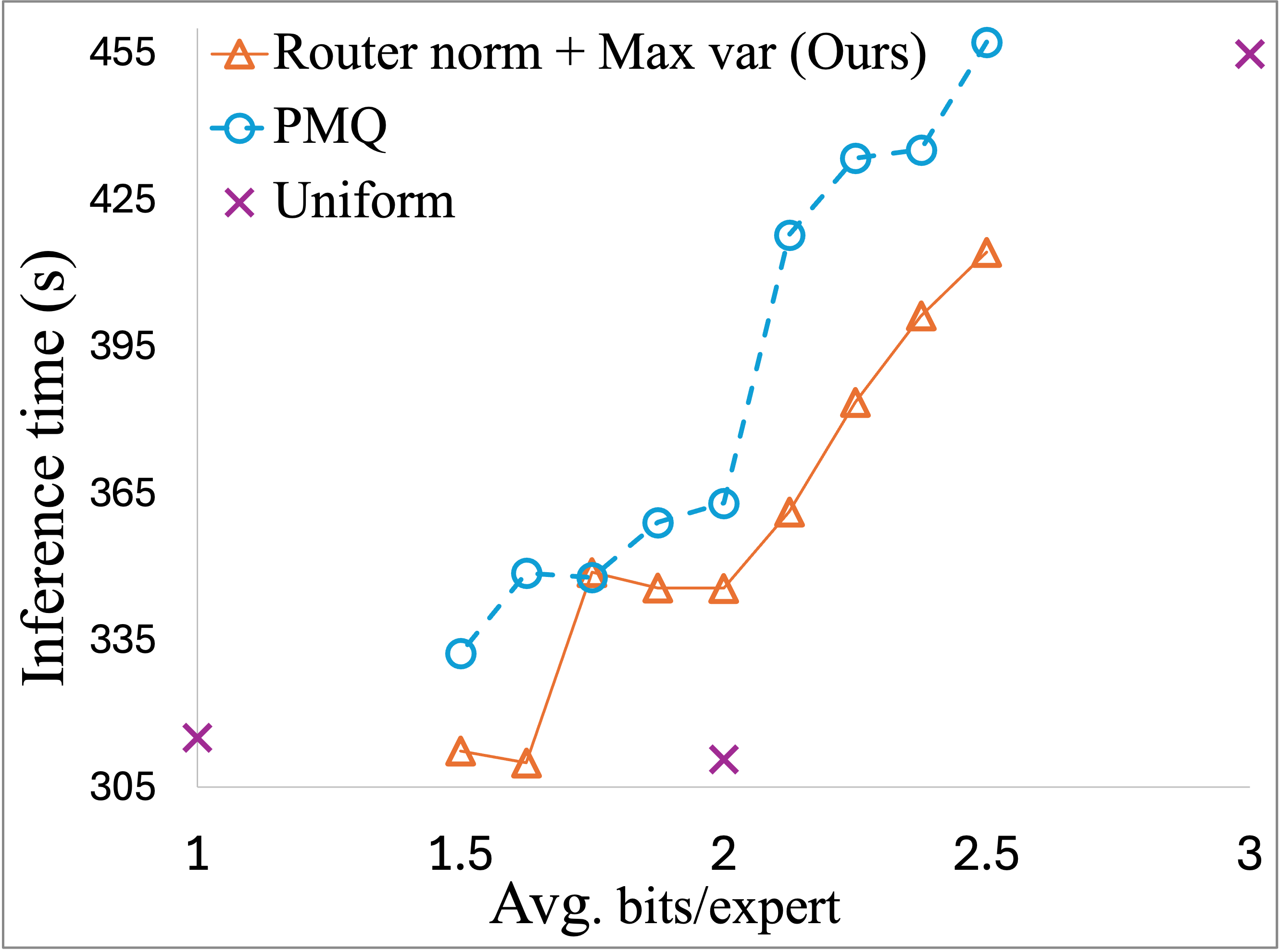}
    \caption{Inference time of different expert-wise methods.}
    \label{f_time}
\end{wrapfigure}
and Slim-LLM (group-wise) \citep{huang2025slimllm}) by large margins. 

\textbf{Low inference cost}. Figure~\ref{f_time} shows inference time on Wikitext2 \citep{merity2016pointer}. For the same average bits/expert, our method is faster than PMQ because PMQ assigns higher precision to frequently activated experts, whereas our method allocates higher precision to less frequent experts, reducing computation.

\textbf{Negligible assignment overhead}.
Unlike PMQ, which requires evaluating all experts across bit levels on a calibration set (e.g., 110 GB GPU memory and 2227s for Mixtral-8x7B; 350 GB and 6000s for Mixtral-8x22B), our method only sorts experts by router norm with minor reordering. This requires no GPU and  negligible computation, enabling scalable compression of large MoE models.

\begin{table}[htbp]
\centering
\caption{Task-wise accuracy (\%) of different methods on the 8 benchmark LLM tasks.}
\label{t_d_t}
\begin{adjustbox}{max width=\textwidth}
\begin{tabular}{l|l|l|l|llllllll|l} 
\toprule
\textbf{Model} & \textbf{Method} & \textbf{\shortstack{Avg.\\ bits/exp.}} & \textbf{\shortstack{Memory\\ (GB)}} 
& \textbf{PIQA} & \textbf{ARC-e} & \textbf{ARC-c} & \textbf{BoolQ} & \textbf{HellaS.} & \textbf{Wino.} & \textbf{MathQA} & \textbf{MMLU} & \textbf{Avg.} \\
\midrule
\multirow{24}{*}{\shortstack{Mixtral\\ 8x7B}} 
    & Full-precision & 16 (FP) & 96.8 & 83.68 & 83.50 & 59.64 & 85.05 & 83.99 & 76.4 & 41.61 & 67.85 & 72.72 \\
\cmidrule(lr){2-13} 
    & \multirow{2}{*}{Uniform} & 3 & 19.3 & 82.32 & 80.05 & 57.42 & 86.09 & 81.51 & 75.14 & 39.43 & 64.84 & 70.85 \\
    &  & 2 & 13.1 & 76.44 & 67.68 & 43.60 & 72.51 & 72.93 & 65.27 & 28.58 & 42.79 & 58.73 \\
\cmidrule(lr){2-13}
    & \multirow{9}{*}{\shortstack{Router norm\\+ Max var\\ (\textcolor{blue}{\textbf{Ours}})}} & 2.75 & 17.7 & 81.83 & \bf{80.47} & 56.31 & \bf{85.57} & 81.05 & 74.98 & 38.29 & 61.55 & \bf{70.01} \\
    &  & 2.625 & 16.9 & 81.45 & \bf{78.62} & \bf{54.86} & \bf{85.60} & \bf{80.57} & \bf{74.66} & 36.75 & \bf{59.78} & \bf{69.04} \\
    &  & 2.5 & 16.1 & \bf{80.79} & \bf{78.41} & \bf{54.44} & \bf{85.14} & 79.36 & 74.35 & 36.28 & \bf{58.23} & \bf{68.38} \\
    &  & 2.375 & 15.3 & 80.20 & \bf{75.38} & \bf{51.19} & \bf{84.92} & \bf{78.69} & 73.80 & 33.47 & \bf{56.07} & \bf{66.72} \\
    &  & 2.25 & 14.5 & \bf{80.41} & \bf{72.90} & \bf{50.09} & \bf{84.04} & 77.46 & 73.95 & 31.96 & \bf{55.53} & \bf{65.79}\\
    &  & 2.125 & 13.8 & 78.94 & 74.45 & \bf{51.02} & \bf{80.12} & 76.56 & 70.17 & 31.29 & \bf{51.56} & \bf{64.26} \\
    &  & 2.0 & 13.1 & \bf{77.26}   & 71.17   & \bf{46.84}   & \bf{80.61}   & 74.17   & 69.93   & 30.18   & \bf{50.34}   & 62.56 \\
    &  & 1.75 & 11.7 & 75.03   & \bf{69.53}   & 42.92   & 73.64   & 70.03   & 68.35   & 27.04   & \bf{44.82}   & 58.95   \\
\cmidrule(lr){2-13}
    & \multirow{9}{*}{\shortstack{PMQ}} & 2.75 & 17.7 & \bf{82.05} & 78.87 & \bf{56.48} & 84.80 & \bf{81.15} & \bf{75.30} & \bf{38.39} & \bf{61.79} & 69.85\\
    &  & 2.625 & 16.9 & \bf{81.56} & 78.41 & 52.99 & 83.67 & 80.04 & 74.66 & \bf{38.26} & 59.76 & 68.67 \\
    &  & 2.5 & 16.1 & 80.63 & 76.94 & 53.33 & 83.15 & \bf{80.02} & \bf{74.98} & \bf{37.15} & 54.05 & 67.53 \\
    &  & 2.375 & 15.3 & \bf{80.47} & 73.32 & 50.00 & 81.93 & 78.54 & \bf{74.66} & \bf{35.18} & 53.34 & 65.93 \\
    &  & 2.25 & 14.5 & 80.14 & 72.14 & 49.32 & 83.15 & \bf{77.62} & \bf{74.19} & \bf{33.70} & 51.00 & 65.16 \\
    &  & 2.125 & 13.8 & \bf{79.00} & \bf{75.51} & 49.91 & 72.26 & \bf{76.76} & \bf{72.30} & \bf{34.07} & 50.53 & 63.79 \\
    &  & 2.0 & 13.1 & 76.93 & \bf{71.93} & 46.59 & 78.65 & \bf{74.88} & \bf{73.24} & \bf{31.83} & 48.60 & \bf{62.83} \\
    &  & 1.75 & 11.7 & \bf{76.66} & 69.19 & \bf{44.28} & \bf{79.63} & \bf{70.85} 
    & \bf{71.19}  & \bf{29.88} & 42.52 & \bf{60.53} \\
\cmidrule(lr){2-13}
    & \multirow{3}{*}{\shortstack{Hessian}} & 2.5 & 17.0 & 80.21 & 76.38 & 51.20 & 81.11 & 78.05 & 72.97 & 35.27 & 56.21 & 67.18 \\
    &  & 2.25 & 15.3 & 79.21 & 72.41 & 46.70 & 79.15 & 76.38 & 71.25 & 31.97 & 50.60 & 63.47 \\
    &  & 2.0 & 13.6 & 75.32 & 67.26 & 45.01 & 70.29 & 71.90 & 69.11 & 31.07 & 40.85 & 58.85 \\
\cmidrule(lr){2-13}
    & \multirow{1}{*}{\shortstack{BSP}} & 2.5 & 17.0 & 68.23 & 54.97 & 28.38 & 68.16 & 55.61 & 62.19 & 24.07 & 27.74 & 49.07 \\
\cmidrule(lr){2-13}
    & \multirow{1}{*}{\shortstack{Slim-LLM}} & 2.0 & 13.6 & 61.70 & 49.07 & 28.24 & 66.18 & 44.10 & 57.54 & 23.62 & 25.43 & 44.49 \\
\midrule 
\multirow{11}{*}{\shortstack{Mixtral\\ 8x22B}} 
    & Full-precision & 16 (FP) & 281.2 & 85.12 & 84.01 & 60.12 & 86.23 & 84.50 & 77.40 & 42.10 & 68.20 & 76.31 \\
\cmidrule(lr){2-13}
    & \multirow{2}{*}{Uniform} & 3 & 57.5 & 81.45 & 76.68 & 53.07 & 78.53 & 74.23 & 68.19 & 36.21 & 55.46 & 65.48 \\
    &  & 2 & 38.6 & 55.98 & 31.31 & 22.78 & 57.92 & 29.23 & 50.12 & 21.64 & 23.24 & 36.53 \\
\cmidrule(lr){2-13}
    & \multirow{4}{*}{
    \shortstack{Router norm\\ +Max var\\ (\textcolor{blue}{\textbf{Ours}})}} & 2.5 & 46.7 & \bf{80.14} & \bf{71.25} & \bf{47.27} & 73.49 & 65.11 & 64.40 & \bf{30.62} & 48.16 & 60.10 \\
    &  & 2.25 & 43.0 & \bf{79.27} & \bf{69.61} & \bf{46.25} & \bf{72.23} & \bf{64.75} & \bf{65.43} & \bf{29.45} & \bf{46.33} & \bf{59.17} \\
    &  & 2.0 & 38.6 & \bf{78.56} & 64.18 & \bf{41.30} & 68.26 & \bf{61.91} & \bf{61.25} & \bf{27.14} & \bf{40.09} & \bf{55.34} \\
    &  & 1.75 & 35.2 & \bf{75.14} & \bf{63.22} & \bf{37.12} & \bf{65.96} & \bf{52.95} & \bf{59.59} & \bf{25.19} & \bf{32.36} & \bf{51.44} \\
\cmidrule(lr){2-13}
    & \multirow{4}{*}{PMQ} & 2.5 & 46.7 & 79.49 & 70.62 & 46.84 & \bf{74.28} & \bf{68.64} & \bf{65.35} & 29.88 & \bf{50.40} & \bf{60.69} \\
    &  & 2.25 & 43.0 & 78.78 & 69.15 & 42.41 & 55.14 & 62.29 & 61.96 & 28.91 & 44.16 & 55.35 \\
    &  & 2.0 & 38.6 & 76.55 & \bf{66.16} & 39.85 & \bf{69.48} & 60.11 & 59.83 & 27.00 & 39.43 & 54.80 \\
    &  & 1.75 & 35.2 & 72.20 & 56.90 & 32.59 & 59.33 & 49.43 & 57.38 & 24.82	& 30.30 & 47.87 \\
\bottomrule
\end{tabular}
\end{adjustbox}
\end{table}

\subsection{Ablation study}

We determine the importance of the two stages of the experts' ranking: the \textit{router norm} based ranking and the \textit{maximum intra-neuron variance} (i.e., $\mathrm{MaxVar}$) based reordering by providing an ablation study among (i) only $\mathrm{MaxVar}$ based ranking, (ii) only \textit{router norm} based ranking, and (ii) \textit{router norm} based ranking + $\mathrm{MaxVar}$ based reordering described in section \ref{sec:method}. We conduct the study on Mixtral 8x7B for the eight zero-shot benchmark LLM tasks. We report the average accuracy across the tasks for both the two-level assignment (bit choices: 2, 3), and three-level assignment (bit choices: 1, 2, 3) in Table \ref{t_ab_2lvl}. As we can see, for the two-level assignment, only router norm based ranking performs better than only $\mathrm{MaxVar}$ based ranking for most of the average bit points. However, for the three-level assignment, there is an abrupt drop in performance in the only router-norm-based ranking as some of the unusually large $\mathrm{MaxVar}$ experts are placed in 1 bit (see our $\mathrm{MaxVar}$ visualization of Mixtral 8x7B in Appendix \ref{maxvar_vis}), which injects an unbearable amount of quantization noise into the model. Reordering these experts (only 11 out of 256 for $\zeta=3$) to higher rank completely removes this issue and significantly outperforms the only $\mathrm{MaxVar}$ based method and other competitive baselines provided in Table \ref{t_d_t} of the paper. We provide an empirical justification for our selection of $\zeta$ in Appendix \ref{zeta_ver}.

\begin{table}[h]
\centering
\caption{Average accuracy (\%) of different expert ranking methods }
\label{t_ab_2lvl}
\resizebox{\textwidth}{!}{%
\begin{tabular}{c|cccccc|ccccc}
\hline
\multirow{3}{*}{Method} & \multicolumn{6}{c|}{Two-level assignment (bit choices: 2, 3)} & \multicolumn{5}{c}{Three-level assignment (bit choices: 1, 2, 3)} \\
\cline{2-12}
 & \multicolumn{6}{c|}{Avg. bits/expert} & \multicolumn{5}{c}{Avg. bits/expert} \\
 & 2.75 & 2.625 & 2.5 & 2.375 & 2.25 & 2.125 & 2.75 & 2.5 & 2.25 & 2.0 & 1.75 \\
\hline 
MaxVar &
69.51 & \bf{68.51} & 66.01 & 64.24 & 63.88 & 60.65 &
69.37 & 67.90 & 63.97 & 60.44 & 58.11 \\
Router norm &
\bf{69.92} & 68.35 & 67.01 & 64.97 & 63.84 & \bf{61.54} &
54.23 & 49.78 & 48.12 & 44.96 & 42.92 \\
Router norm + MaxVar (Our method) &
69.50 & 68.40 & \bf{67.17} & \bf{65.31} & \bf{64.26} & 61.43 &
\bf{70.01} & \bf{68.38} & \bf{65.79} & \bf{62.56} & \bf{58.95} \\
\hline
\end{tabular}%
}
\end{table}

\subsection{Justification for using final router norm as a surrogate for change in norm of pretrained MoE models}

As stated in section \ref{sec:method}, for the experiments on zero-shot evaluation of pre-trained models, we propose to use the final router norm ($w_s^{(T)}$) to approximate the change in the router’s norm ($\Lambda_s^{(T)}$), when the randomly initialized model is not publicly available for computing the initial router norm ($w_s^{(0)}$). The rationale behind the approximation comes from the fact that the initial routers are generally initialized randomly with small variance (e.g., parameters of DeepSeekMoE are initialized randomly with variance 0.000036 \citep{dai2024deepseekmoe}), which leads to a very small difference between the two methods. We provide a theoretical justification of our claim in Appendix \ref{th_fn_vs_chn}. Here, we provide an empirical justification for the claim by reinitializing the routers of the pre-trained switch transformer randomly from $\mathcal{N}(0,\sigma^2)$ with $\sigma=0.0005$. We finetune the re-initialized model on the CNN/Daily Mail dataset and compare the rank correlation between the two expert ranking methods measured via Spearman’s $\rho$ and Kendall's $\tau$ ($\rho,\tau\approx 1$ implies high correlation). The results are provided in Table \ref{t_corr}. The high rank correlation implies that the rank orders using both methods are very similar.

\begin{table}[h]
\centering
\caption{Correlation between final router norm and change in norm across different layers}
\label{t_corr}
\resizebox{\textwidth}{!}{
\begin{tabular}{c|cccccccccccc}
\hline
 & Enc-1 & Enc-3 & Enc-5 & Enc-7 & Enc-9 & Enc-11 & Dec-1 & Dec-3 & Dec-5 & Dec-7 & Dec-9 & Dec-11 \\
\hline
Spearman’s $\rho$ & 
0.9997 & 0.9994 & 0.9995 & 0.9992 & 0.9989 & 0.9990 &
0.9990 & 0.9997 & 0.9995 & 0.9997 & 0.9998 & 0.9999 \\
Kendall’s $\tau$ & 
0.9950 & 0.9900 & 0.9920 & 0.9871 & 0.9851 & 0.9861 &
0.9871 & 0.9960 & 0.9920 & 0.9950 & 0.9960 & 0.9980 \\
\hline
\end{tabular}
}
\end{table}

We provide the quantization results for both methods in Table \ref{t_ac_corr}. As expected, due to the high correlation of the rank order between the final norm and the change in norm based method, the scores of both methods are very similar. 

\begin{table}[h]
\centering
\caption{Quantization results for final router norm and change in router norm based method}
\label{t_ac_corr}
\resizebox{\textwidth}{!}{
\begin{tabular}{c|ccccccccc}
\toprule
Initial router & Original pretrained & \multicolumn{7}{c}{Random router} \\
\cmidrule(lr){2-2} \cmidrule(lr){3-9}
Method & Full-precision & Full-precision & \multicolumn{3}{c}{Change in norm} & \multicolumn{3}{c}{Final norm} \\
\cmidrule(lr){2-2} \cmidrule(lr){3-3} \cmidrule(lr){4-6} \cmidrule(lr){7-9}
Avg. bits/expert & 32 (FP) & 32 (FP) & 2.75 & 2.5 & 2.25 & 2.75 & 2.5 & 2.25 \\
Rouge-2 score & 19.87 & 19.46 & 18.79 & 18.60 & 18.37 & 18.81 & 18.59 & 18.38 \\
\bottomrule
\end{tabular}
}
\end{table}

%% file: conclusion.tex
\section{Conclusion}\label{conclusion}

This paper proposes an expert-wise mixed-precision quantization method for MoE models that allocates higher bit-widths to experts with smaller router norms and lower bit-widths otherwise. It can use pretrained router norms as an alterative to avoid costly fine-tuning while maintaining accuracy. The approach is theoretically supported and empirically effective on large MoE models. It reduces memory and inference costs, promoting energy efficiency and a smaller carbon footprint. Future work will combine the method to layer-wise and block-wise quantization.


\clearpage

%% file: reproducibility.tex
\section*{Acknowledgments}
This work was supported in part by the 2024 IBM PhD fellowship, in part by the National Science Foundation (NSF) under Grant 2430223, in part by Army Research Office (ARO) under Grant W911NF-25-1-0020, and in part by the RPI-IBM Future of Computing Research Collaboration (http://airc.rpi.edu), part of the IBM AI Horizons Network (http://ibm.biz/AIHorizons).

\section*{Reproducibility statement}

We provide the complete setup for our theoretical analysis in section \ref{data_model}. We include additional details related to our analysis in Appendix \ref{prelmnry}. We provide the proof of Lemma \ref{lm1} in Appendix \ref{proof_lm1}, and the proof of Theorem \ref{th1} in Appendix \ref{proof_theorem}. We provide the details of our experimental setup, including model architecture, parameter size, values of the hyperparameters related to the implemented quantization methods, and the evaluation datasets in Appendix \ref{md_dtl}. We include the code of our experiments for reproducibility. 

%% file: appendix.tex
\newpage
\section{Verification of theoretical results on synthetic data}\label{s_d}
\vspace{-0.1in}
\begin{figure}[ht]
    \begin{minipage}{0.32\linewidth}
        \centering
        \includegraphics[width=\linewidth, height=0.7\linewidth]{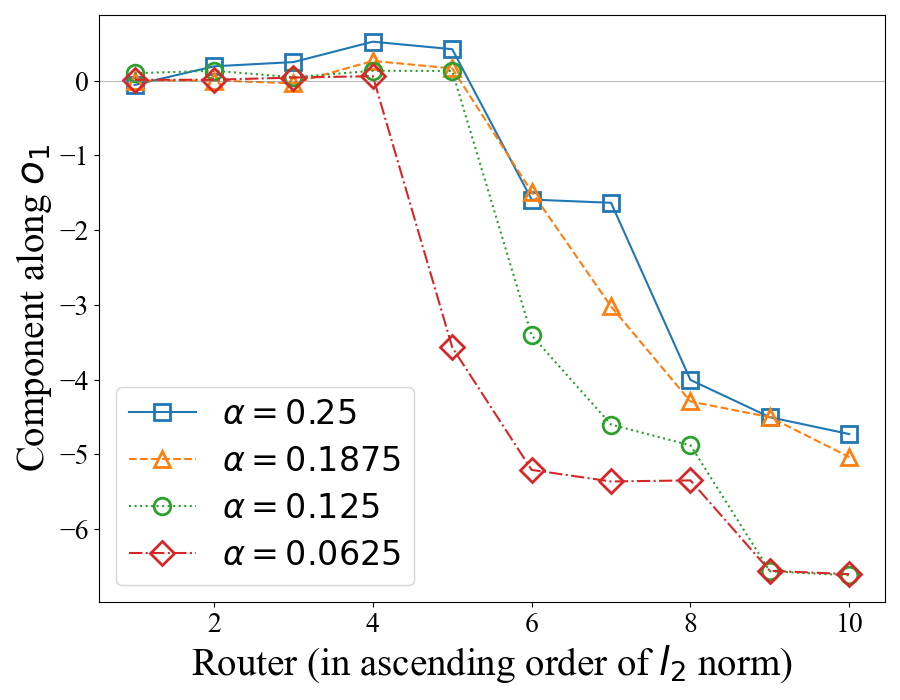}
        \caption{Router vector projection along the class-1 task-relevant feature $o_1$.}
        \label{f_s_d_1}
    \end{minipage}
~
    \begin{minipage}{0.32\linewidth}
        \centering
    \includegraphics[width=\linewidth,height=0.60\linewidth]{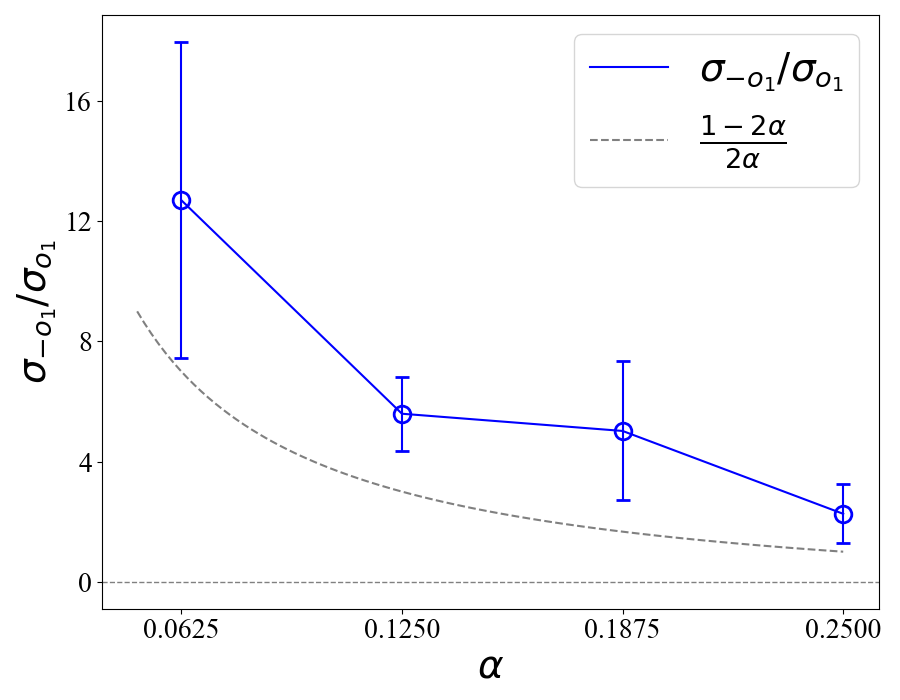}
        \caption{Ratio of  activations of experts learning more and less prevalent tokens, respectively (class-1)}
        \label{f_s_d_2}
    \end{minipage}
~
    \begin{minipage}{0.32\linewidth}
        \centering
    \includegraphics[width=\linewidth,height=0.7\linewidth]{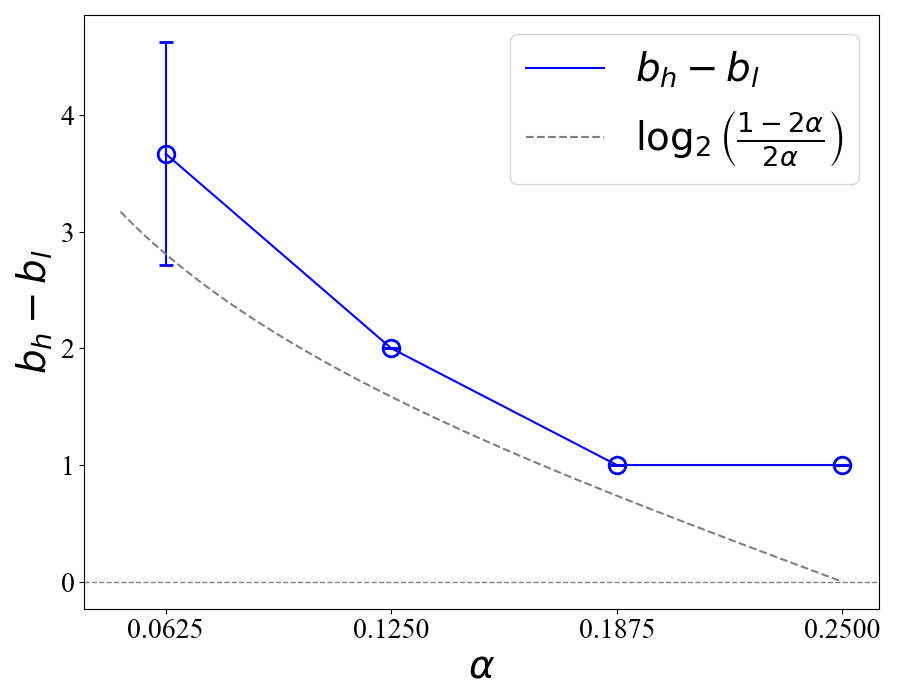}
        \caption{Change of bit difference between higher bit experts and lower bit experts with $\alpha$}
        \label{f_s_d_3}
    \end{minipage}
\end{figure}

We validate our theoretical claims using synthetic data generated as described in Section \ref{data_model}. Tokens are drawn from an orthonormal matrix obtained via QR decomposition of a $d \times d$ Gaussian matrix with $d = 200$. We set $k = 20$, $m = 800$, $n = 100$, and $l = 5$. Model weights are initialized from a zero-mean Gaussian distribution with variance $0.0001$ and trained with a learning rate of $0.2$. 

Figure~\ref{f_s_d_1} shows the projection of the router vectors onto the direction of $O_1$, a class-1 task-relevant vector, for the experts in $S_+$. The experts are sorted in ascending order based on the change in router norm. Routers exhibiting larger norm changes tend to have a significant component along the more dominant $-O_1$ direction, consistent with Lemma~\ref{lm1}(ii). 
Figure~\ref{f_s_d_2} presents the minimum ratio of activation of $-O_1$-aligned experts by $-o_1$ to that of $O_1$-aligned experts by $o_1$, minimized over all such expert pairs. This ratio is compared against the theoretical lower bound of $(1 - 2\alpha) / 2\alpha$, as established in Lemma~\ref{lm1}(iii).


 We quantize the weights as described in Section \ref{data_model}, using equation (\ref{q_eqn}). Experts with large components along the $-o_1$ and $-o_2$ directions are quantized to a lower bit-width $b_l$, unless a high maximum row variance is observed, in which case they are quantized to a higher bit-width $b_h$. The value of $b_h$ is determined empirically as the minimum bit-width required to achieve zero test error when all experts are uniformly quantized to $b_h$. We then choose $b_l$ as the minimum bit-width that still maintains zero test error in the mixed-precision setting. Figure \ref{f_s_d_3} shows that the gap $b_h-b_l$ increases as $\alpha$ decreases, aligning with the theoretical bound $\log_2((1 - 2\alpha) / 2\alpha)$ discussed in Remark \ref{rmk:bit}.


\section{Details on the quantized models and evaluation tasks}\label{md_dtl}

\subsection{Switch transformer}
We fine-tune a pre-trained Switch Transformer \citep{fedus2022switch}, which contains 64 experts per MoE block on CNN/Daily Mail (CNNDM) text summarization task \citep{see2017get}. The model follows an encoder-decoder architecture with 12 transformer blocks each in the encoder and decoder; every even-numbered block is an MoE block, resulting in 12 MoE blocks total. The model has  about 2 billion parameters, with 90\% residing in MoE blocks. All non-MoE weights are quantized to 8 bits. We apply HQQ \citep{badri2023hqq} for quantizing the model. Weights in each row of the weight matrices are quantized together.

\subsection{Mixtral}

We quantize the pretrained Mixtral 8x7B and Mixtral 8x22B models \citep{jiang2024mixtral}, which adopt a decoder-only architecture. The Mixtral 8x7B contains 32 transformer blocks, and the Mixtral 8x22B contains 56 transformer blocks. All blocks are MoE blocks of $8$ experts. Mixtral 8x7B contains 46.7B parameters, with 97\% residing in the MoE blocks. Mixtral 8x22B contains 140.6B parameters, with 99\% residing in the MoE blocks. We quantize the non-MoE parameters to 4 bits and apply GPTQ \citep{frantar2023optq} for model quantization with group size 128, and 1\% damping. We use 128 samples of length 2048 from Wikitext2 as the GPTQ calibration data. Model performance is evaluated on eight zero-shot benchmark LLM tasks: PIQA \citep{Bisk2020}, ARC-Challenge and ARC-Easy \citep{allenai:arc}, BoolQ \citep{clark2019boolq}, HellaSwag \citep{zellers2019hellaswag}, WinoGrande \citep{keisuke2019winogrande}, MathQA \citep{amini-etal-2019-mathqa}, and MMLU \citep{hendryckstest2021}, using the EleutherAI LM Harness \citep{gu2024olmes}. 

\section{Justification for the selection of $\zeta$}\label{zeta_ver}

As stated in section \ref{sec:method}, we select $\zeta=3$, since the variance of any bounded distribution is at most three times that of a uniform distribution with the same range. Our selection of $\zeta$ alters the initial router norm based order by a very small amount (only 11 out of 256 experts are reordered in Mixtral 8x7B, and only 28 out of 768 experts are reordered in Switch Transformer for our selection of $\zeta$). Indeed, our selection of $\zeta$ only reorders the experts that have unusually large $\mathrm{MaxVar}$ values in an MoE layer (see our $\mathrm{MaxVar}$ visualization of Mixtral 8x7B in Appendix \ref{maxvar_vis}). We conduct a sweep of $\zeta$ for Mixtral 8x7B on the eight benchmark downstream tasks and report the average accuracy in Table \ref{t_zeta}. As we can see, the performance picks around $\zeta=3.0$, which justifies our selection. 

\begin{table}[h]
\centering
\caption{Avg. accuracy (\%) for different values of $\zeta$}
\label{t_zeta}
\begin{tabular}{c|cccccc}
\hline
\multirow{2}{*}{Avg. bits/expert} & \multicolumn{6}{c}{$\zeta$} \\
\cline{2-7}
 & 1.0 & 2.0 & 2.5 & 3.0 & 4.0 & 5.0 \\[4pt]
\hline
2.0 & 60.44 & 61.56 & 62.28 & \bf{62.56} & 61.32 & 61.74 \\
\hline
\end{tabular}
\end{table}

\section{More results on Mixtral 8x7B}

\begin{figure}[ht]
    \centering
    \includegraphics[width=0.43\linewidth]{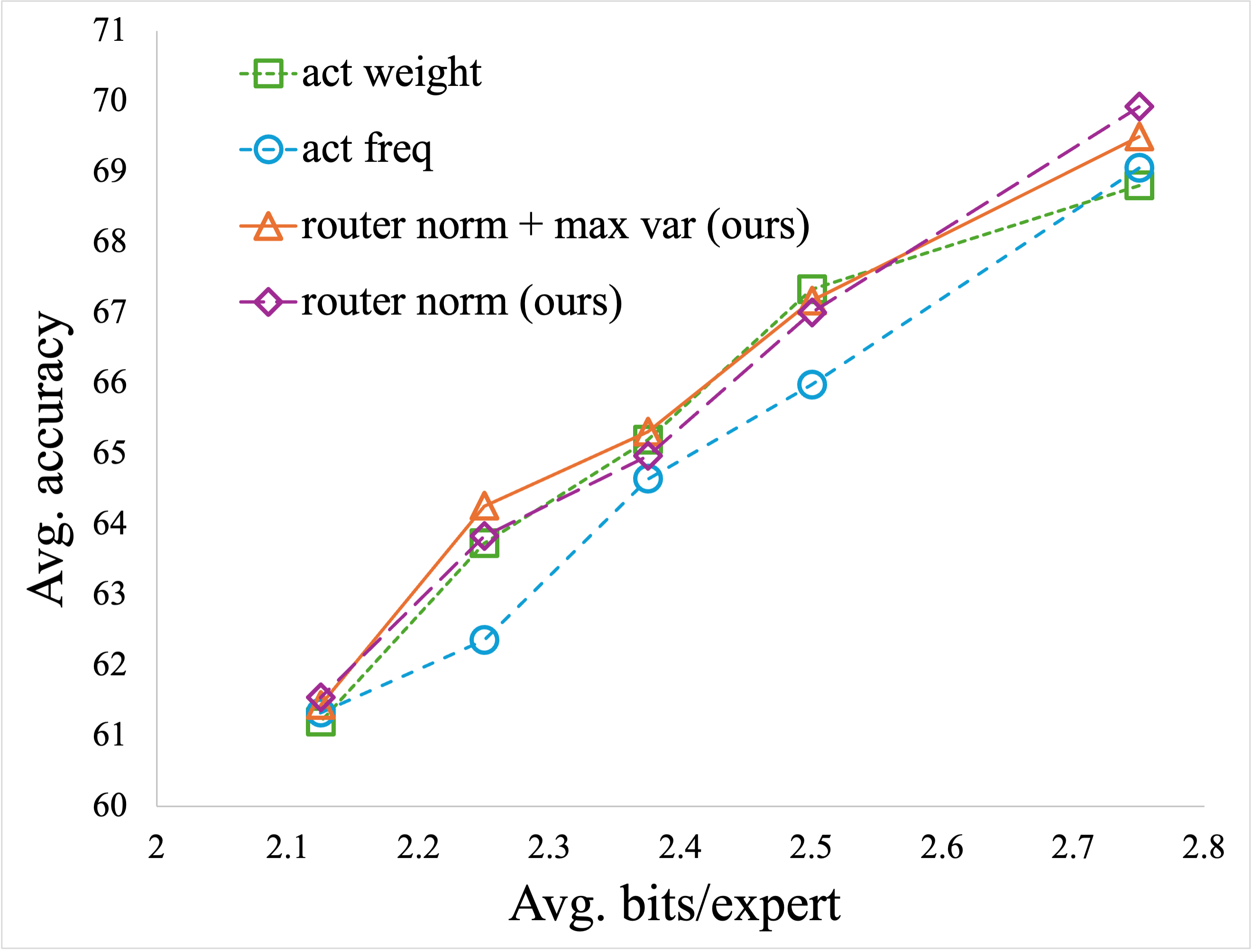}
    \caption{Expert-wise mixed-precision results for Mixtral 8x7B on eight benchmark LLM tasks; expert bit-choices: $2,3$. Only 4.3\% of the experts are reordered to higher ranks in maximum intra-neuron based reordering.}
    \label{f_m_mixtral_1}
\end{figure}

\begin{figure}[ht]
    \centering
    \includegraphics[width=0.43\linewidth]{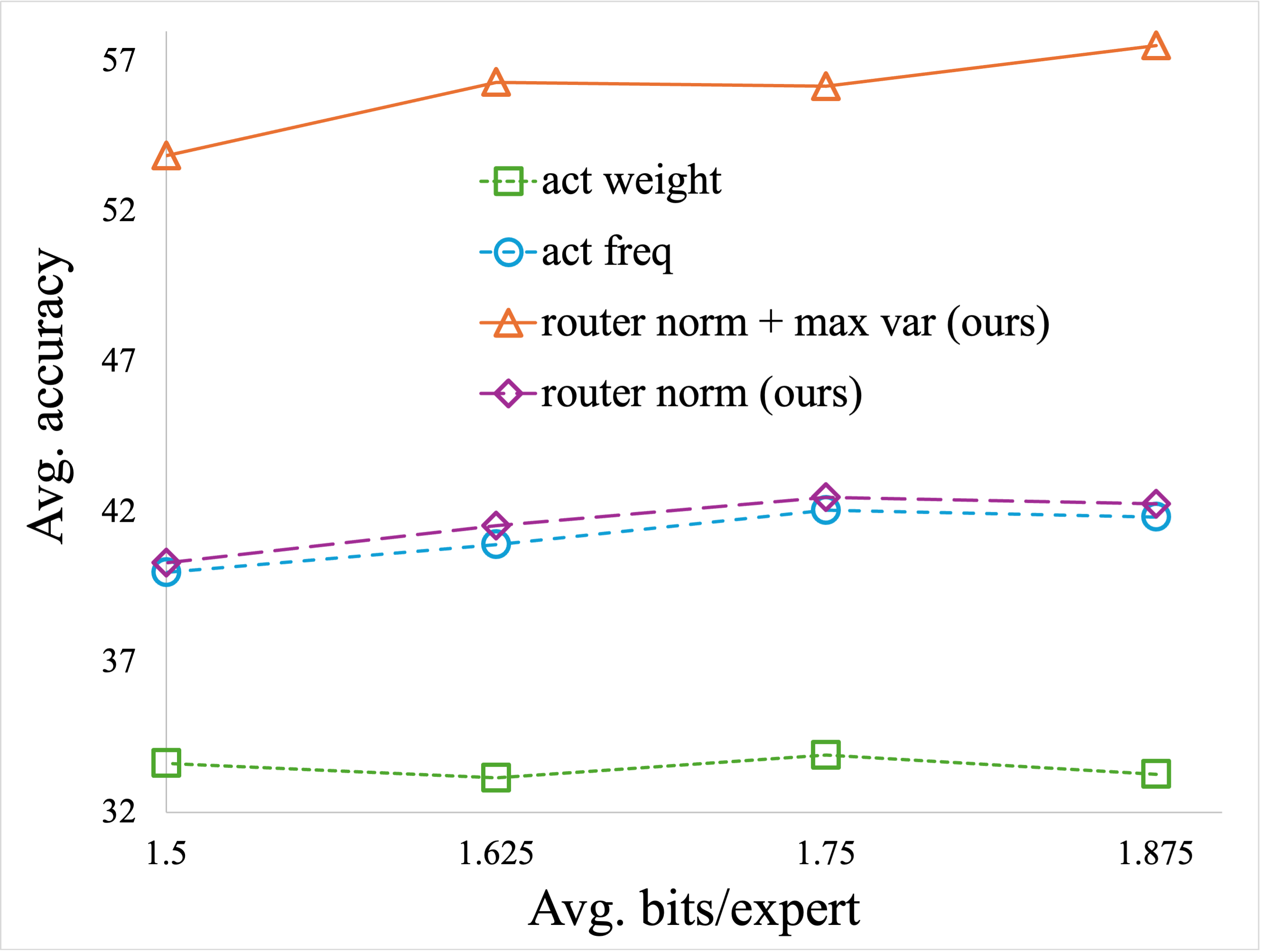}
    \caption{Expert-wise mixed-precision results for Mixtral 8x7B on eight benchmark LLM tasks; expert bit-choices: $1,2$. Only 4.3\% of the experts are reordered to higher ranks in maximum intra-neuron based reordering.}
    \label{f_m_mixtral_l2}
\end{figure}

\newpage
\section{Visualization of $\mathrm{MaxVar}$ for Mixtral 8x7B}\label{maxvar_vis}

\begin{figure}[htp]
\centering

\setlength{\tabcolsep}{4pt} 
\renewcommand{\arraystretch}{1}

\begin{tabular}{cccc}

\begin{subfigure}{0.22\textwidth}
    \includegraphics[width=\linewidth]{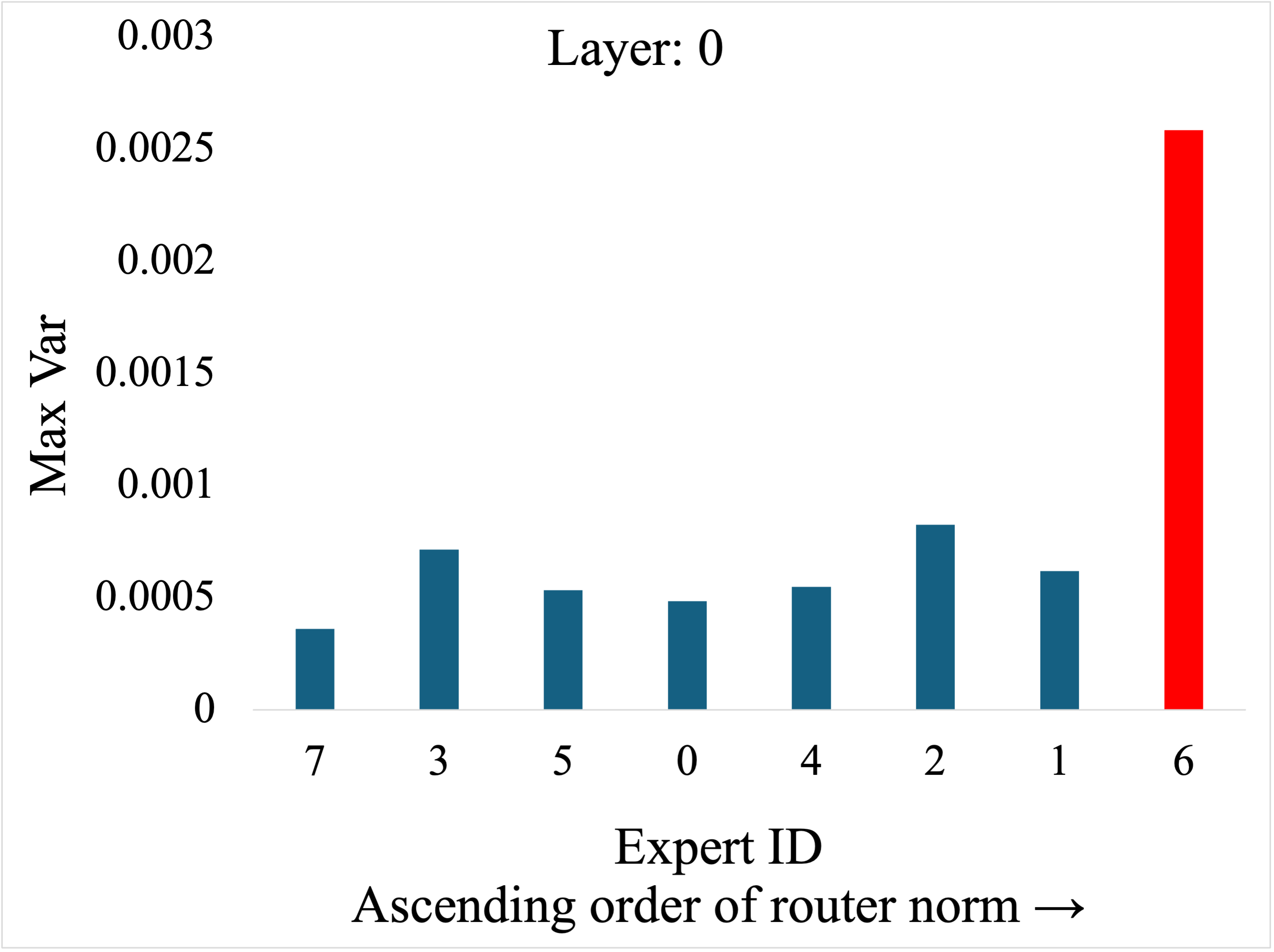}
\end{subfigure} &
\begin{subfigure}{0.22\textwidth}
    \includegraphics[width=\linewidth]{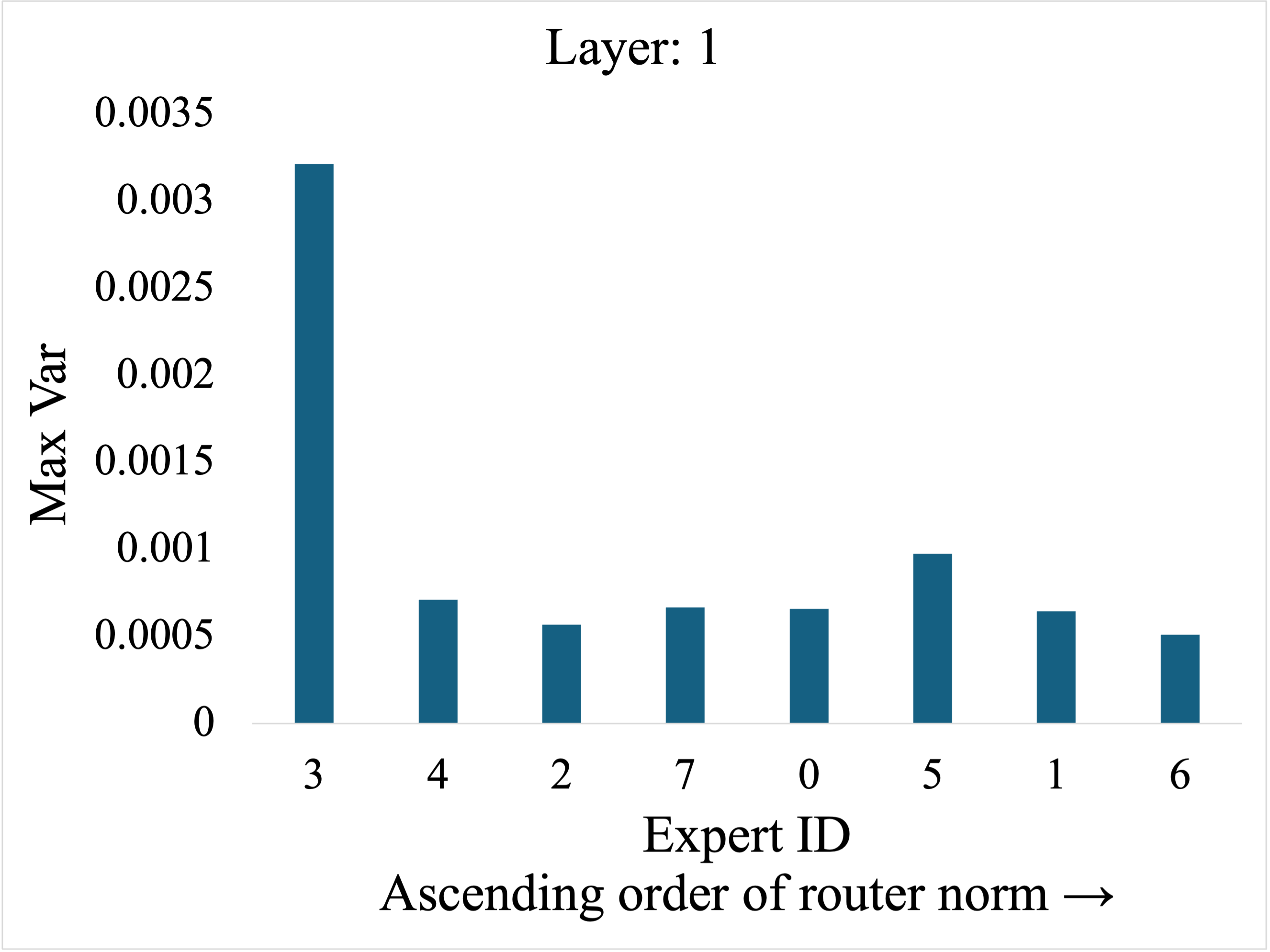}
\end{subfigure} &
\begin{subfigure}{0.22\textwidth}
    \includegraphics[width=\linewidth]{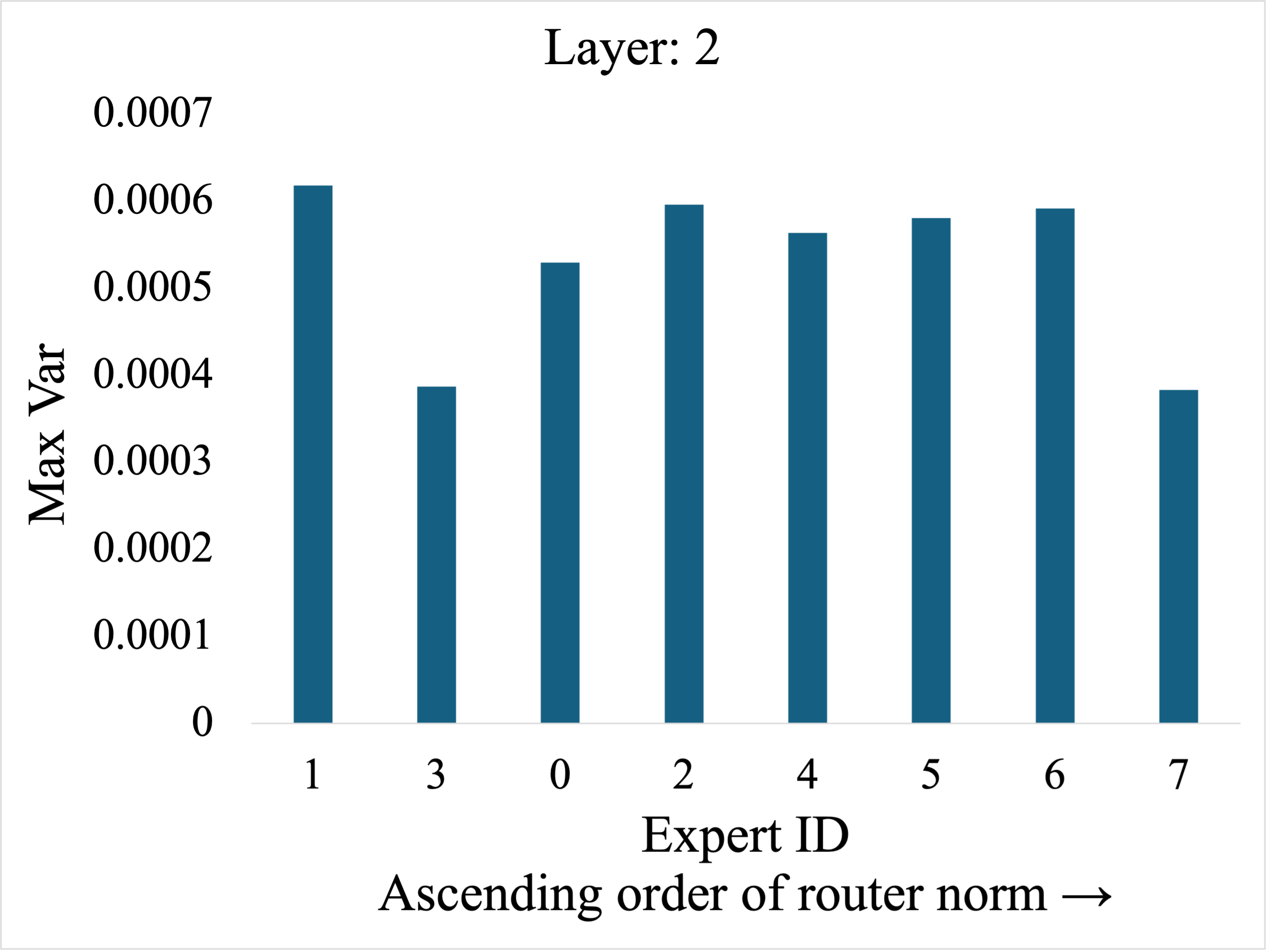}
\end{subfigure} &
\begin{subfigure}{0.22\textwidth}
    \includegraphics[width=\linewidth]{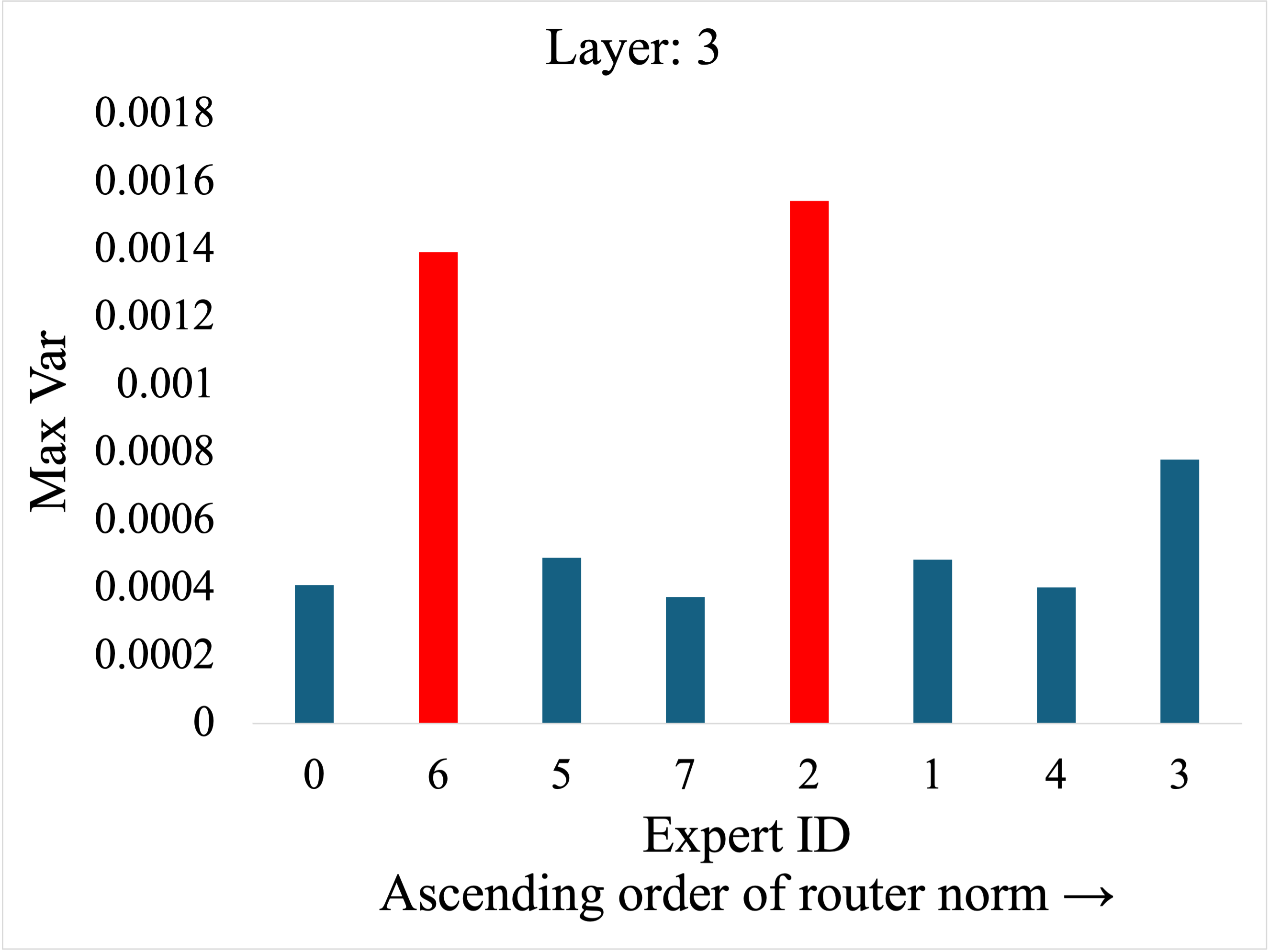}
\end{subfigure} \\[3pt]

\begin{subfigure}{0.22\textwidth}
    \includegraphics[width=\linewidth]{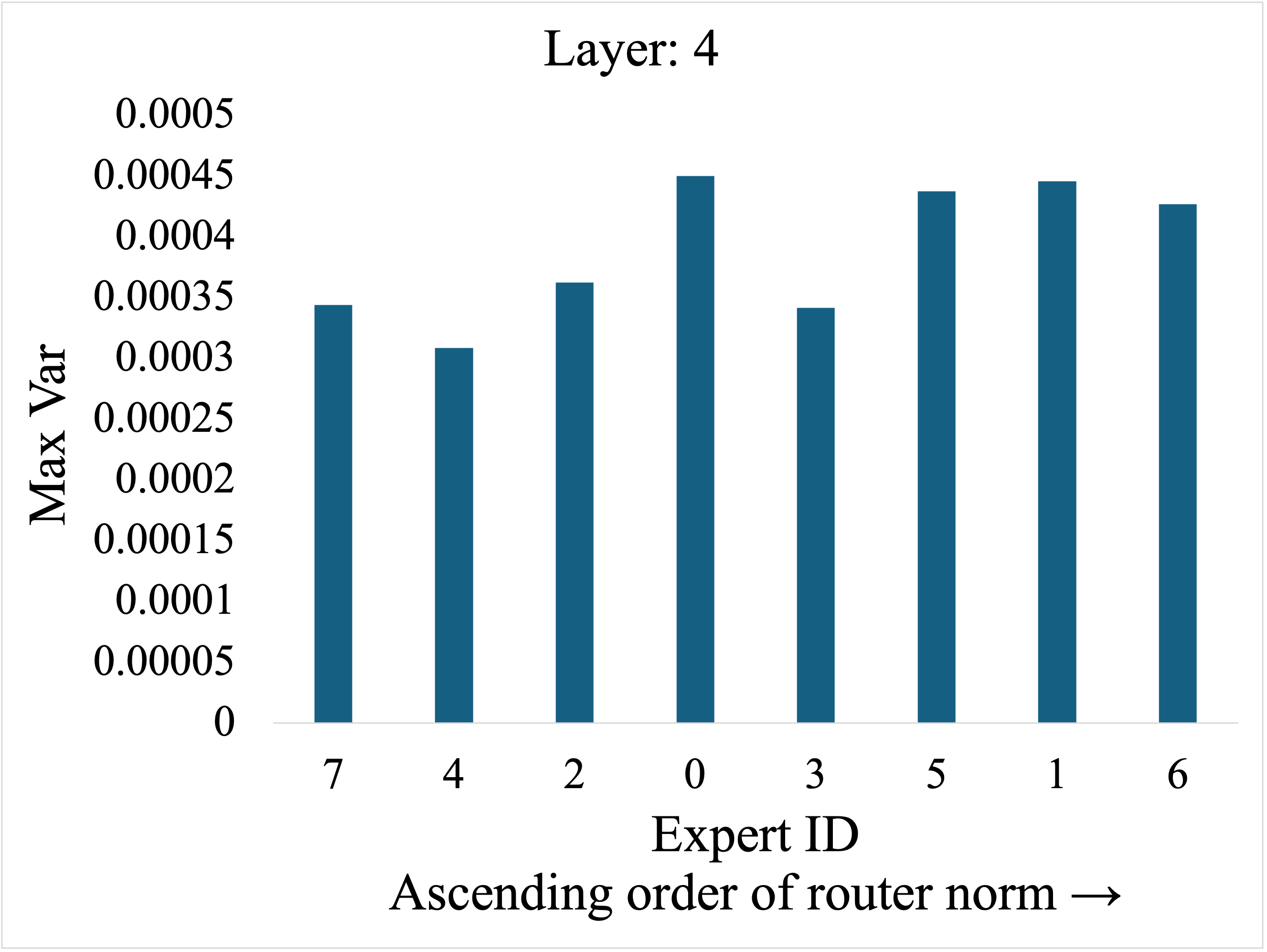}
\end{subfigure} &
\begin{subfigure}{0.22\textwidth}
    \includegraphics[width=\linewidth]{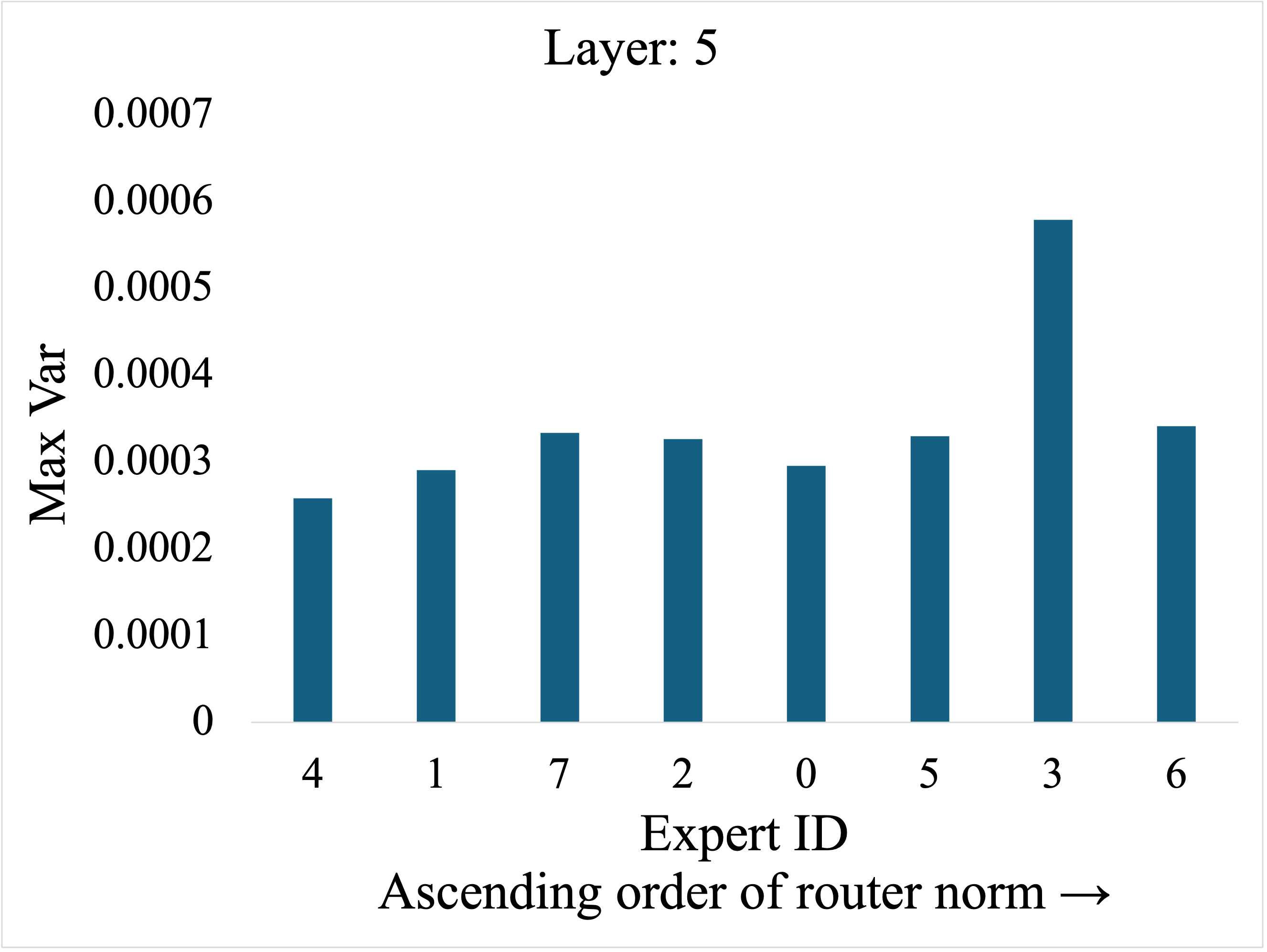}
\end{subfigure} &
\begin{subfigure}{0.22\textwidth}
    \includegraphics[width=\linewidth]{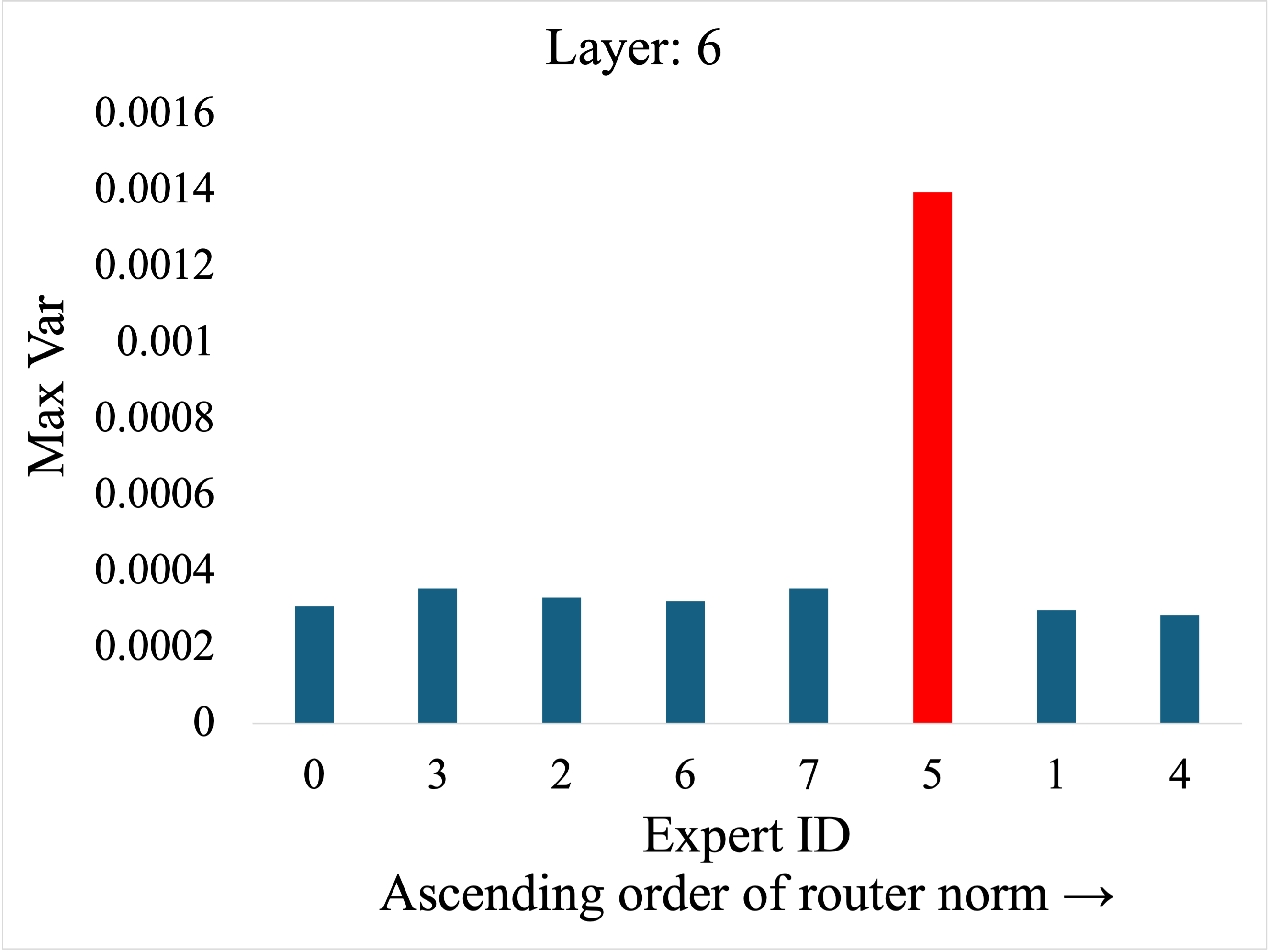}
\end{subfigure} &
\begin{subfigure}{0.22\textwidth}
    \includegraphics[width=\linewidth]{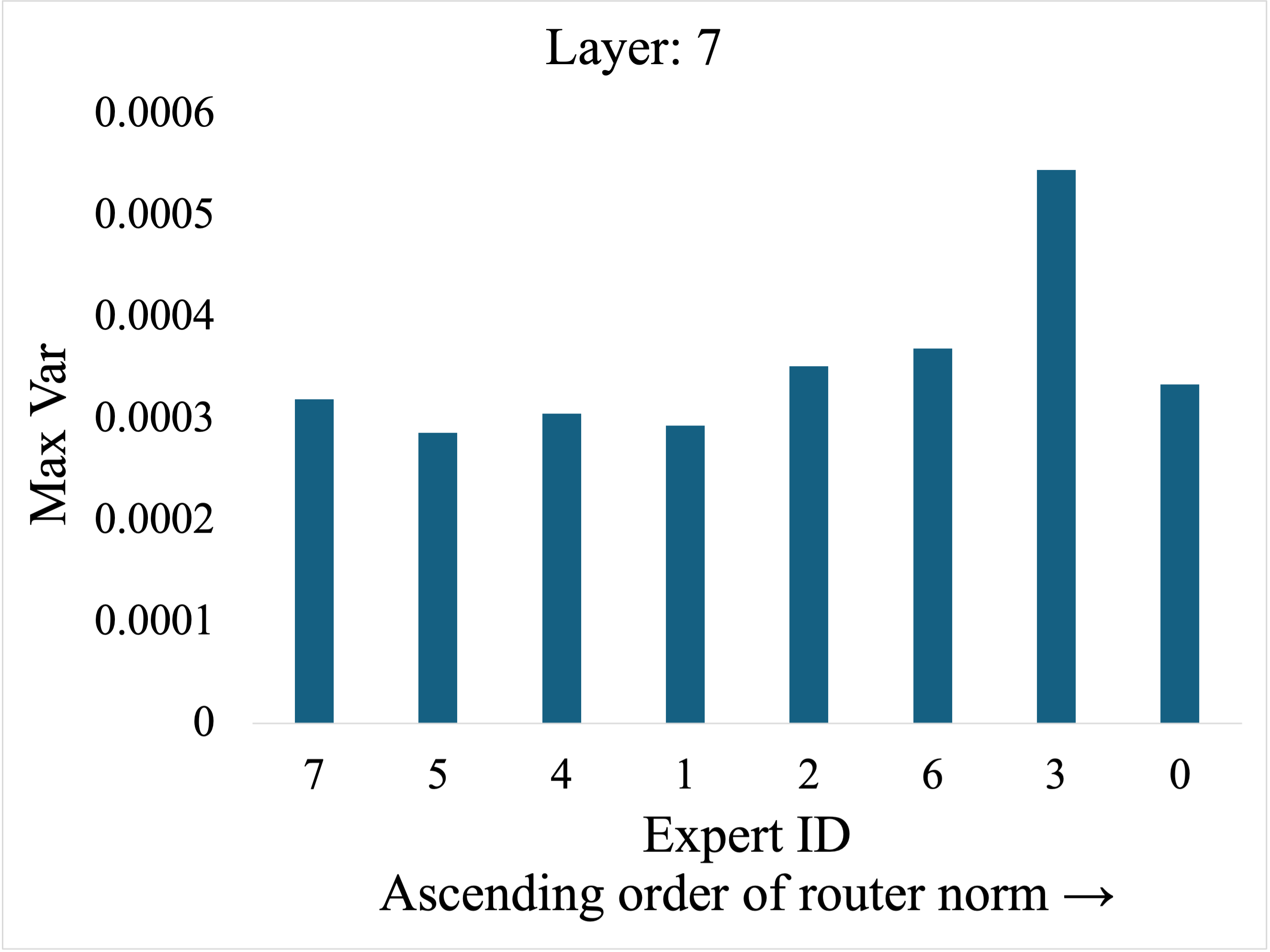}
\end{subfigure} \\[3pt]

\begin{subfigure}{0.22\textwidth}
    \includegraphics[width=\linewidth]{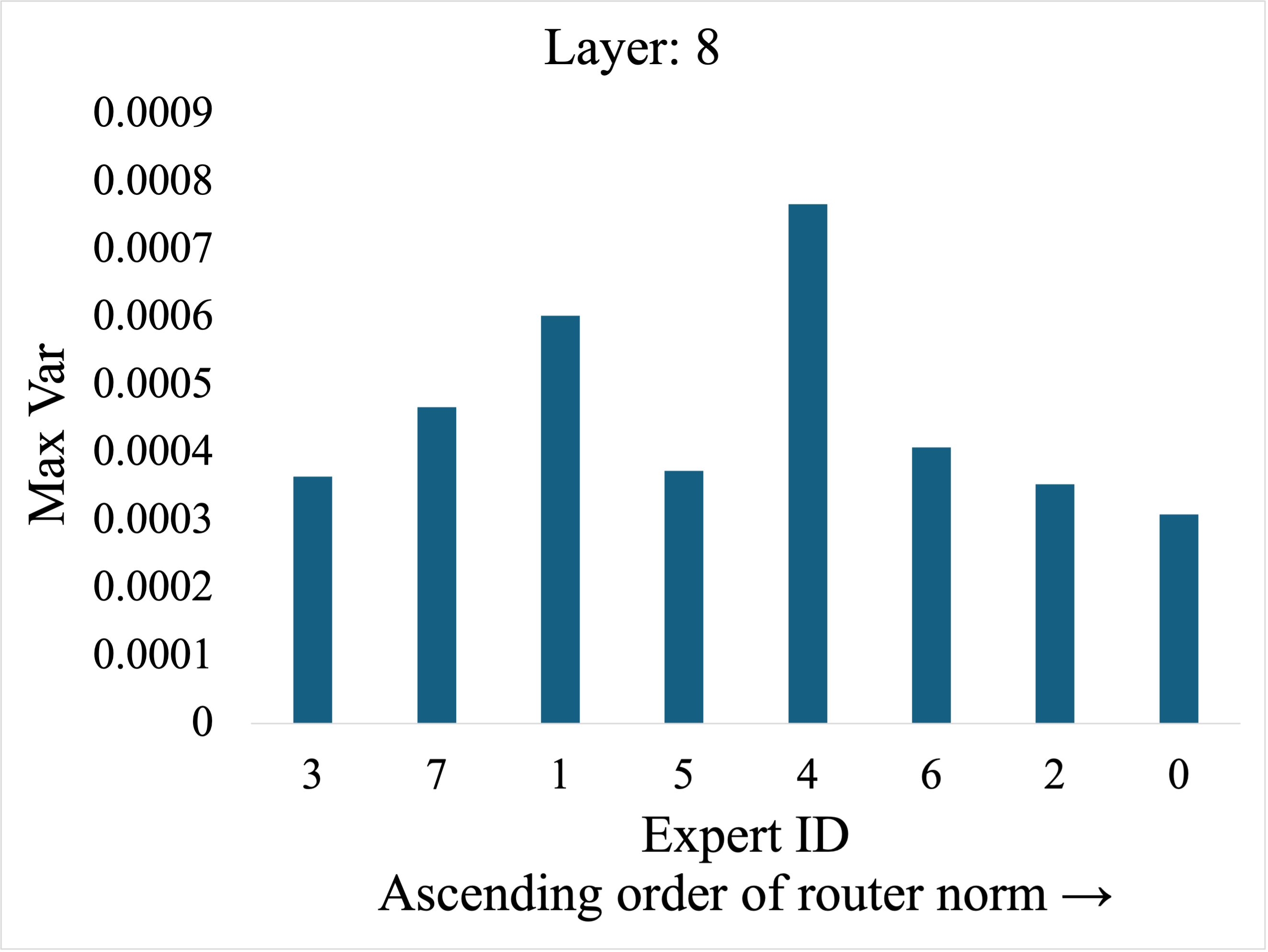}
\end{subfigure} &
\begin{subfigure}{0.22\textwidth}
    \includegraphics[width=\linewidth]{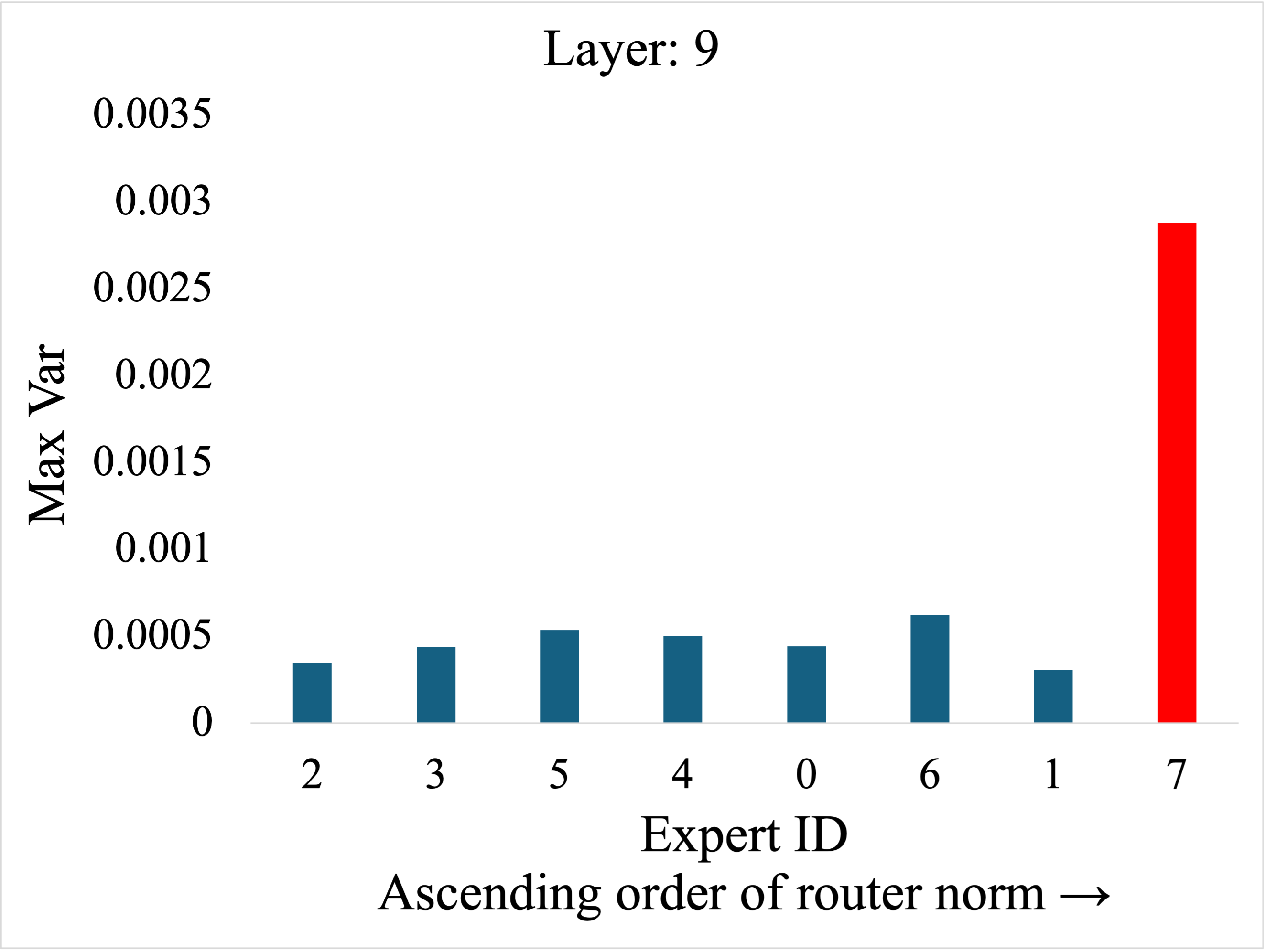}
\end{subfigure} &
\begin{subfigure}{0.22\textwidth}
    \includegraphics[width=\linewidth]{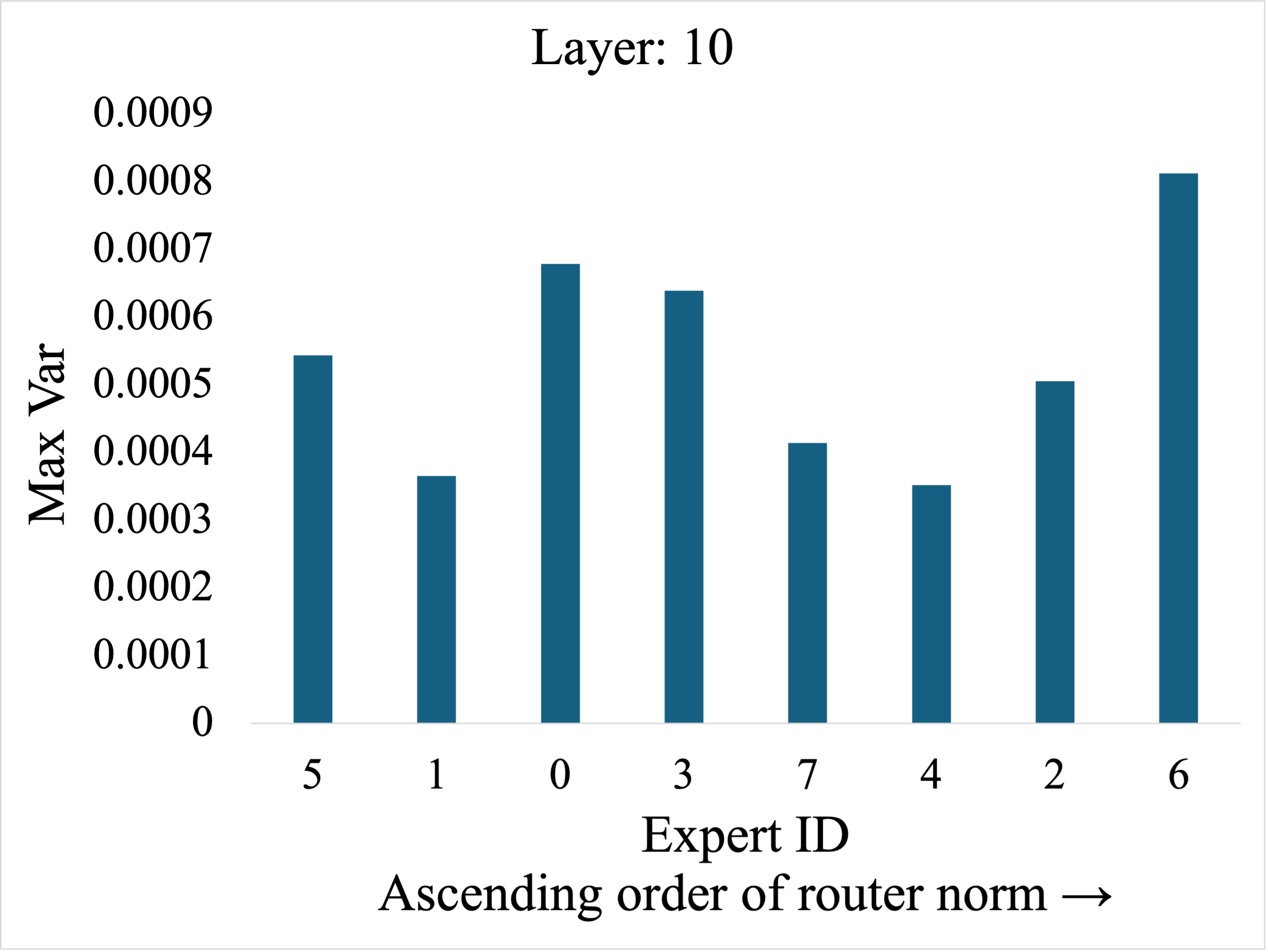}
\end{subfigure} &
\begin{subfigure}{0.22\textwidth}
    \includegraphics[width=\linewidth]{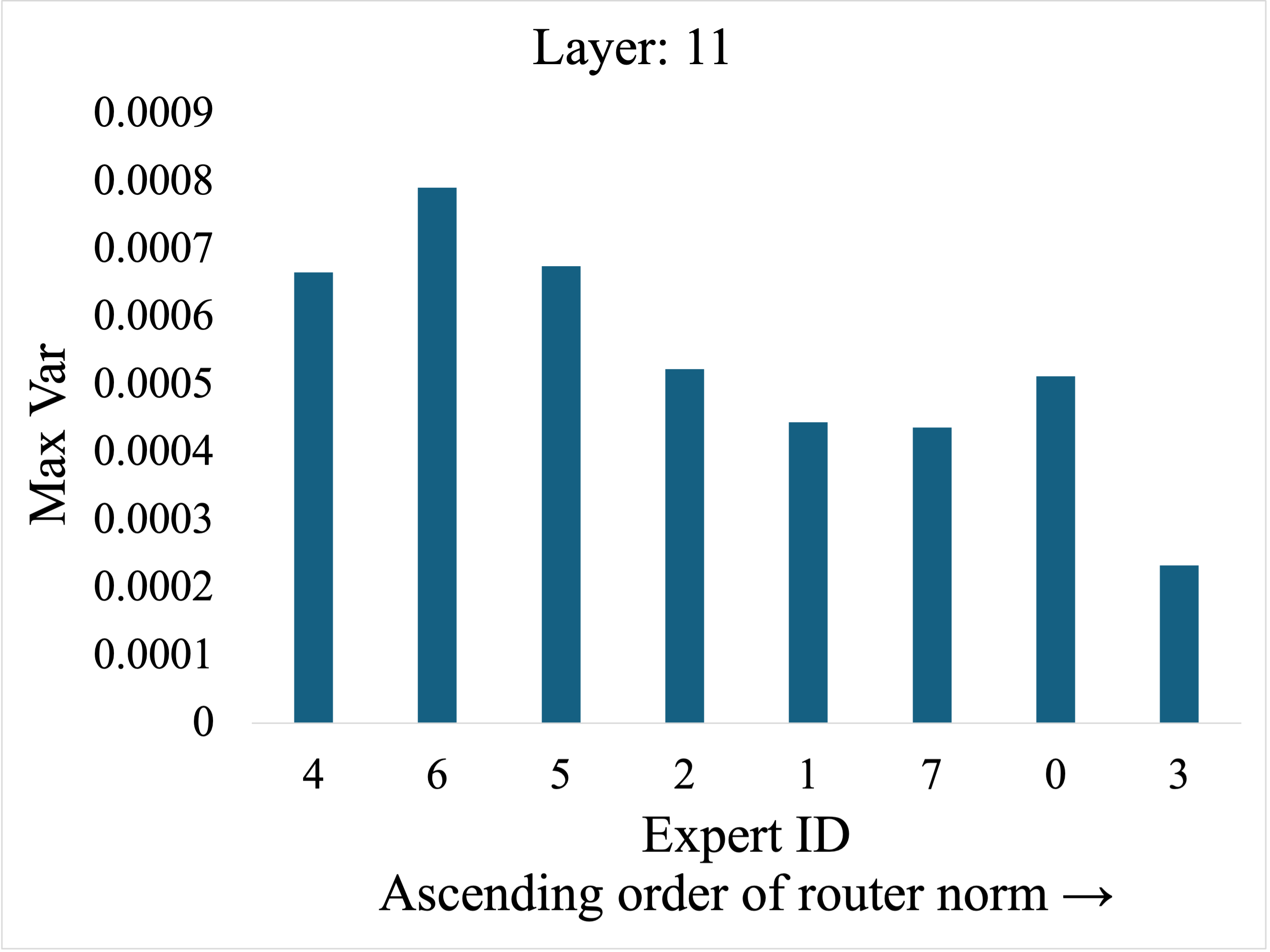}
\end{subfigure} \\[3pt]

\begin{subfigure}{0.22\textwidth}
    \includegraphics[width=\linewidth]{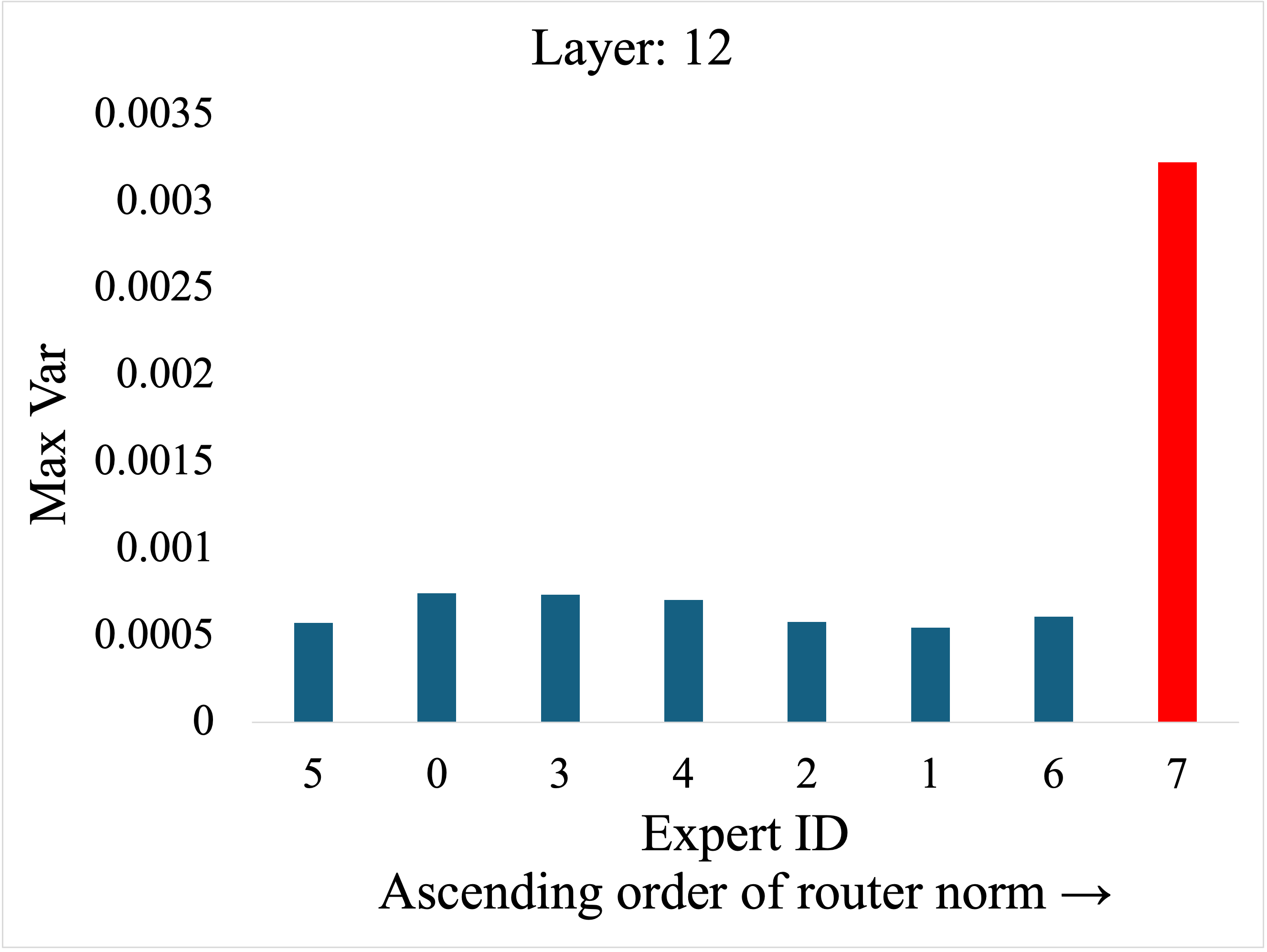}
\end{subfigure} &
\begin{subfigure}{0.22\textwidth}
    \includegraphics[width=\linewidth]{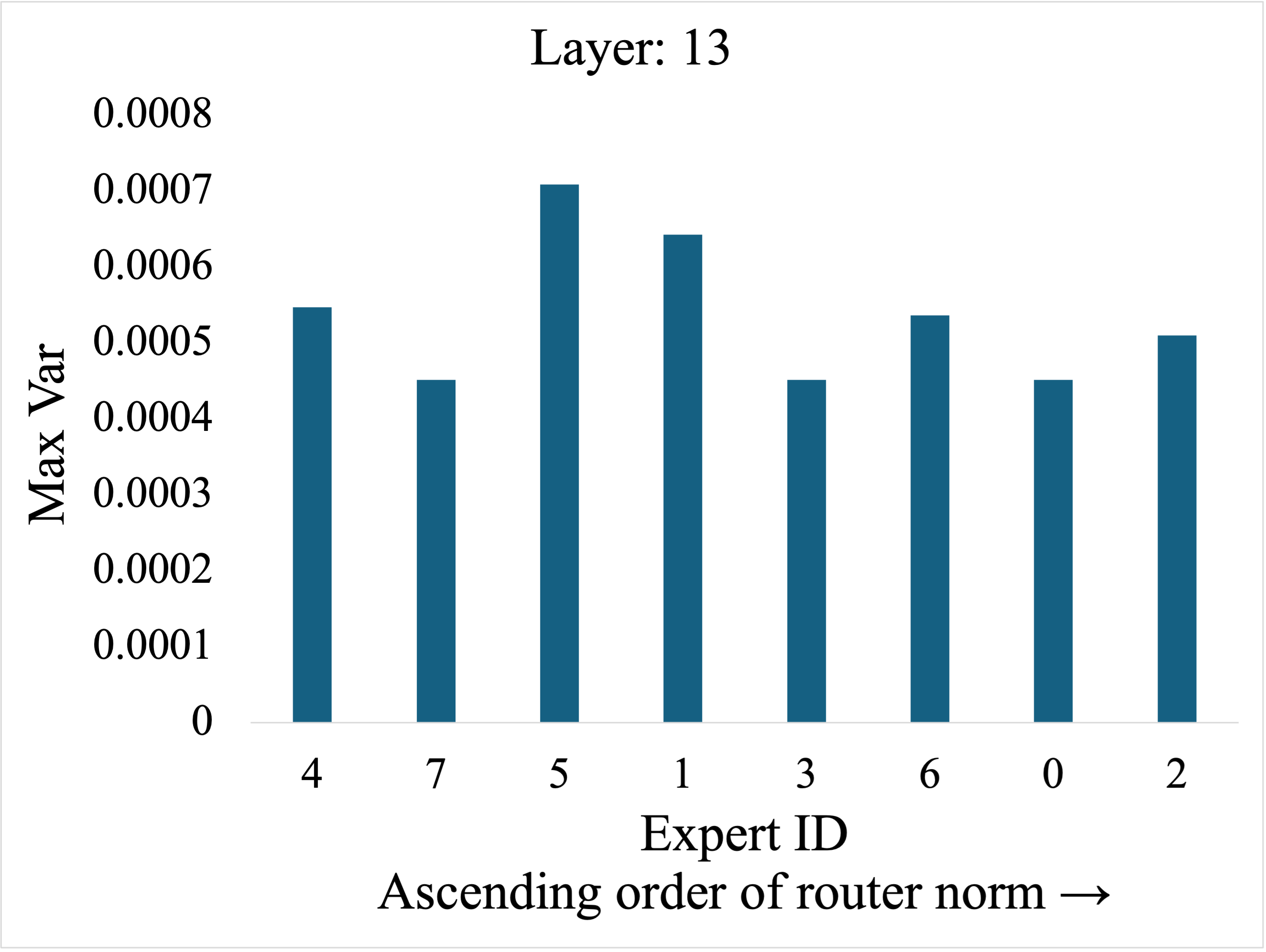}
\end{subfigure} &
\begin{subfigure}{0.22\textwidth}
    \includegraphics[width=\linewidth]{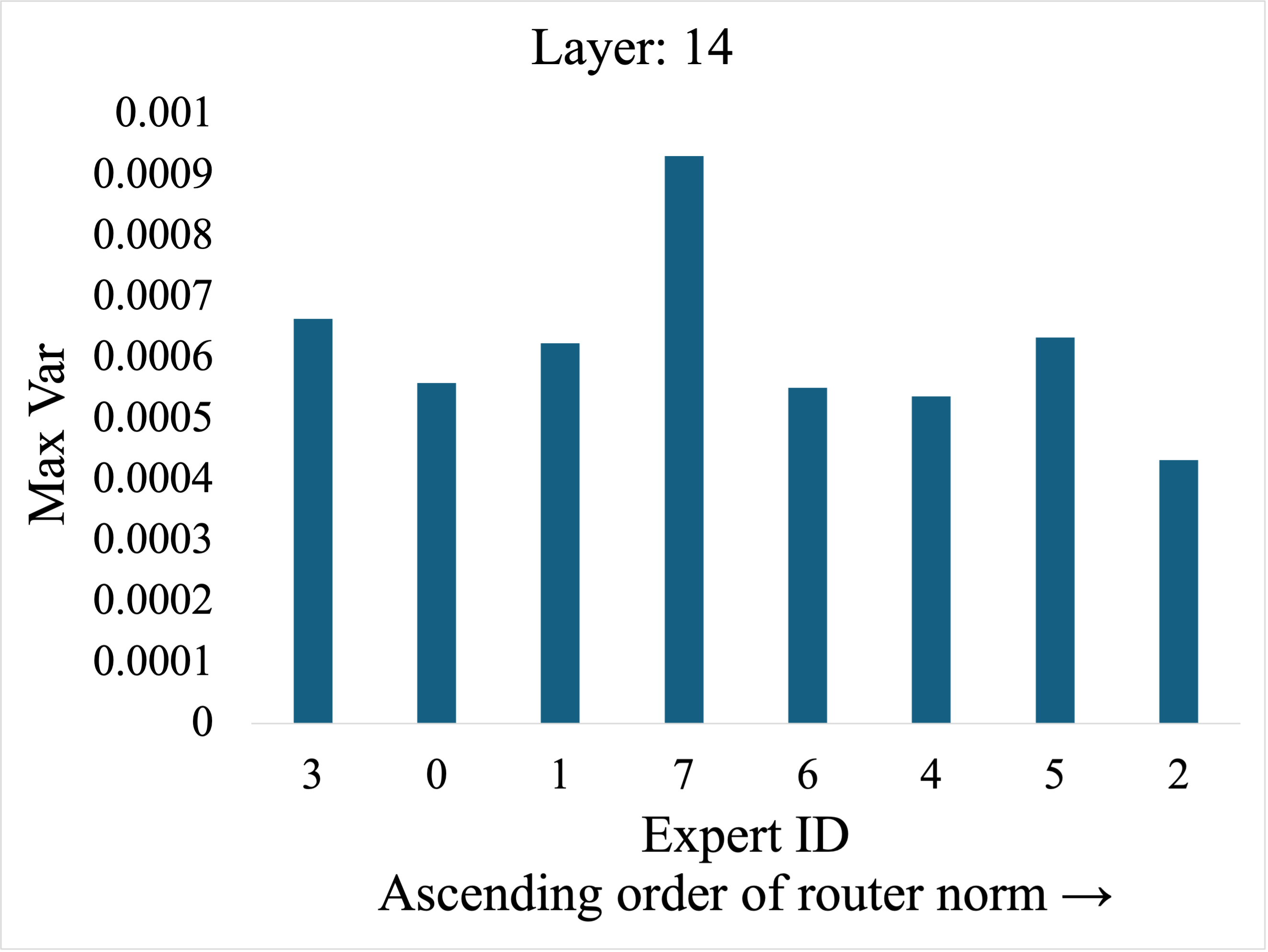}
\end{subfigure} &
\begin{subfigure}{0.22\textwidth}
    \includegraphics[width=\linewidth]{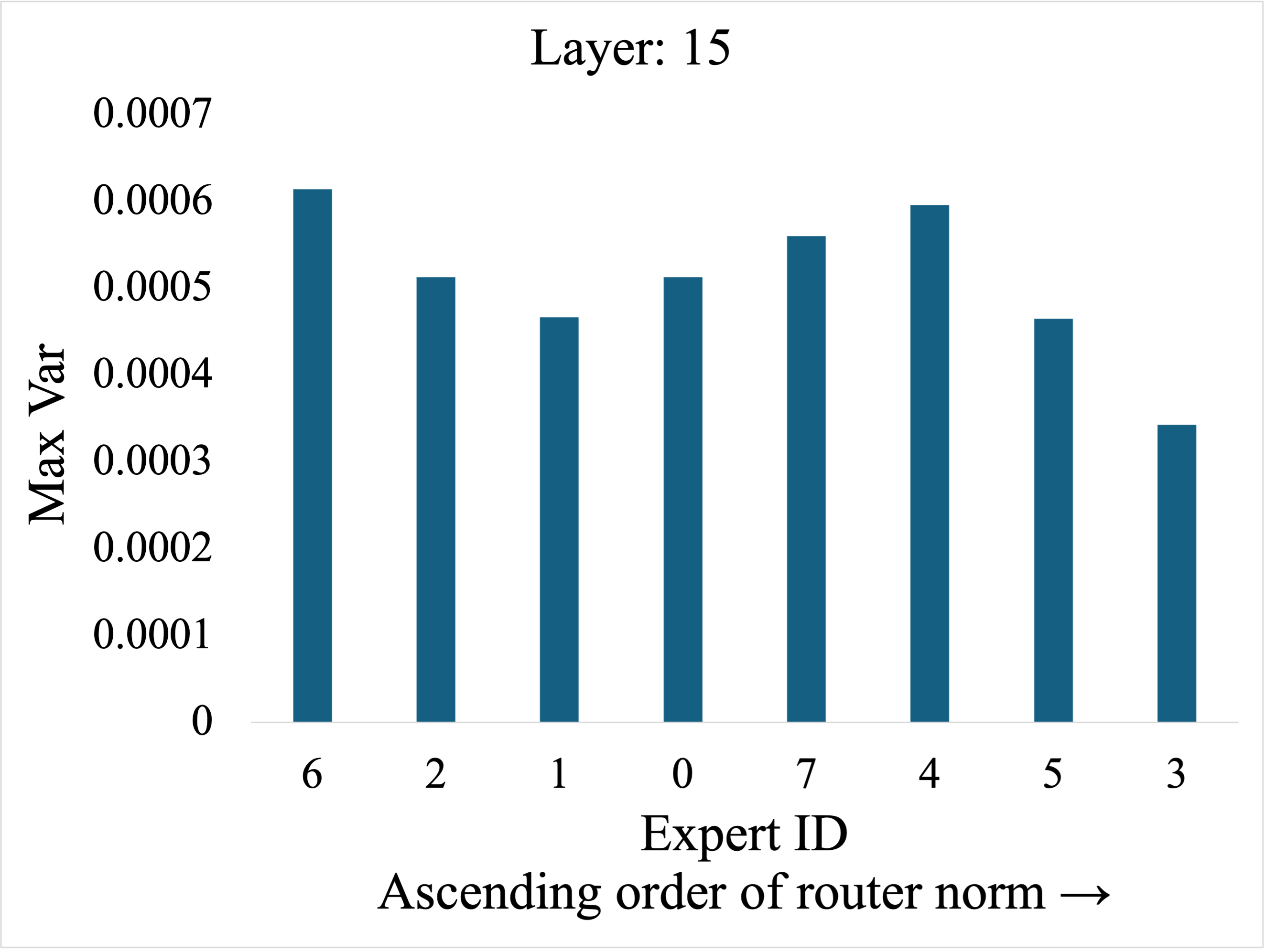}
\end{subfigure} \\[3pt]

\begin{subfigure}{0.22\textwidth}
    \includegraphics[width=\linewidth]{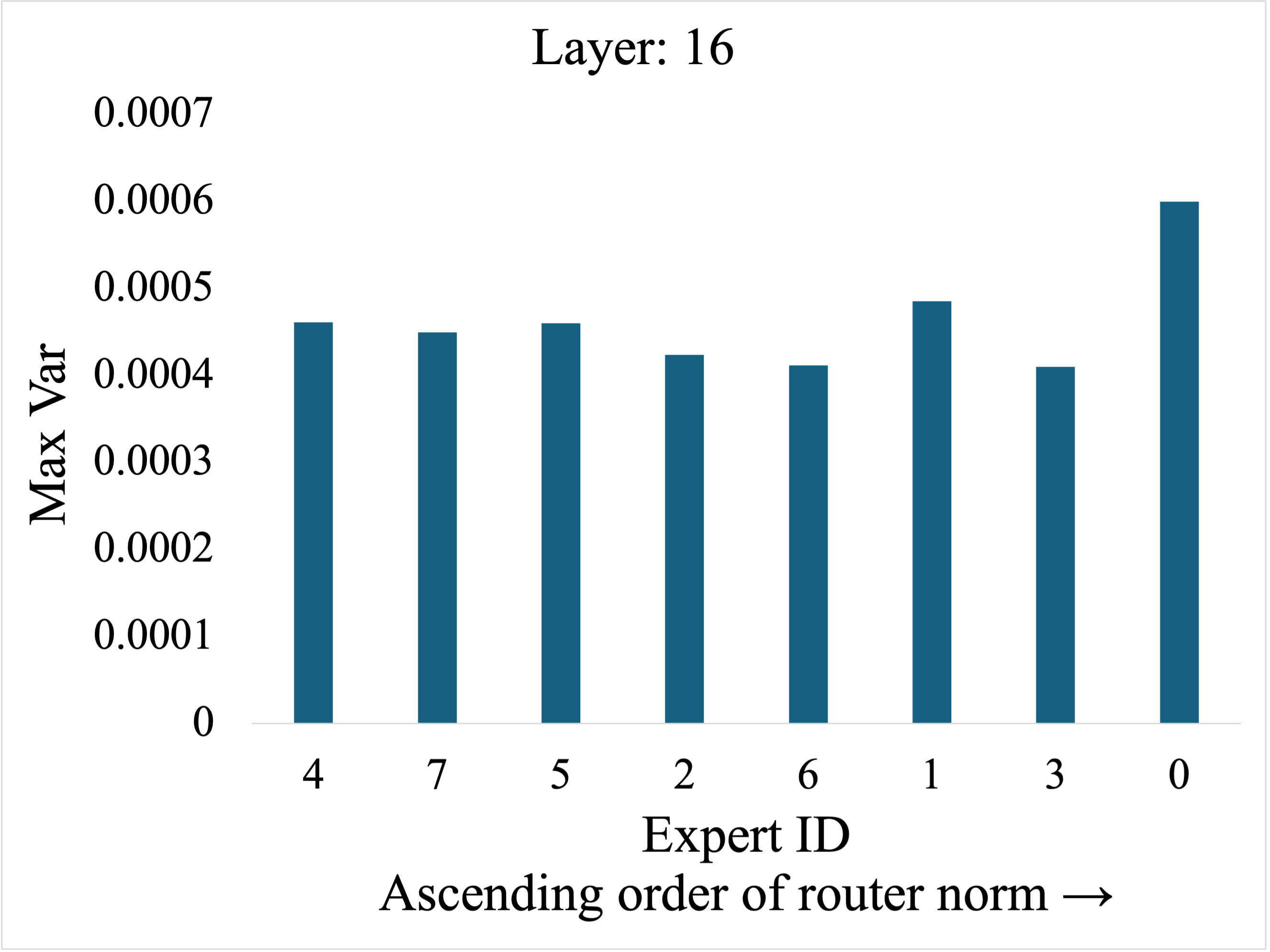}
\end{subfigure} &
\begin{subfigure}{0.22\textwidth}
    \includegraphics[width=\linewidth]{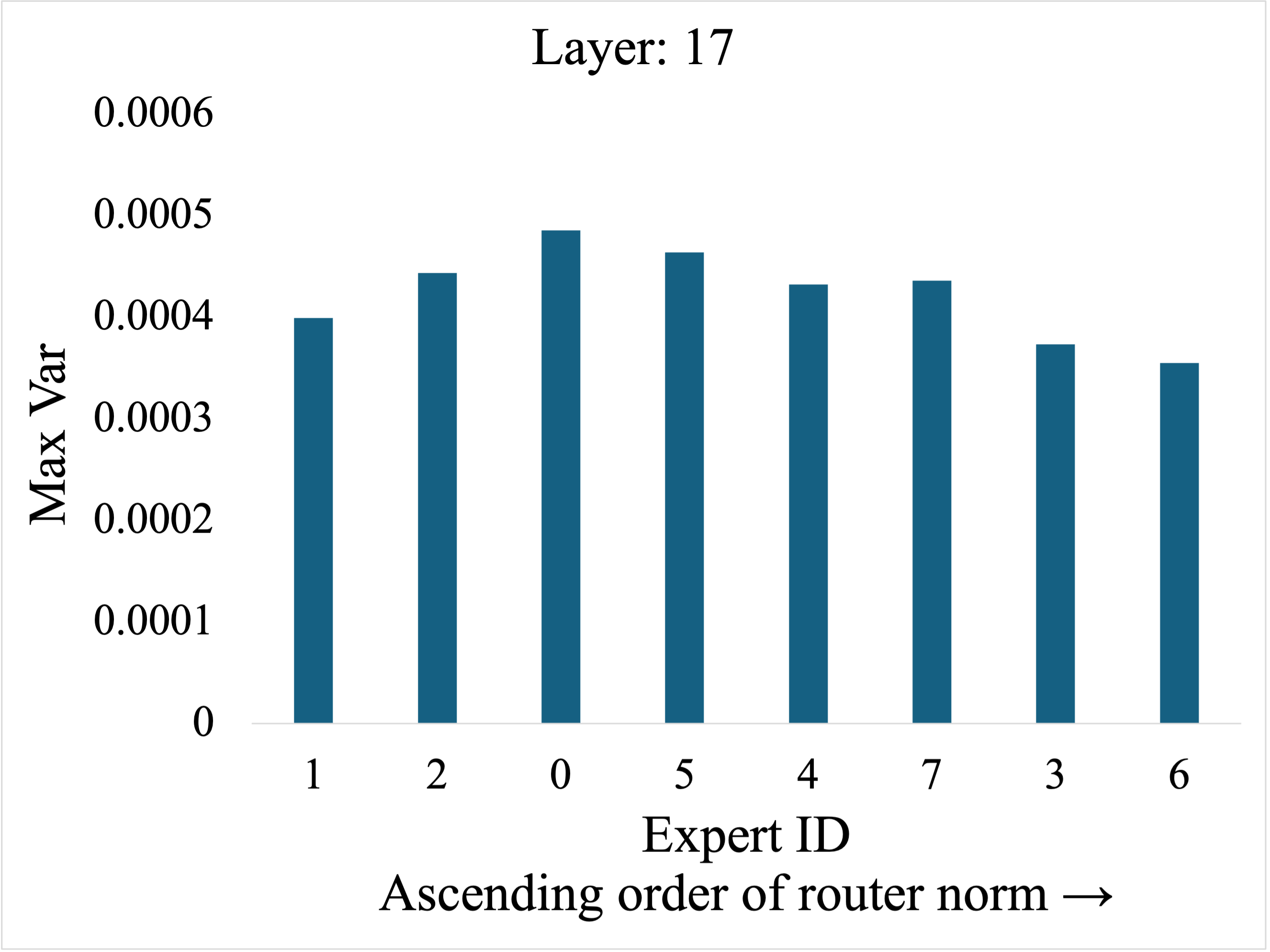}
\end{subfigure} &
\begin{subfigure}{0.22\textwidth}
    \includegraphics[width=\linewidth]{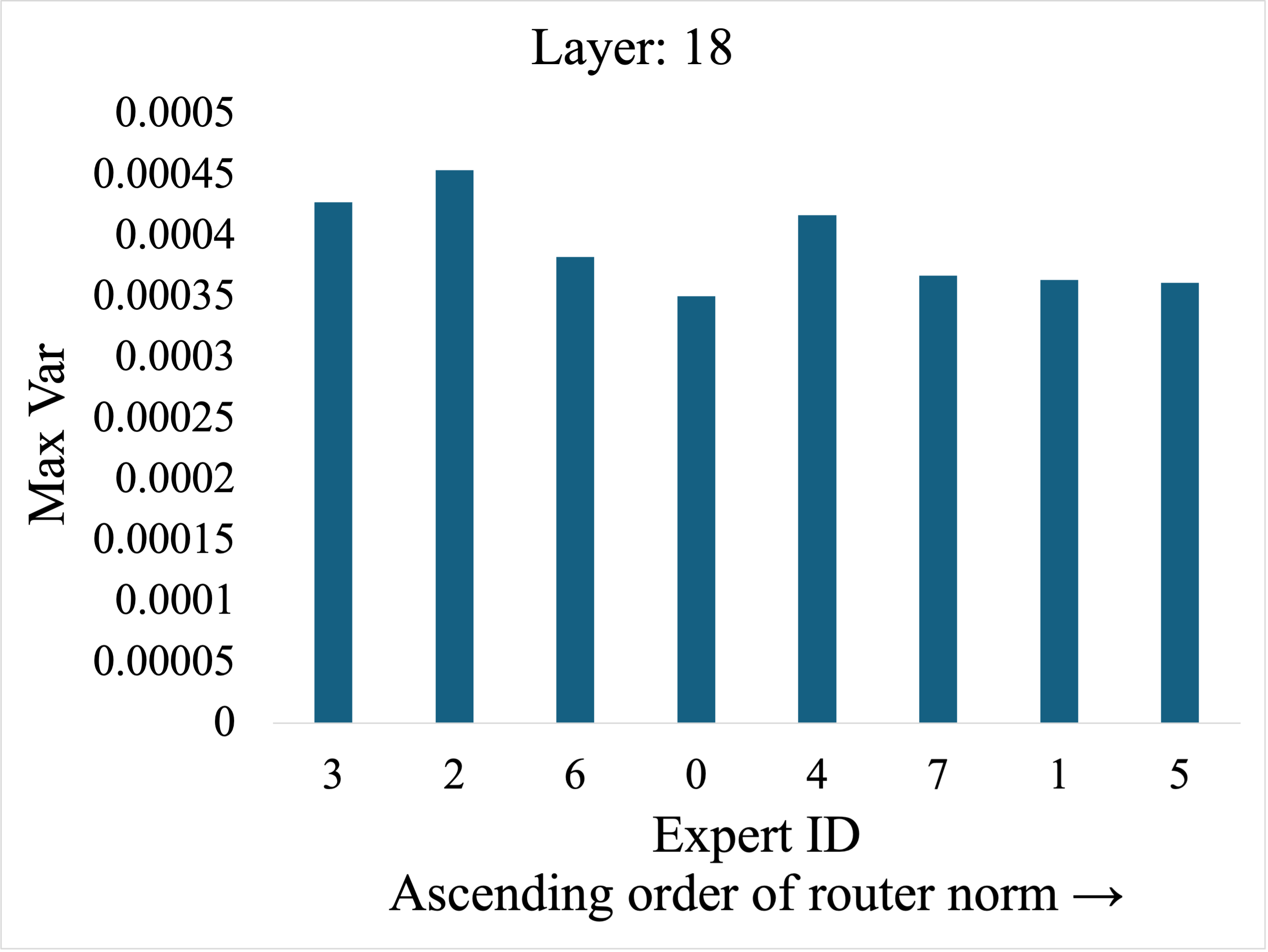}
\end{subfigure} &
\begin{subfigure}{0.22\textwidth}
    \includegraphics[width=\linewidth]{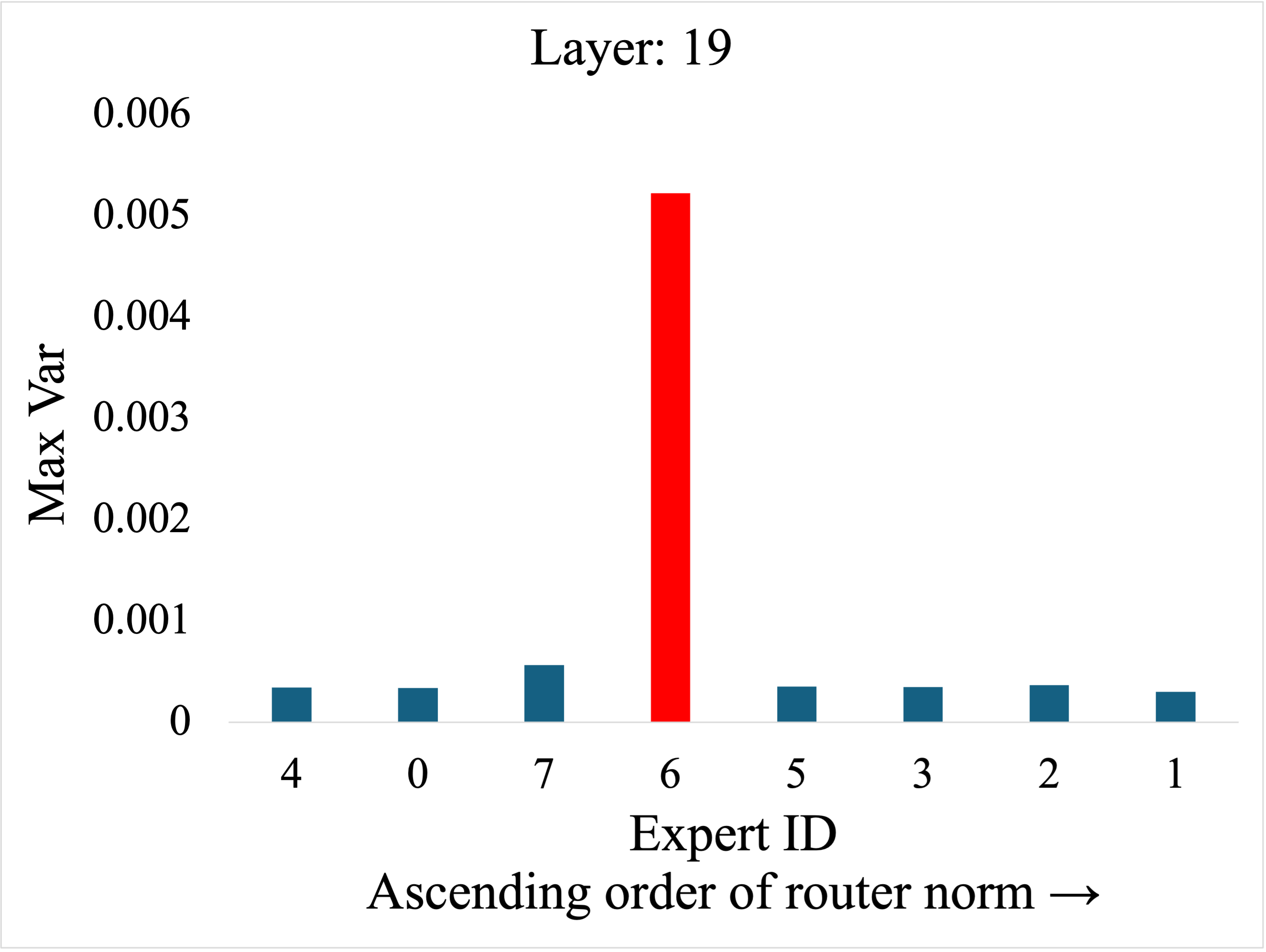}
\end{subfigure} \\[3pt]

\begin{subfigure}{0.22\textwidth}
    \includegraphics[width=\linewidth]{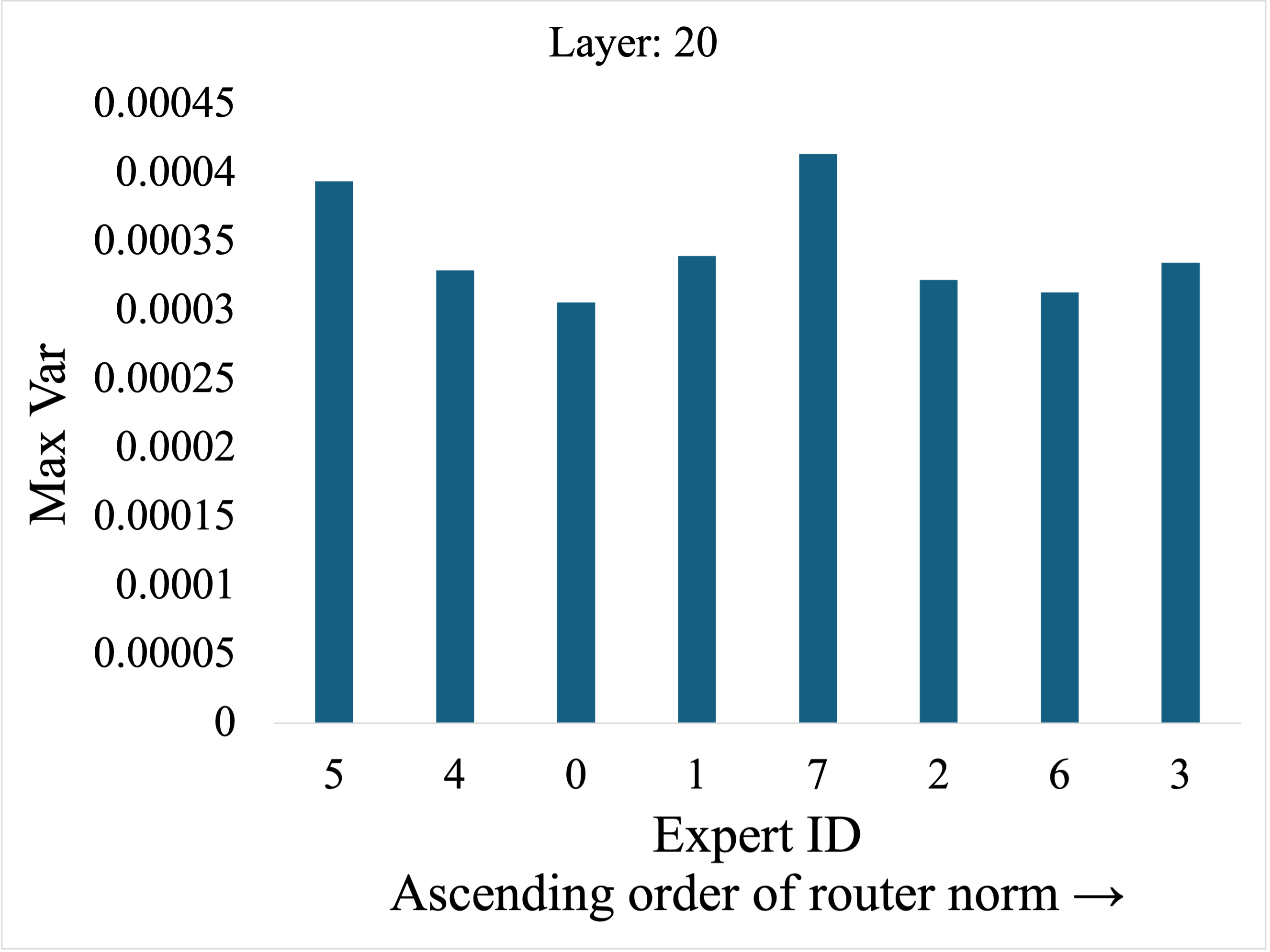}
\end{subfigure} &
\begin{subfigure}{0.22\textwidth}
    \includegraphics[width=\linewidth]{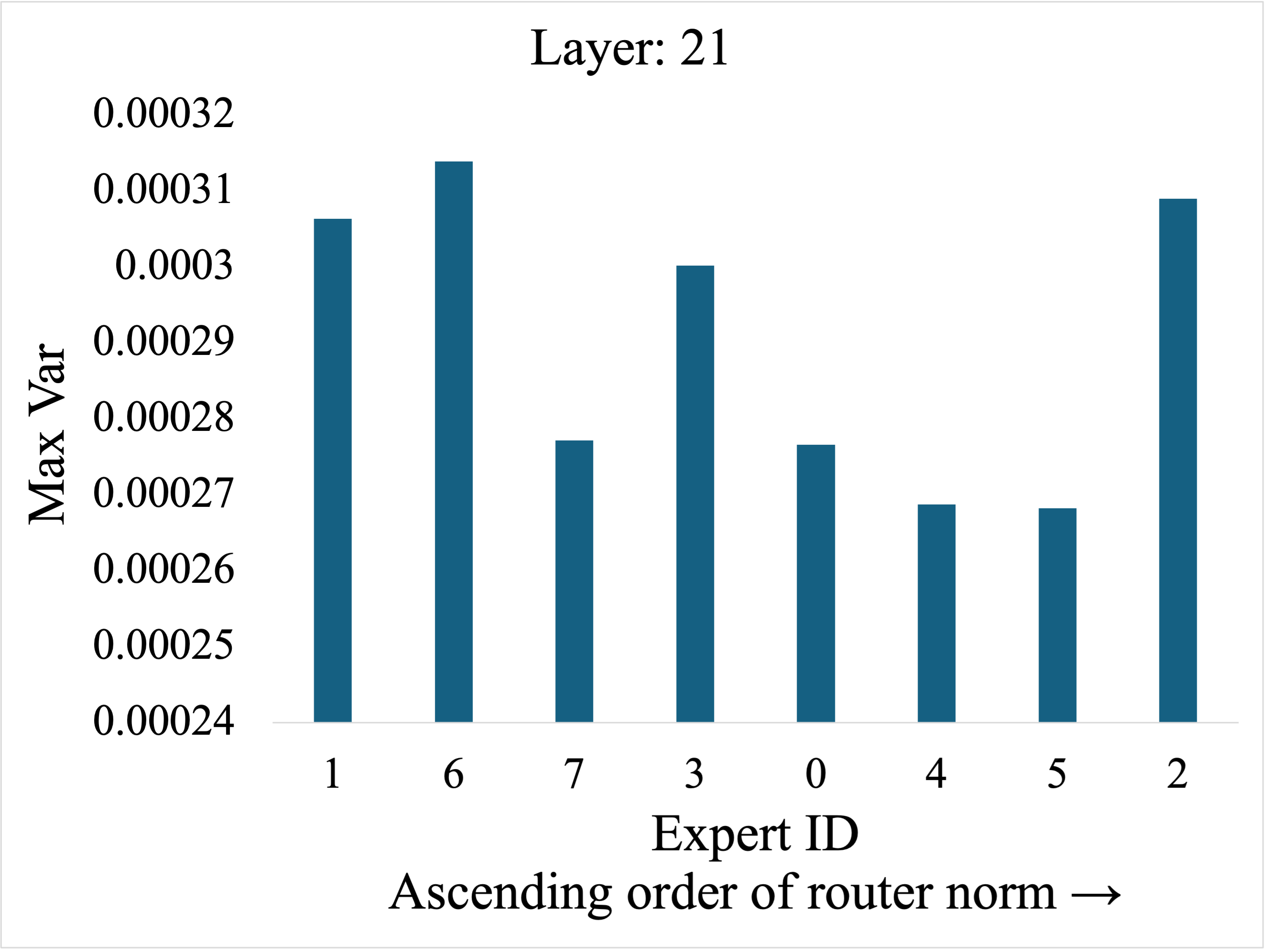}
\end{subfigure} &
\begin{subfigure}{0.22\textwidth}
    \includegraphics[width=\linewidth]{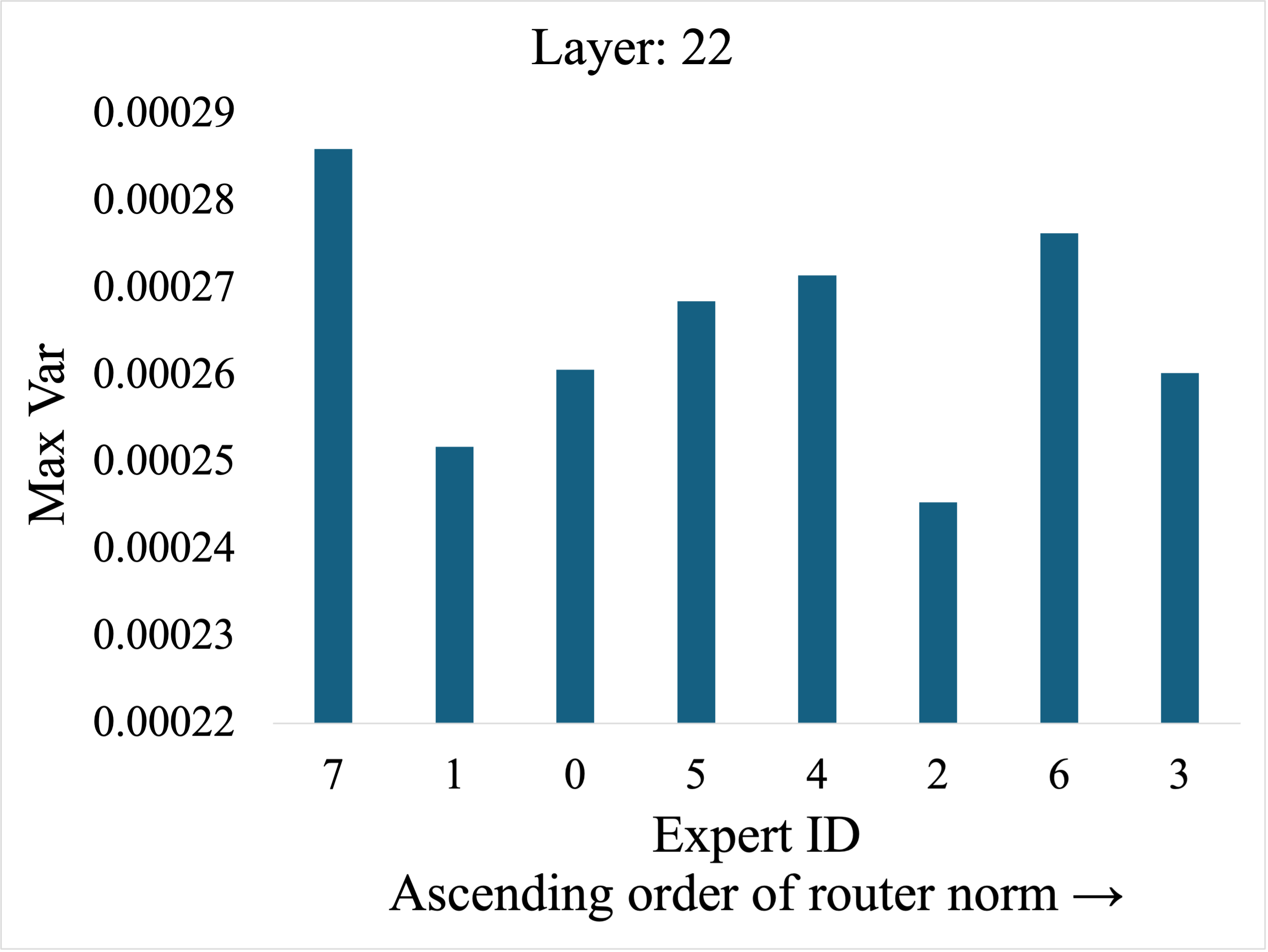}
\end{subfigure} &
\begin{subfigure}{0.22\textwidth}
    \includegraphics[width=\linewidth]{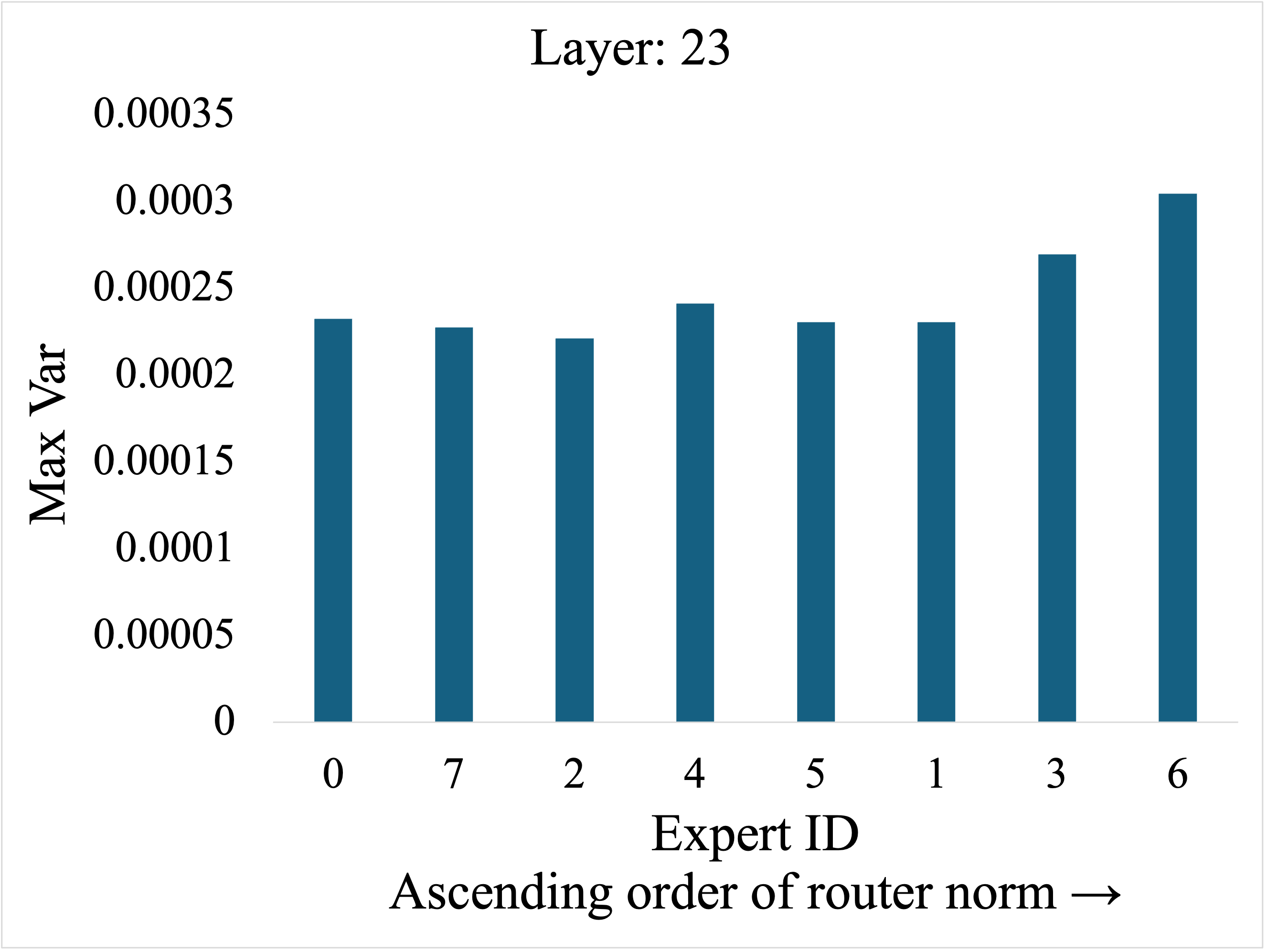}
\end{subfigure} \\[3pt]

\begin{subfigure}{0.22\textwidth}
    \includegraphics[width=\linewidth]{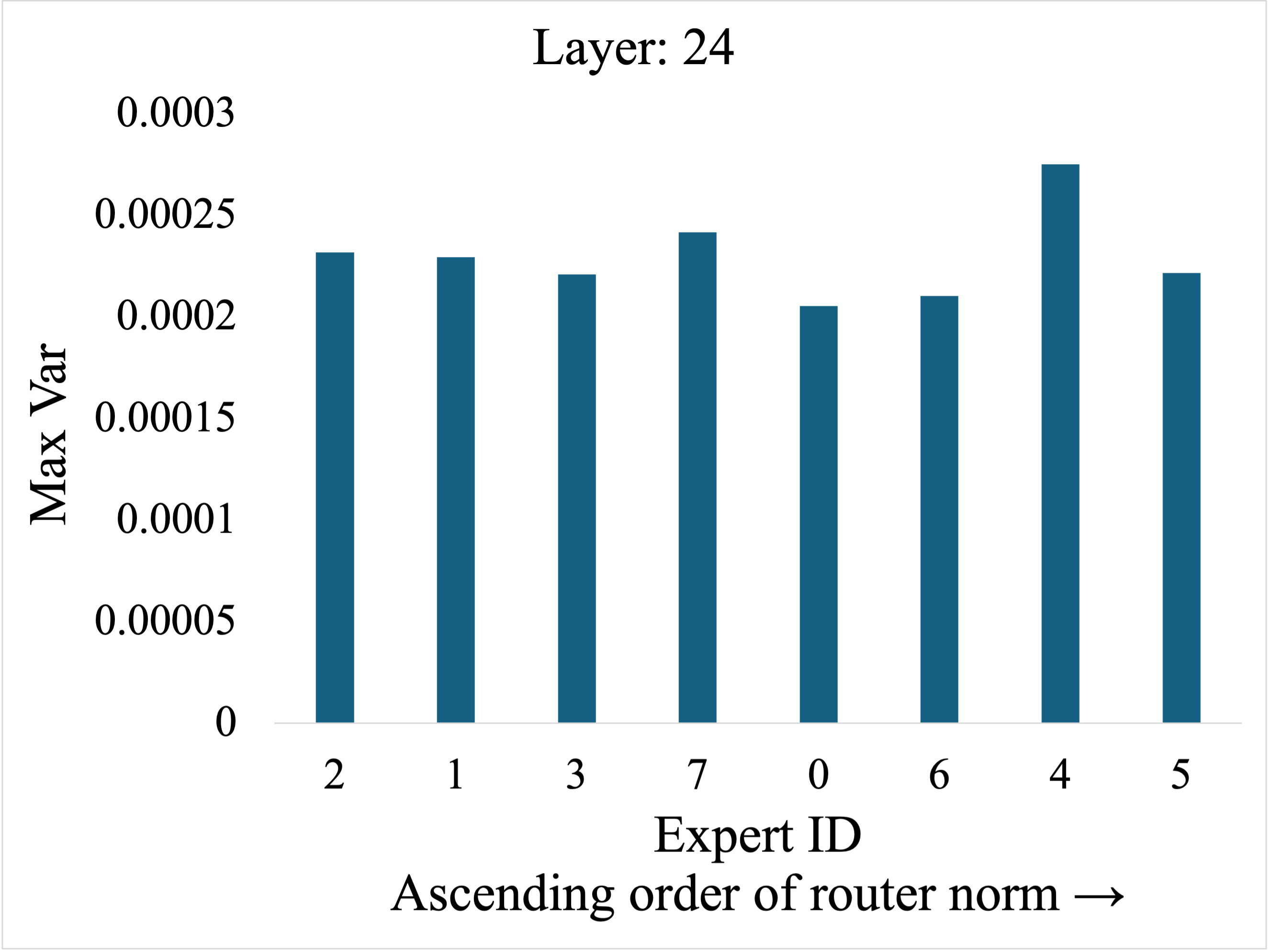}
\end{subfigure} &
\begin{subfigure}{0.22\textwidth}
    \includegraphics[width=\linewidth]{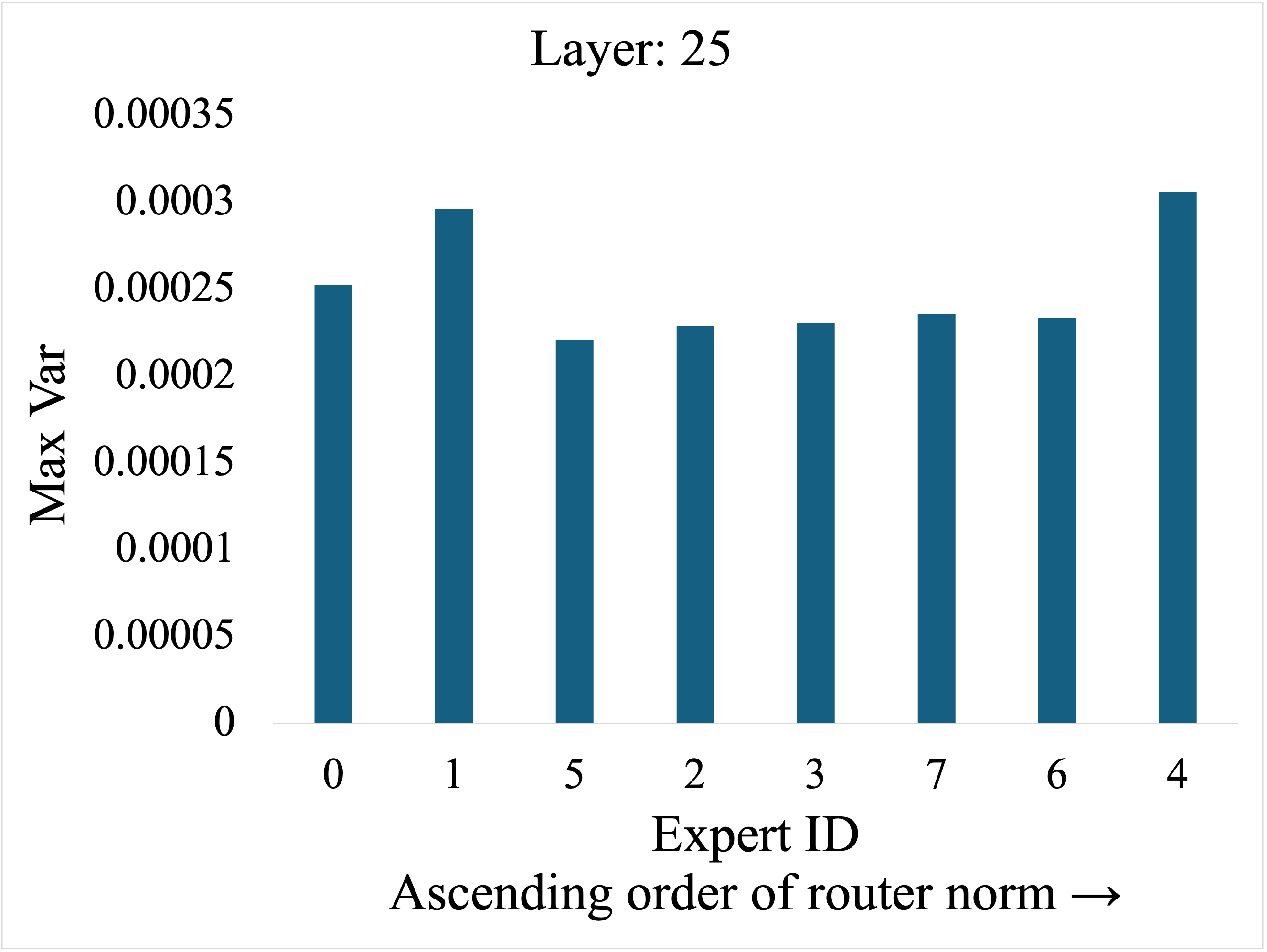}
\end{subfigure} &
\begin{subfigure}{0.22\textwidth}
    \includegraphics[width=\linewidth]{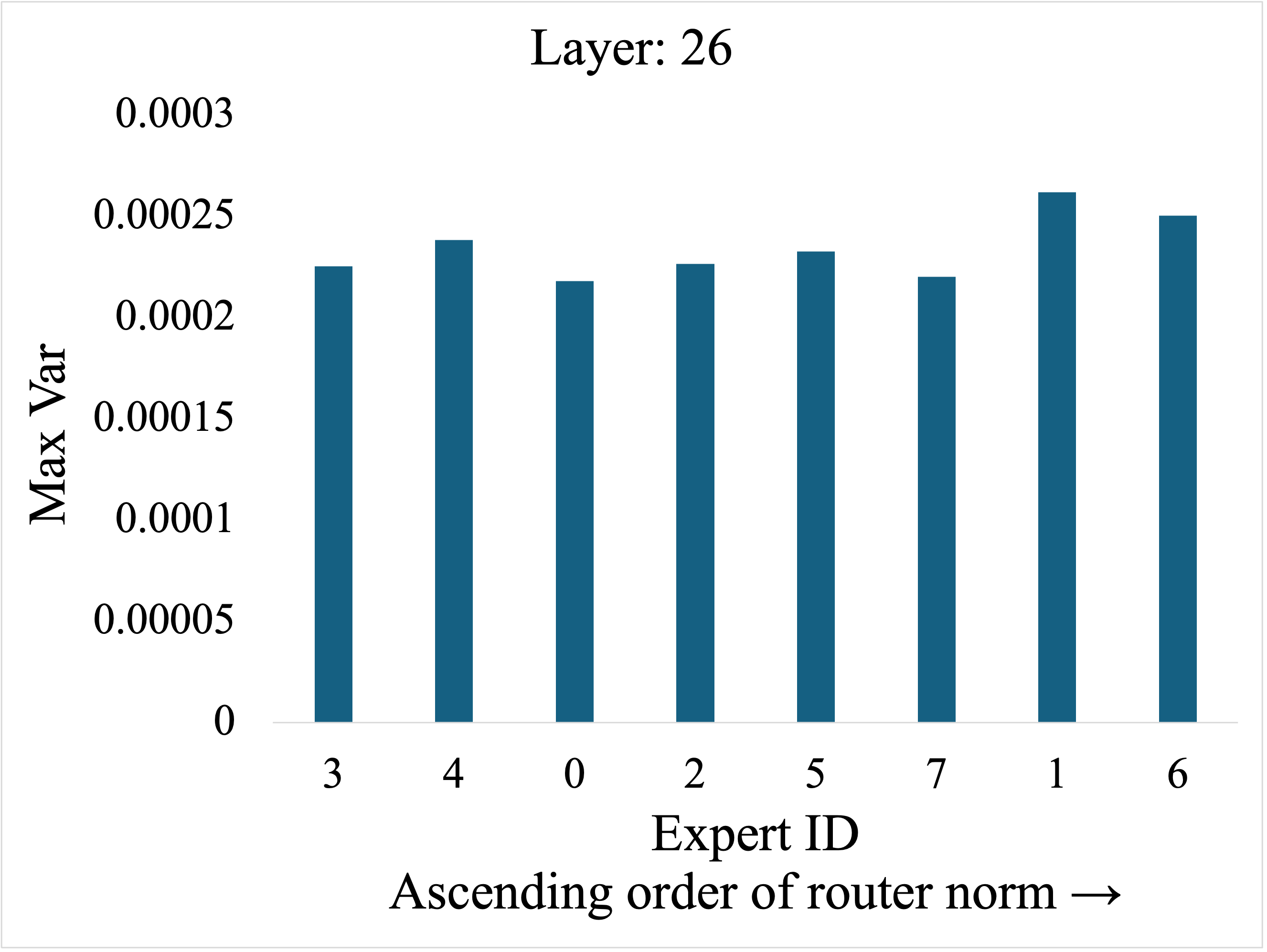}
\end{subfigure} &
\begin{subfigure}{0.22\textwidth}
    \includegraphics[width=\linewidth]{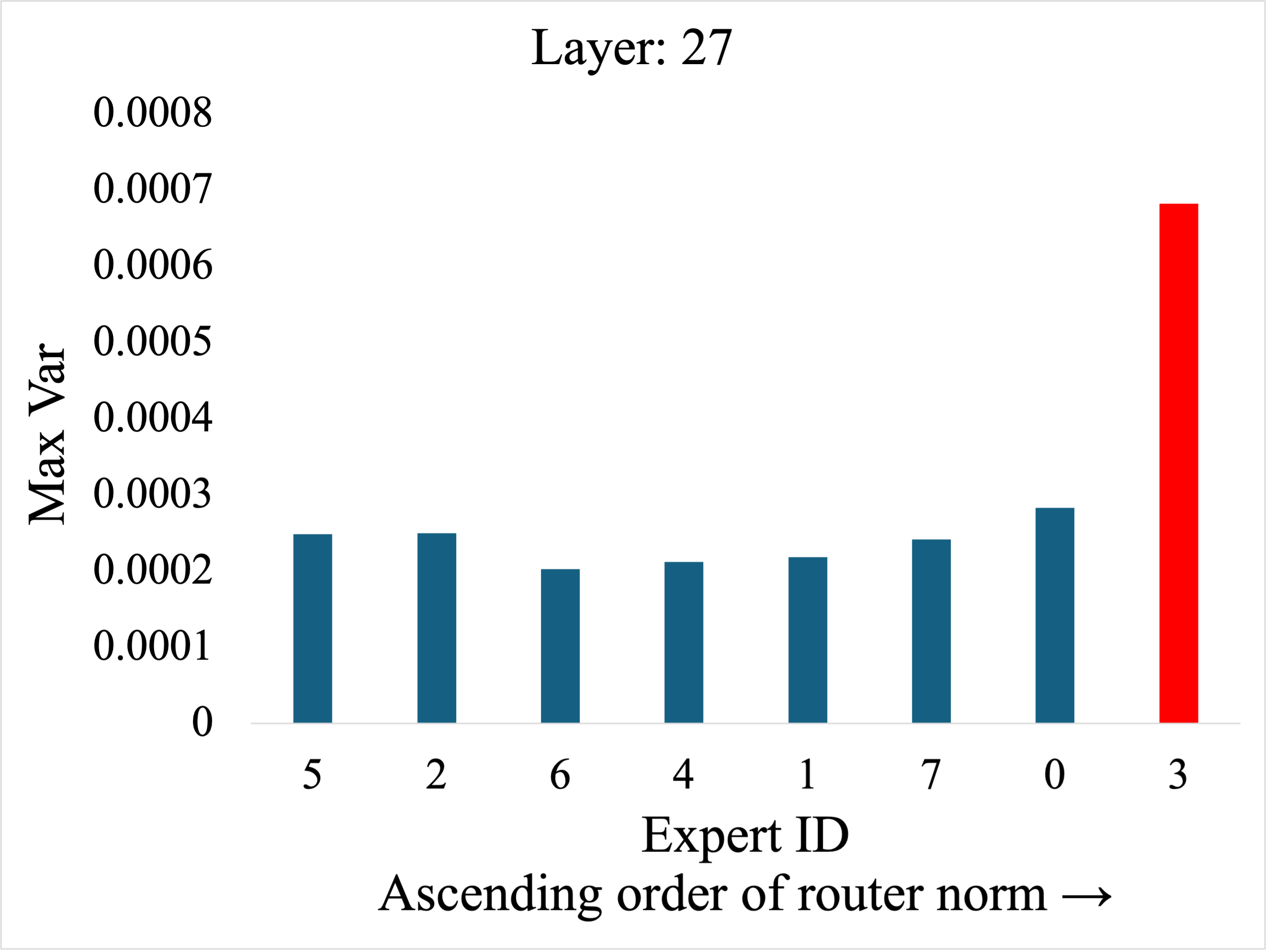}
\end{subfigure} \\[3pt]

\begin{subfigure}{0.22\textwidth}
    \includegraphics[width=\linewidth]{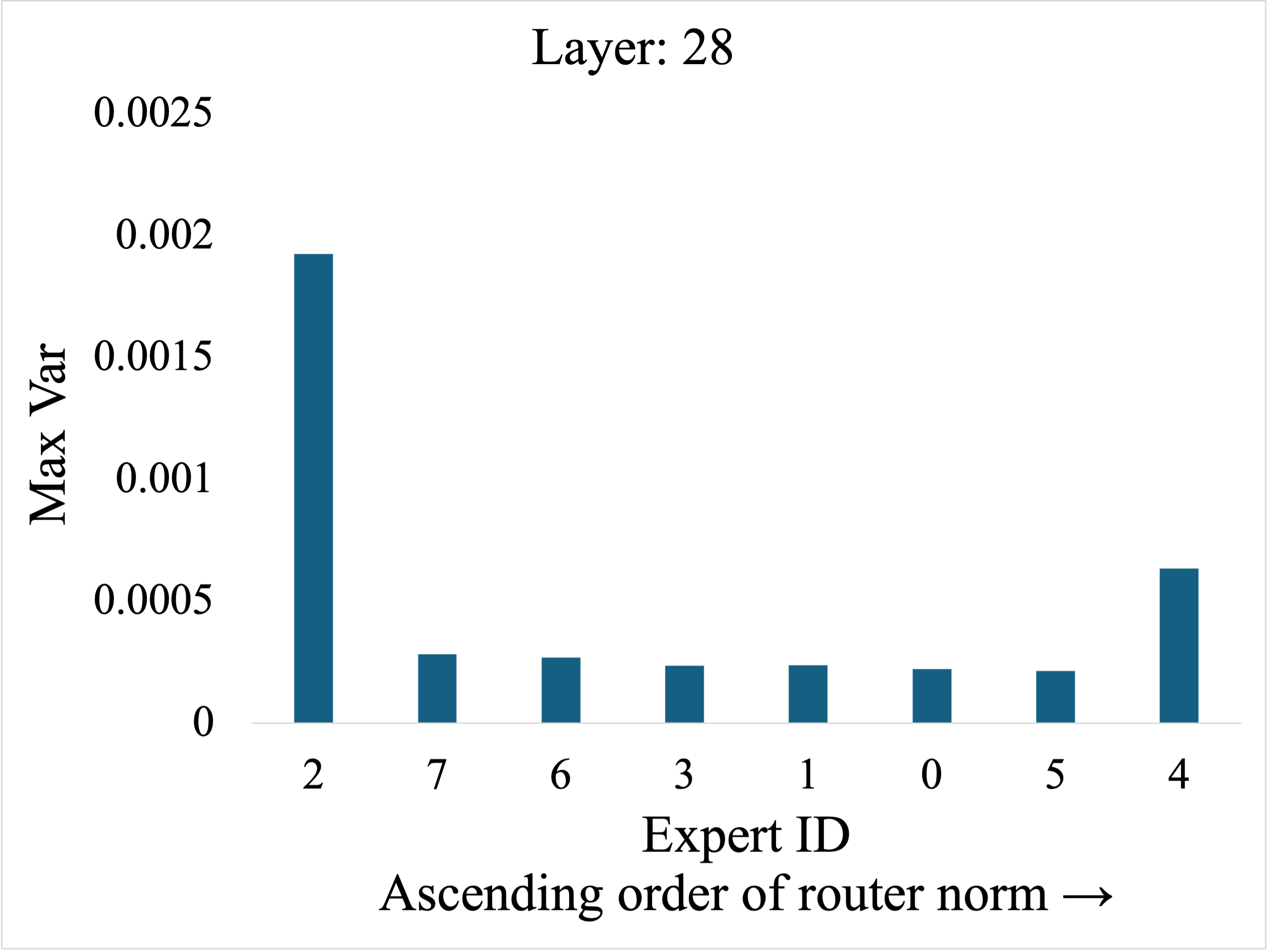}
\end{subfigure} &
\begin{subfigure}{0.22\textwidth}
    \includegraphics[width=\linewidth]{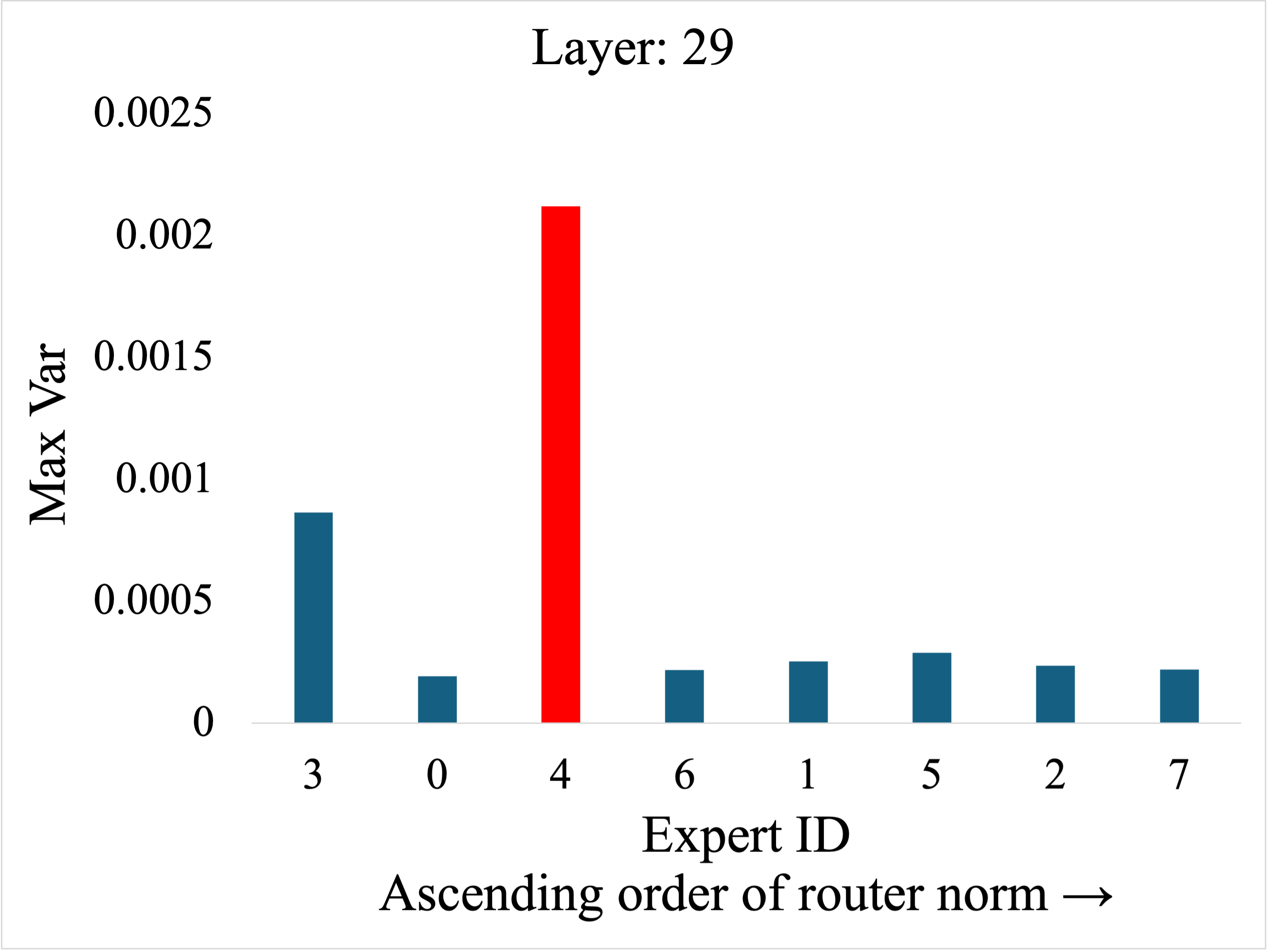}
\end{subfigure} &
\begin{subfigure}{0.22\textwidth}
    \includegraphics[width=\linewidth]{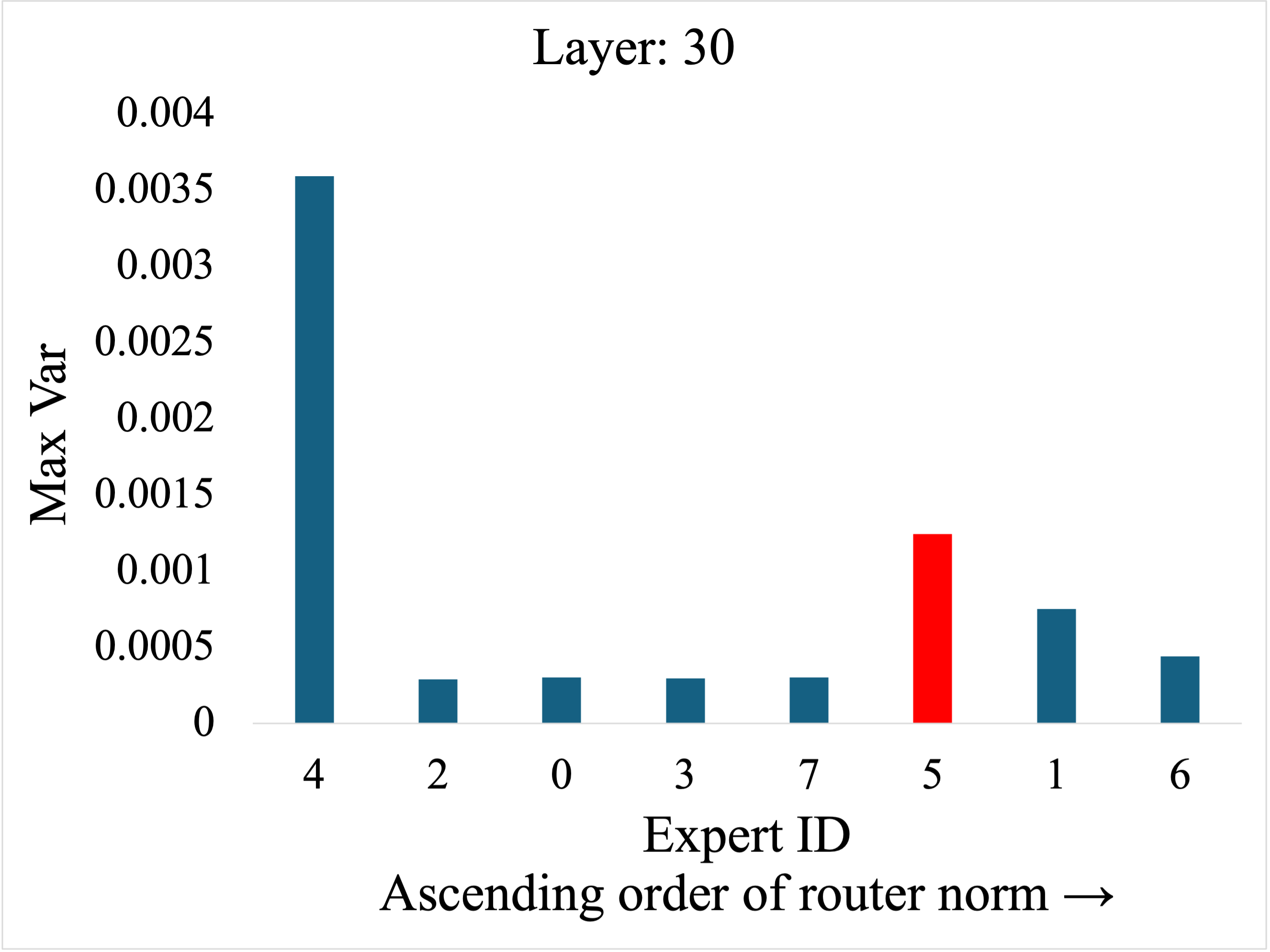}
\end{subfigure} &
\begin{subfigure}{0.22\textwidth}
    \includegraphics[width=\linewidth]{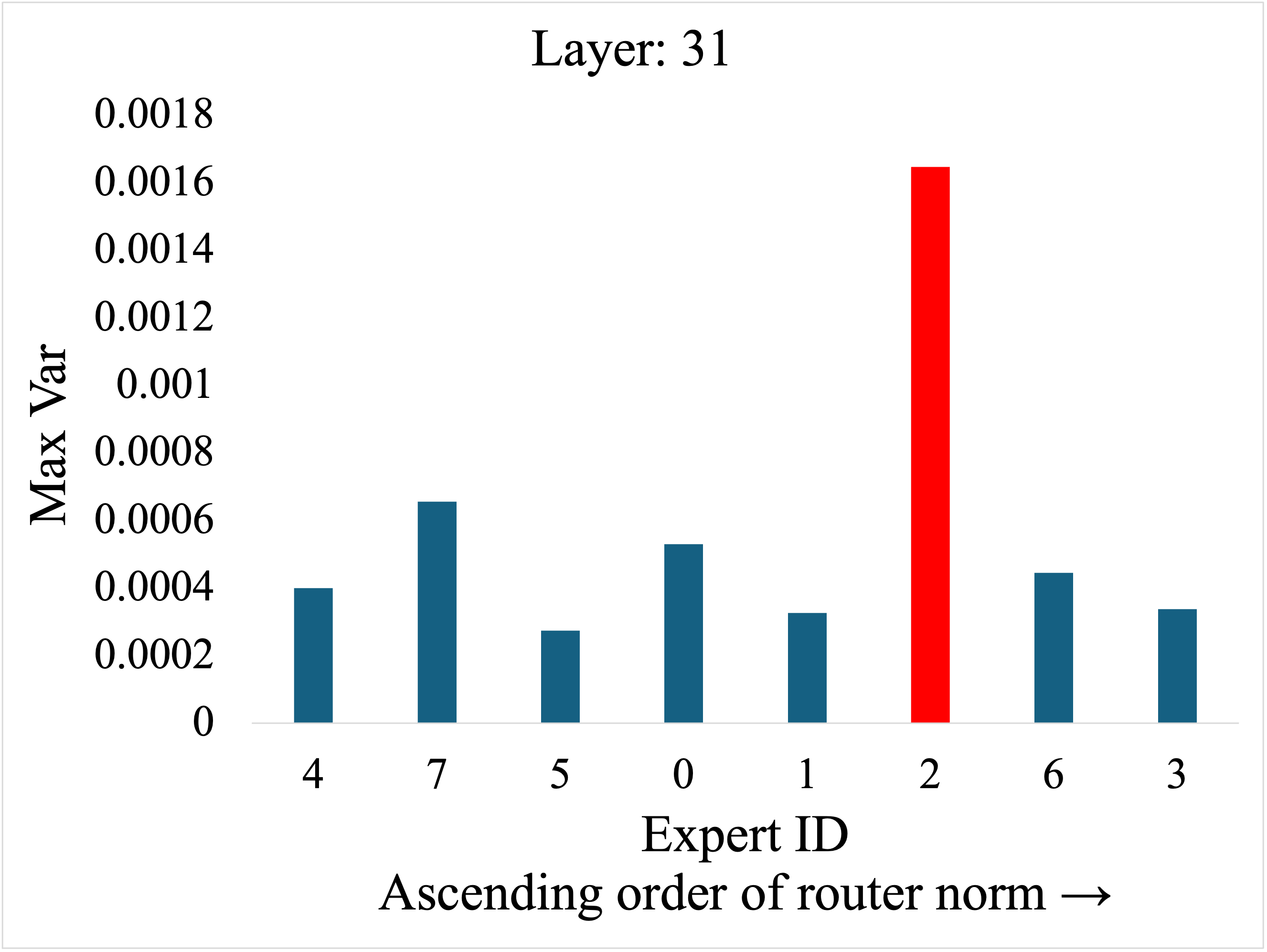}
\end{subfigure}

\end{tabular}

\caption{Visualization of the maximum intra-neuron variance of the experts in Mixtral 8x7B. Only a handful of experts (11 out of 256, colored in red) have unusually large MaxVar values.}
\label{fig:all_layers}
\end{figure}

\clearpage

\section{Verification of theoretical insights in practical MoE models}

For accurate verification of our theoretical insights, we first need to identify the task-relevant tokens of different experts, which is hard to determine in practical MoE models. This is because, for some experts, many tokens can be task-relevant, but each of them may appear less frequently in data. On the contrary, for some experts, only a few tokens can be relevant, but each of them may appear very frequently in data. We consider the tokens with high \textit{gating values} as the task-relevant tokens of each expert. 

\textbf{Larger router norm experts exhibit higher activation.} We verify our claim that larger router norm experts exhibit higher activation in practical MoE models. To verify that claim, we plot the average activations of tokens with top gating values (sampled from the WikiText2 dataset) for each expert in the first five layers of Mixtral 8x7B. The results are provided in Figure \ref{fig:activation_layers}.

\begin{figure*}[ht]
    \centering

    \begin{subfigure}{0.45\textwidth}
        \centering
        \includegraphics[width=\linewidth]{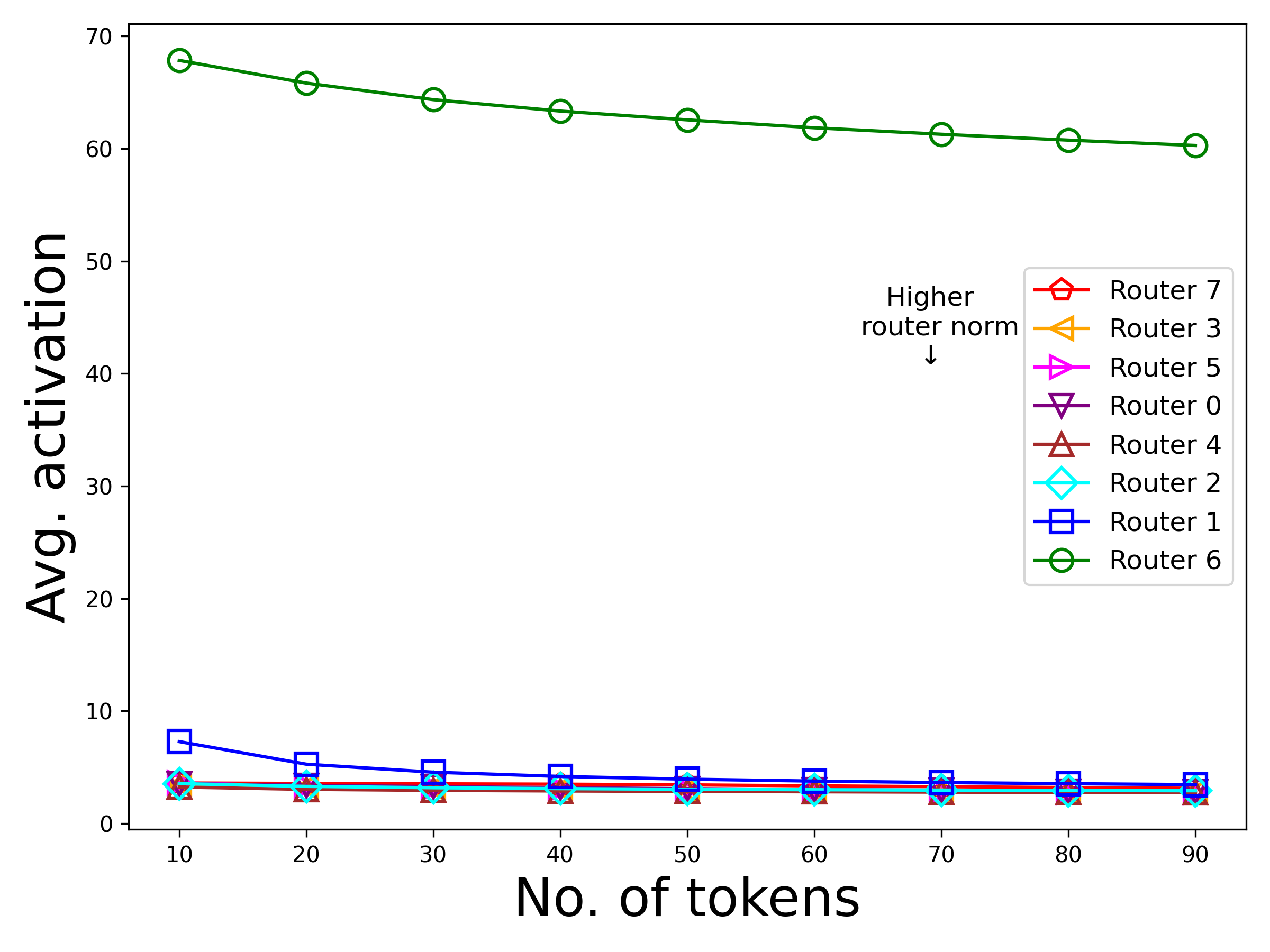}
        \caption{Layer 0}
    \end{subfigure}
    \hfill
    \begin{subfigure}{0.45\textwidth}
        \centering
        \includegraphics[width=\linewidth]{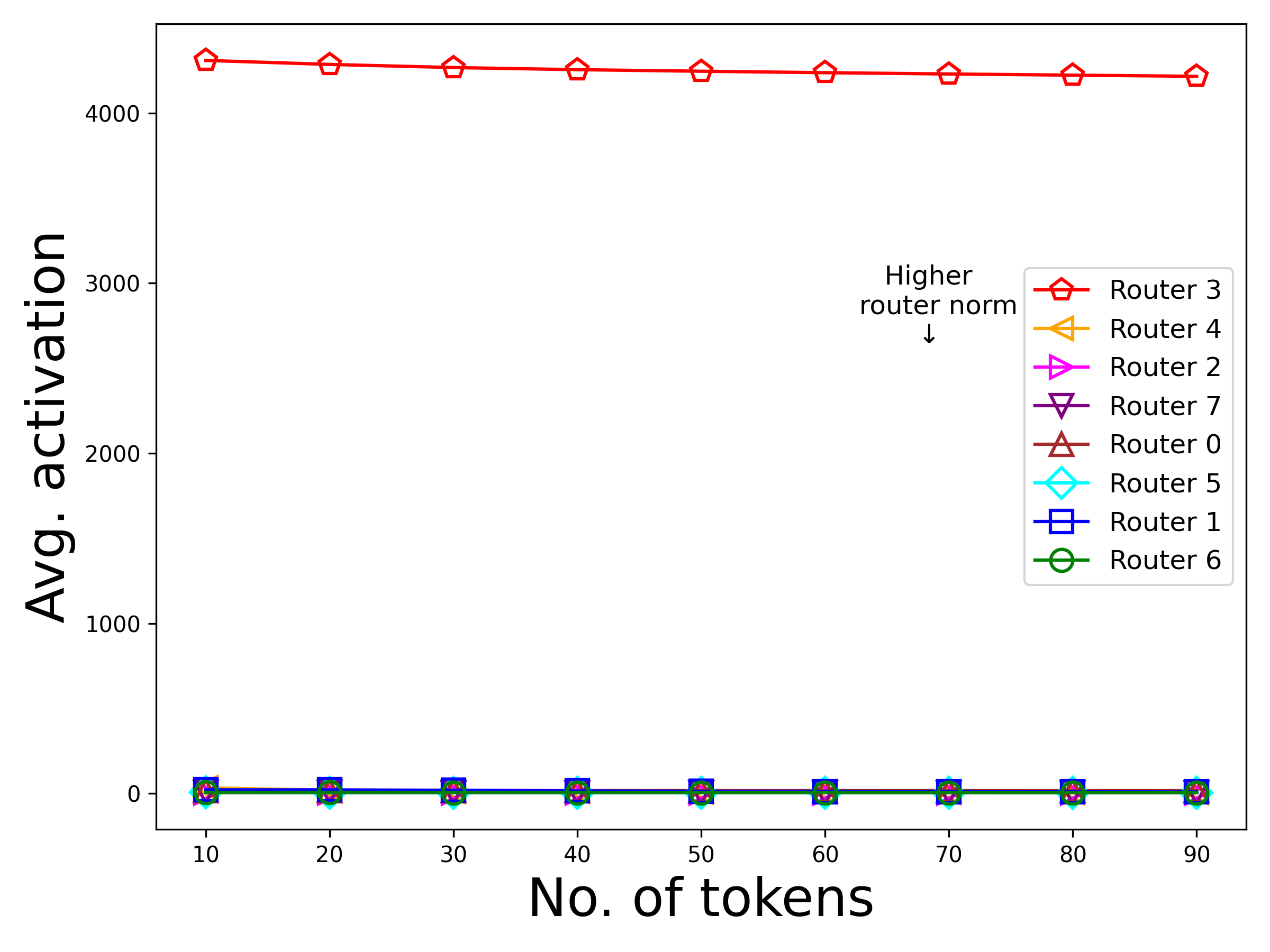}
        \caption{Layer 1}
    \end{subfigure}

    \vspace{0.5em}

    \begin{subfigure}{0.45\textwidth}
        \centering
        \includegraphics[width=\linewidth]{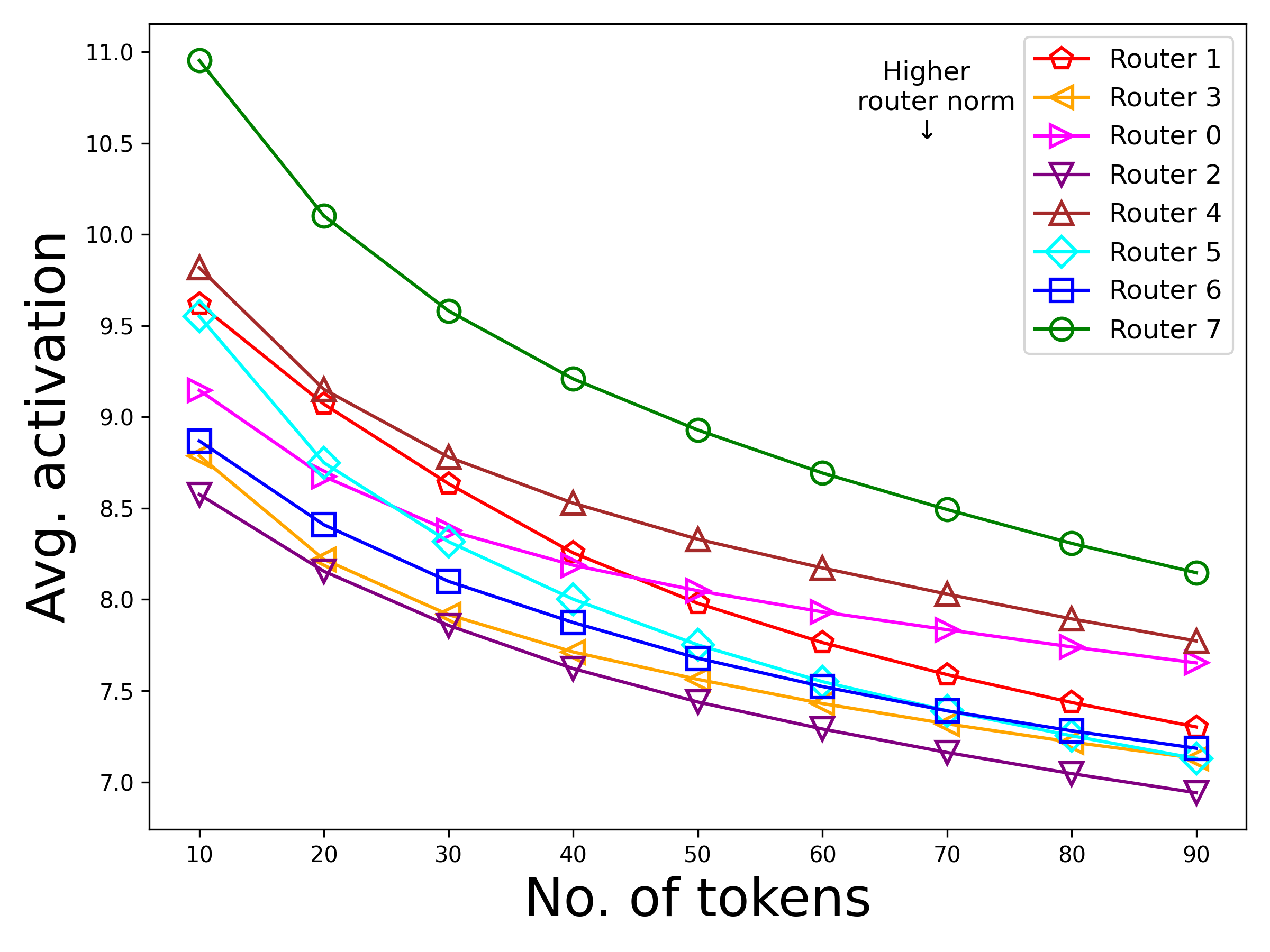}
        \caption{Layer 2}
    \end{subfigure}
    \hfill
    \begin{subfigure}{0.45\textwidth}
        \centering
        \includegraphics[width=\linewidth]{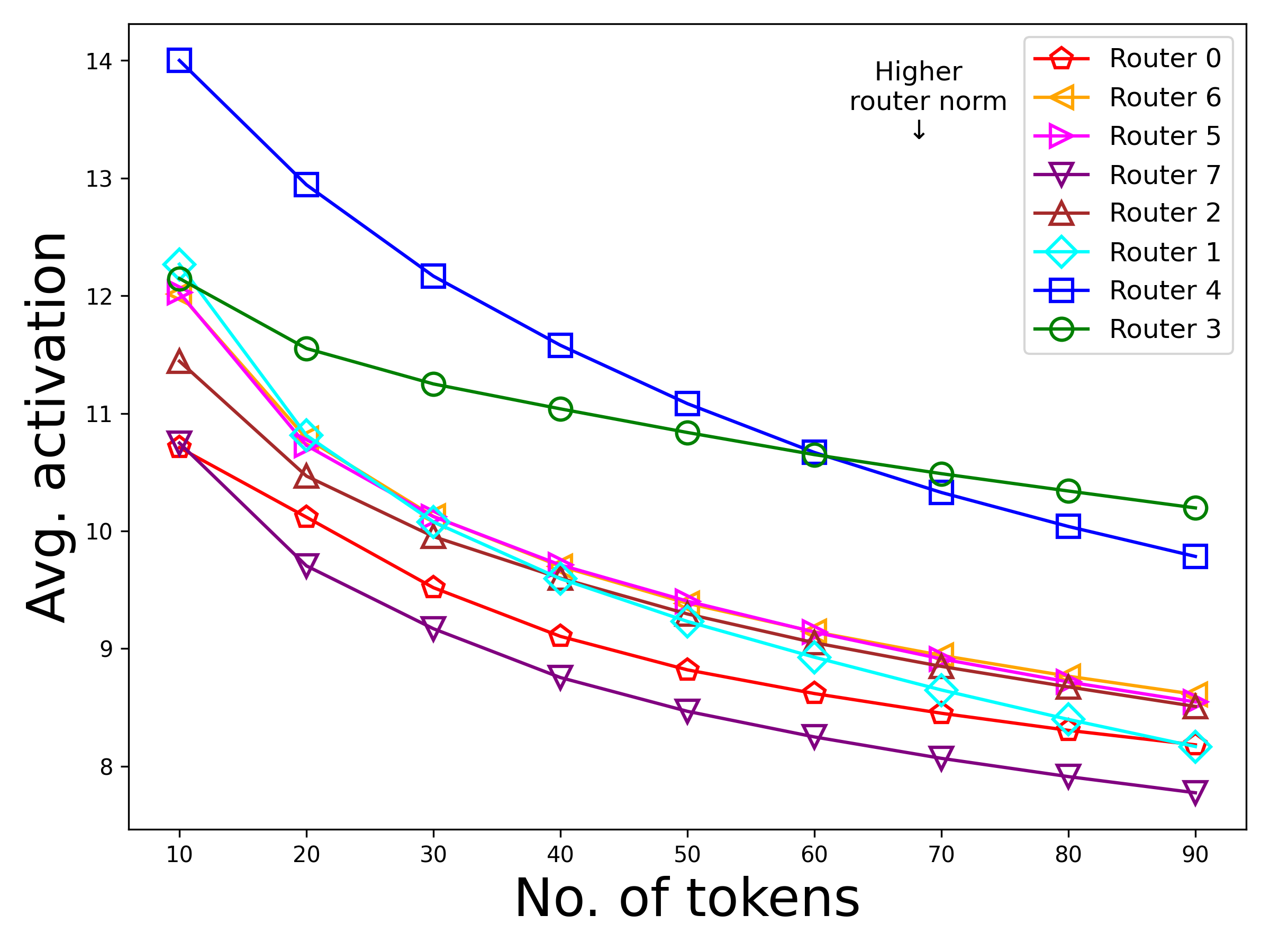}
        \caption{Layer 3}
    \end{subfigure}

    \vspace{0.5em}

    \begin{subfigure}{0.45\textwidth}
        \centering
        \includegraphics[width=\linewidth]{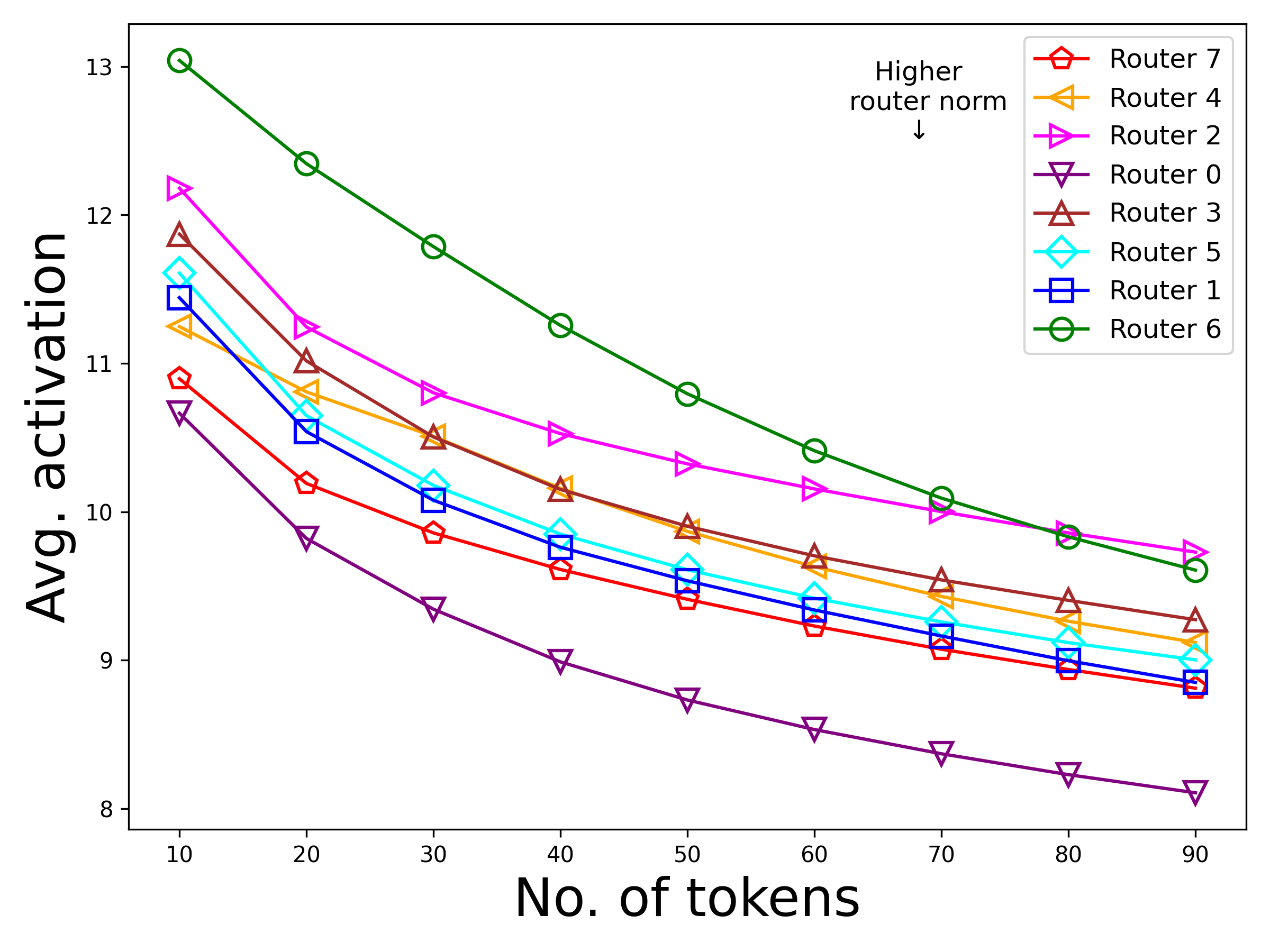}
        \caption{Layer 4}
    \end{subfigure}

    \caption{Average activation over tokens with top gating values of different experts of Mixtral 8x7B. The top one/two router norm experts exhibit larger activations.}
    \label{fig:activation_layers}
\end{figure*}

As we can see, the largest router norm expert (in some cases, the largest two) always exhibits significantly large average activation compared to other experts, except for the second layer. In this case, the lowest one has unusually high activation. However, from our maximum intra-neuron variance, i.e., $\mathrm{MaxVar}$ visualization given in Appendix \ref{maxvar_vis}, we can see that this expert has an unusually large $\mathrm{MaxVar}$ value than other experts. Therefore, this expert will be placed on higher bit regardless of its position in the router norm order according to our method.

Finally, we visualize the top gating value tokens (sampled from the WikiText2 dataset) through their corresponding model input embeddings for the first MoE block (closest to the model input embeddings) of Mixtral 8x7B. The visualization of the first few tokens with top gating values (highlighted in yellow) of the smallest, and the first two largest router norm experts are provided in Figure \ref{fig:token_vis}, along with their adjacent tokens.

\begin{figure*}[h]
\centering

\begin{subfigure}{0.7\textwidth}
    \centering
    \includegraphics[width=\linewidth]{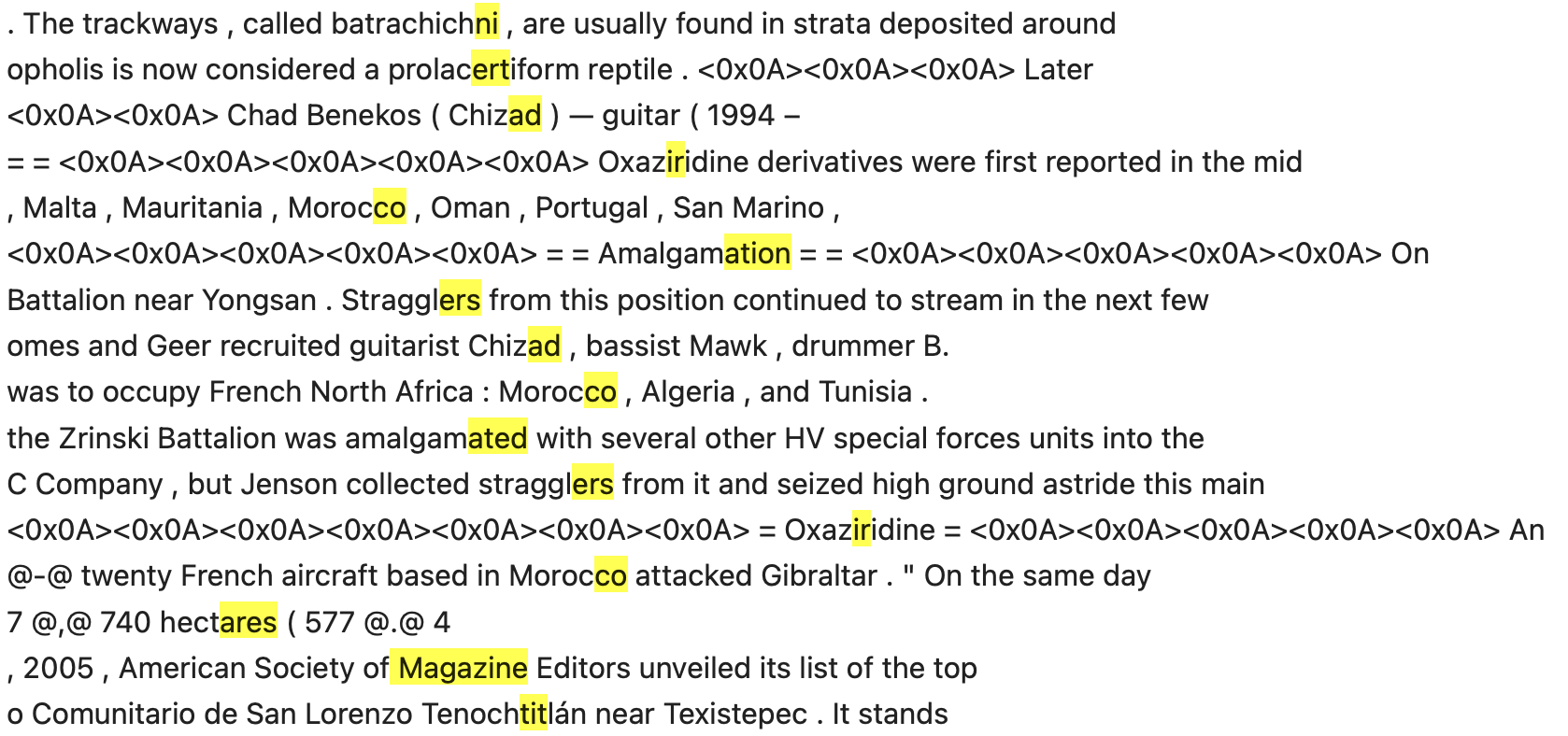}
    \caption{Expert 7 (smallest router norm)}
\end{subfigure}

\vspace{6pt} 

\begin{subfigure}{0.7\textwidth}
    \centering
    \includegraphics[width=\linewidth]{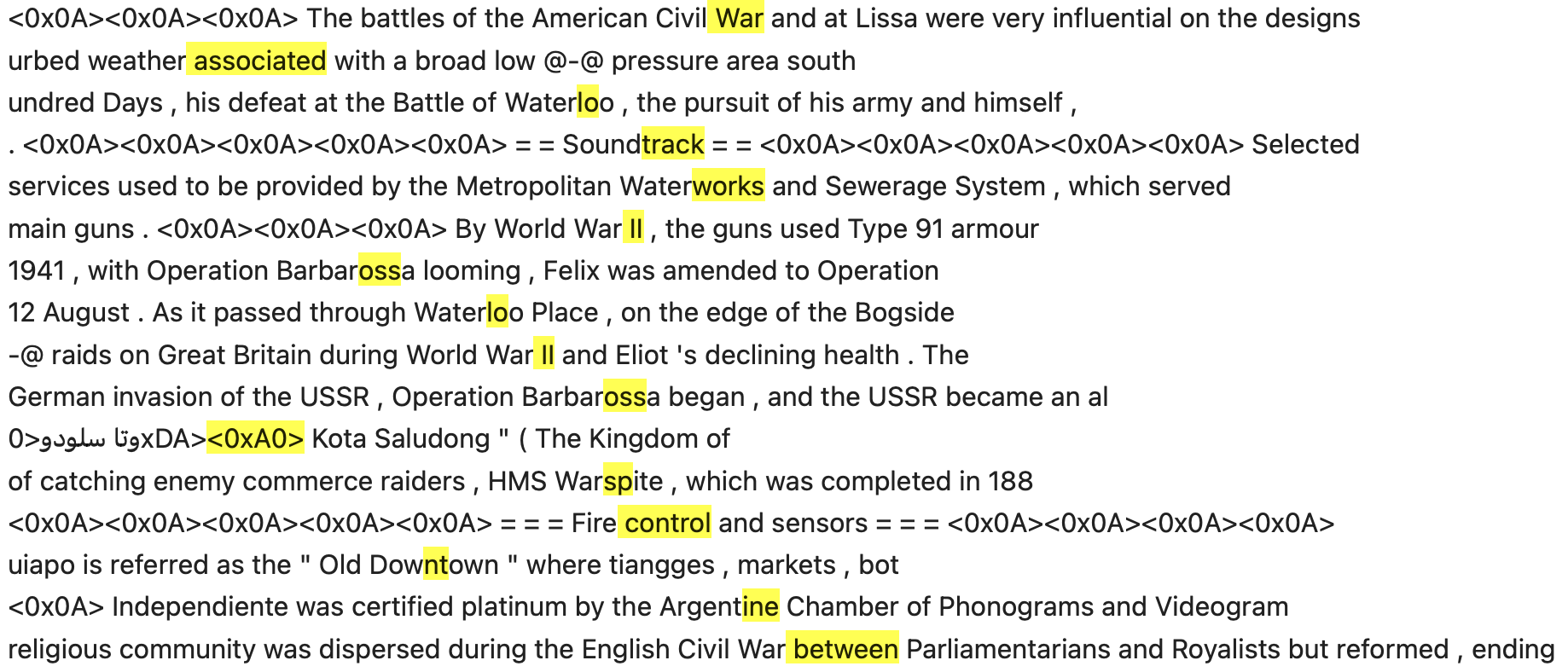}
    \caption{Expert 1 (second largest router norm)}
\end{subfigure}

\vspace{6pt} 

\begin{subfigure}{0.7\textwidth}
    \centering
    \includegraphics[width=\linewidth]{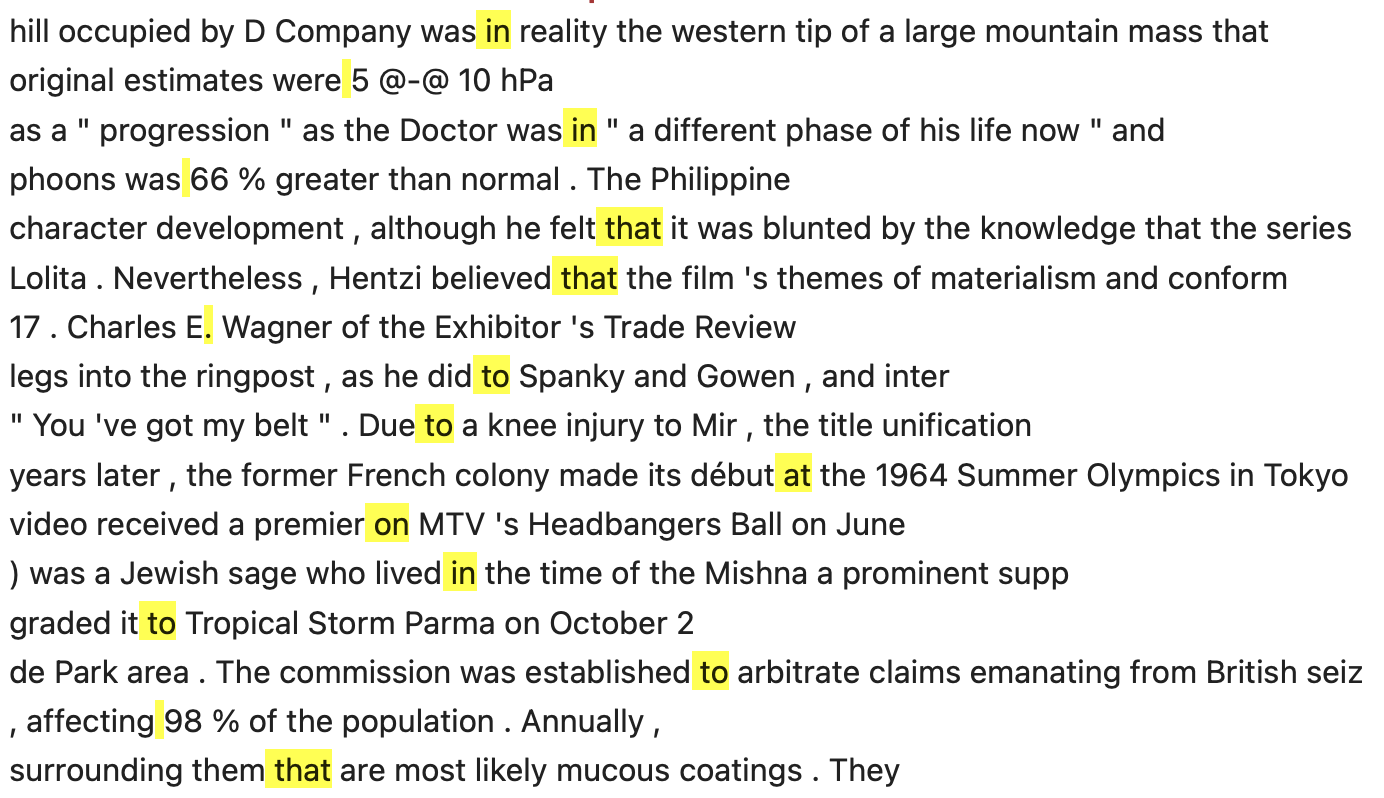}
    \caption{Expert 6 (largest router norm)}
\end{subfigure}

\caption{Token visualization of the smallest and the largest two router norm experts}
\label{fig:token_vis}
\end{figure*}

As we can see, the lowest router norm expert (Expert-7) activates on subwords of unusual names/nouns (e.g., batrachich\textbf{ni}, prolac\textbf{erti}form, Chiz\textbf{ad}, Oxaz\textbf{iri}dine, Moroc\textbf{co}, Amalgam\textbf{ation}, Straggl\textbf{ers}, hect\textbf{ares}, Tenoch\textbf{tit}lán). Each of them is rare in data, but can be critical in the context. This verifies our intuition that the lower router norm experts learn critical but infrequent tokens. On the other hand, the largest router norm expert (Expert-6) activates on many common full words of the English language, such as pronouns (e.g., \textbf{that}), prepositions (e.g., \textbf{to}, \textbf{in}, \textbf{on}), etc. The second largest router norm expert (Expert-1) activates on sentences implying war or military operations, which are common in many documents. This verifies our claim that the larger router norm experts learn \textit{more frequent} tokens.

\section{Theoretical justification for using final router norm as a surrogate for change in norm of pretrained MoE models}\label{th_fn_vs_chn}

As stated in section 3.3, for the experiments on zero-shot evaluation of pre-trained models, we propose to use the final router norm ($||w_s^{(T)}||$) to approximate the change in the router’s norm ($\Lambda_s^{(T)}:=||w_s^{(T)}||-||w_0^{(T)}||$), as the randomly initialized model is not publicly available for computing the initial router norm ($||w_s^{(0)}||$). The rationale behind the approximation comes from the fact that the initial routers are generally initialized randomly with small variance (e.g., parameters of DeepSeekMoE are initialized randomly with variance 0.000036 \citep{dai2024deepseekmoe}). In that case, the initial router norm differences among the routers are too small to alter the change of router norm based order when approximated by final router norm.

Specifically, for any two routers (router 1 and router 2), if router 1’s change in norm is larger than router 2’s change in norm, i.e., $\Lambda_1^{(T)}>\Lambda_2^{(T)}$, then

$\Lambda_1^{(T)}-\Lambda_2^{(T)}>0$

$\Rightarrow (||w_1^{(T)}||-||w_1^{(0)}||) - (||w_2^{(T)}||-||w_2^{(0)}||)>0$

$\Rightarrow (||w_1^{(T)}||-||w_2^{(T)}||) - (||w_1^{(0)}||-||w_2^{(0)}||)>0$

Now, due to the small-variance initialization, $\big| ||w_1^{(0)}||-||w_2^{(0)}|| \big|$ is a very small quantity. Therefore, it is highly likely that $||w_1^{(T)}||-||w_2^{(T)}||>0$, as long as $\Lambda_1^{(T)}-\Lambda_2^{(T)}$ is not too close to zero.  

Based on the above intuition, we provide a formal theorem to justify the claim:

\begin{theorem}
    Let the routers of the initial model be randomly initialized from $\mathcal{N}(0,\sigma^2)$ with $\sigma=O(1/d)$. Then, with probability at least $1-\frac{1}{d^2}$, for any two routers $s_1, s_2\in[k]$ such that $\Lambda_{s_1}^{(T)}-\Lambda_{s_2}^{(T)}=\Omega(1/\sqrt{d})$ we have $||w_{s_1}^{(T)}||>||w_{s_2}^{(T)}||$.
\end{theorem}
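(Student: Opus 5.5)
The plan is to reduce the claim to a uniform concentration statement about the initial router norms. Starting from the identity already derived above,
\begin{equation*}
\|w_{s_1}^{(T)}\|-\|w_{s_2}^{(T)}\| = \big(\Lambda_{s_1}^{(T)}-\Lambda_{s_2}^{(T)}\big) + \big(\|w_{s_1}^{(0)}\|-\|w_{s_2}^{(0)}\|\big),
\end{equation*}
it suffices to show the following. With probability at least $1-1/d^2$, the quantity $\big|\|w_{s}^{(0)}\|-\|w_{s'}^{(0)}\|\big|$ is $o(1/\sqrt d)$ simultaneously for all pairs $s,s'\in[k]$. Once this holds, it is strictly smaller than the assumed gap $\Omega(1/\sqrt d)$ (for $d$ large enough, adjusting the hidden constant), and the sign of the final-norm difference agrees with the sign of $\Lambda_{s_1}^{(T)}-\Lambda_{s_2}^{(T)}$.

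For the concentration step, each $w_s^{(0)}\sim\mathcal N(0,\sigma^2 I_d)$. The map $w\mapsto\|w\|$ is $1$-Lipschitz, so Gaussian concentration of Lipschitz functions gives
\begin{equation*}
\mathbb P\big[\,|\|w_s^{(0)}\|-\mathbb E\|w_s^{(0)}\||\ge u\,\big]\le 2e^{-u^2/(2\sigma^2)}.
\end{equation*}
All routers share the same mean $\mathbb E\|w_s^{(0)}\|$, so the triangle inequality bounds every pairwise difference by $2u$. A union bound over the $k$ routers then requires $2k e^{-u^2/(2\sigma^2)}\le 1/d^2$, i.e.\ $u=\sigma\sqrt{2\log(2kd^2)}$. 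With $\sigma=O(1/d)$ this gives $u=O(\sqrt{\log(kd)}/d)$, which is $o(1/\sqrt d)$ whenever $\log k = o(d)$. That condition holds in the regime of the paper, where $k$ is at most polynomial in $d$.

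The step I expect to need the most care is this final comparison of scales. The bound $O(\sqrt{\log(kd)}/d)$ must fall strictly below the constant times $1/\sqrt d$ hidden in the hypothesis, which requires $d\gtrsim \log(kd)$. I would state this explicitly as a mild assumption, which is consistent with $d=\Omega(L^8)$ and a polynomially bounded number of experts, and absorb the constants into the $\Omega(\cdot)$.

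I would also note that the argument uses only the randomness of the initialization, with the trained routers treated as given. The event is defined purely in terms of $\{w_s^{(0)}\}$, so the guarantee holds for the realized $w^{(T)}$ regardless of any dependence between $w^{(0)}$ and $w^{(T)}$.
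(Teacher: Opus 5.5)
Your argument is correct and uses the same decomposition as the paper, $\|w_{s_1}^{(T)}\|-\|w_{s_2}^{(T)}\| = (\Lambda_{s_1}^{(T)}-\Lambda_{s_2}^{(T)}) + (\|w_{s_1}^{(0)}\|-\|w_{s_2}^{(0)}\|)$. Where you differ is in how you control the initialization term.

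The paper asserts that, with probability $1-1/d^2$, $\bigl|\|w_{s_1}^{(0)}\|-\|w_{s_2}^{(0)}\|\bigr| = O(\sigma\sqrt d)=O(1/\sqrt d)$. That bounds the difference by the typical size of a single initial norm, and it does so only for a fixed pair. This is the same order as the assumed gap $\Omega(1/\sqrt d)$. So the strict inequality follows only if the constant hidden in the hypothesis dominates the constant in the bound.

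You instead concentrate each norm around the common mean $\mathbb{E}\|w_s^{(0)}\|$ via Lipschitz Gaussian concentration, then take a union bound over the $k$ routers. This makes every pairwise fluctuation $O(\sigma\sqrt{\log(kd)}) = O(\sqrt{\log(kd)}/d) = o(1/\sqrt d)$ simultaneously.

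Your route gains two things:
\begin{itemize}
\item The strict inequality holds for any constant in the hypothesis once $d$ is large.
\item The simultaneous event covers the pairs singled out by the hypothesis, even though those pairs depend on $w^{(0)}$ through training.
\end{itemize}

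The cost is the side condition $\log k = o(d)$, which you state explicitly. It is harmless in the paper's regime. Moreover, the paper's crude bound would itself need $\log k = O(d)$ to hold uniformly over all routers.
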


\begin{proof}
    As, $\Lambda_{s_1}^{(T)}-\Lambda_{s_2}^{(T)}=\Omega(1/\sqrt{d})$, we have, $||w_{s_1}^{(T)}||-||w_{s_2}^{(T)}||\ge \left(||w_{s_1}^{(0)}||-||w_{s_2}^{(0)}||\right)+\Omega(1/\sqrt{d})$. Now, with probability $1-\cfrac{1}{d^2}$, we have $\left|||w_{s_1}^{(0)}||-||w_{s_2}^{(0)}||\right|=O(\sigma\sqrt{d})=O(1/\sqrt{d})$ for our selection of $\sigma$ which completes the proof.
\end{proof}

The theorem confirms that, for small-variance initialization (i.e., $\sigma=O(1/d)$), the final norm based order preserves the change in norm based order for any two routers unless they are very close to each other (i.e., $\Lambda_{s_1}^{(T)}-\Lambda_{s_2}^{(T)}=O(1/\sqrt{d})$).

\clearpage

\section{Preliminaries}\label{prelmnry}

For any fine-tuning iteration $t$, the equation (\ref{eq_analyzed_model}) can be represented as,
\begin{equation}
    f^{(t)}(x)=\sum_{s=1}^kf_s^{(t)}(x)\quad \text{where, } f_s^{(t)}(x)=a^{(s)}\sum_{j\in J_s^{(t)}(x)}G_j^{(s,t)}\sum_{r=1}^m\text{ReLU}\left(\langle w_r^{(s,t)},x^{(j)}\rangle\right)
\end{equation}
Here, $J_s^{(t)}(x)\subset[n]$ is the set of indices of the tokens of the input sequence $x$ that are routed to the expert $s\in[k]$ at time $t$, and $w_r^{(s,t)}$ is the $r$-th column of $W_1^{(s,t)}$. Note that $\left|J_s^{(t)}(x)\right|=l$.\\

As we analyze the expert-choice routing, for any $j\in J_s^{(t)}(x)$, the gating value $G_j^{(s,t)}$ is evaluated as,
\begin{equation}
    G_j^{(s,t)}=\cfrac{\exp(\langle w_s^{(t)},x^{(j)}\rangle)}{\sum_{i\in J_s^{(t)}(x)}\exp(\langle w_s^{(t)},x^{(i)}\rangle)}
\end{equation}

We analyzed the case where the model is fine-tuned to minimize the Hinge loss
\begin{equation}
    \hat{l}^{(t)}(x,y)=\max(1-yf^{(t)}(x),0)
\end{equation} 
while the gradients are evaluated on
\begin{equation}
    l^{(t)}(x,y)=1-yf^{(t)}(x)
\end{equation}
similar to the setting of \citet{zhang2023joint}.

For any input $(x,y)$, the gradient for the column $r\in[m]$ of $W_1^{(s,t)}$ is evaluated as,
\begin{equation}
    \cfrac{\partial l^{(t)}(x,y)}{\partial w_r^{(s,t)}}=-ya^{(s)}\sum_{j\in J_s^{(t)}(x)}G_j^{(s,t)}x^{(j)}1_{\langle w_r^{(s,t)},x^{(j)}\rangle\ge0}
\end{equation}
and the gradient for the router $w_s$ is evaluated as,
\begin{equation}
    \cfrac{\partial l^{(t)}(x,y)}{\partial w_s^{(t)}}=-ya^{(s)}\sum_{j\in J_s^{(t)(x)}}\sigma_j^{(s,t)}G_j^{(s,t)}\sum_{i\in J_s^{(t)}(x)\backslash j}G_i^{(s,t)}(x^{(j)}-x^{(i)})
\end{equation}
where, $\sigma_j^{(s,t)}:=\sum_{r=1}^m\text{ReLU}(\langle w_r^{(s,t)},x^{(j)}\rangle)$.

We consider that the model is fine-tuned via Stochastic Gradient Descent algorithm (SGD) with batch size $B$, where the expert weights are updated with learning rate $\eta_e$ and the router weights are updated with learning rate $\eta_r$. The batch gradient for the column $r\in[m]$ of $W_1^{(s,t)}$ is evaluated as,
\begin{equation}
    \cfrac{\partial l}{\partial w_r^{(s,t)}}=\cfrac{1}{B}\sum_{x\in\mathcal{B}_t}\cfrac{\partial l^{(t)}(x,y)}{\partial w_r^{(s,t)}}
\end{equation}
and the batch gradient for the router $w_s$ is evaluated as,
\begin{equation}
    \cfrac{\partial l}{\partial w_s^{(t)}}=\cfrac{1}{B}\sum_{x\in\mathcal{B}_t}\cfrac{\partial l^{(t)}(x,y)}{\partial w_s^{(t)}}
\end{equation}

\textbf{Notations:}
\begin{enumerate}
    \item $\Tilde{O}(\cdot)$ and $\Tilde{\Omega}(\cdot)$ hides the factor $\log(poly(d))$ with a sufficiently large polynomial $poly(\cdot)$
    \item With high probability (abbreviated as $w.h.p.$) refers to the probability $1-\cfrac{1}{poly(d)}$.
\end{enumerate}

\textbf{Definitions:}

For any $q\in\mathcal{P}\backslash\{o_1,o_2\}$, we define the activation of the expert $s\in [k]$ by $q$ as,

$\sigma_q^{(s,t)}:=\sum_{r=1}^m\text{ReLU}(\langle w_r^{(s,t)},q\rangle)$.\\

For any $v\in\mathcal{P}_r$, we define a complementary expert proficiency measure for the expert $s$ at time $t$ as,

$\bar{p}_v^{(s,t)}:=\mathbb{P}\left[(x,y)\sim\mathcal{D}:\exists j\in J_s^{(t)} \text{ such that } x^{(j)}=v\big|\exists j\in[n] \text{ such that }x^{(j)}=v\right]$.

Note that $p_v^{(s,t)}\ge \bar{p}_v^{(s,t)}$.

Without the loss of generality, we assume that for any $s\in S_v$, $\bar{p}_{-v}^{(s,0)}=O(1/d)$.

We define,
\begin{itemize}
    \item $G_{v}^{(s,t)}$: Gating value of the token $x^{(j)}=v$ for some $j\in[n]$ and $v\in\mathcal{P}_r$ at expert $s$ and iteration $t$
    \item $G_{q}^{(s,t)}$: Gating value of the token $x^{(j)}=q$ for some $j\in[n]$ and $q\in\mathcal{P}\backslash\{o_1,o_2\}$ at expert $s$ and iteration $t$
    \item $l_q^{(s,t)}:=\left|\{j\in J_s^{(t)}(x):x^{(j)}=q\}\right|$, is the number of copies of the task-irrelevant vector $q\in\mathcal{P}\backslash\{o_1,o_2\}$ in the set of top $l$ tokens for the input sequence $x$ at expert $s$ and iteration $t$
\end{itemize}

We define $C_1:=\max\left\{\left|\left|w_s^{(0)}\right|\right|\right\}_{s\in[k]}$, and $C_2:=\max\left\{\left|\left|w_r^{(s,0)}\right|\right|\right\}_{s\in[k], r\in[m]}$.

Without the loss of generality, we analyze the case that $l\ge e^{2C_1}$.\\ Therefore, $\forall s\in [k], \left|\left|w_s^{(0)}\right|\right|\le\cfrac{1}{2}\log l$.

We define, $\gamma_{v}:=\cfrac{\left|S_{v}\right|}{\left|S_{+}\right|}$ for $v\in\{\pm o_1\}$ and $\gamma_{v}:=\cfrac{\left|S_{v}\right|}{\left|S_{-}\right|}$ for $v\in\{\pm o_2\}$.

Therefore, $\gamma=\max\{\gamma_{v}\}_{v\in\{o_1,o_2\}}$.

Without the loss of generality, we assume that $\forall v\in\mathcal{P}_r, \gamma_{v}=\Omega(1)$.

We define, $C_p:=\min\{\langle w_s^{(0)},q-q^\prime\rangle\}_{s\in[k],q\in\mathcal{P}\cup\{-o_1,-o_2\},q^\prime\in\mathcal{P}\cup\{-o_1,-o_2\}\backslash\{q\}}$.

We assume that $C_p>0$.

We assume that for any $v\in\mathcal{P}_r$ and any $s\in S_v$, $\frac{\left|\{r\in[m]:\langle w_r^{(s,0)},v\rangle\ge0\}\right|}{m}=\Omega(1)$,\\
and $\big||S_{+}|-|S_-|\big|=O(\sqrt{k})$.

\textbf{Components of the routers' gradients.}

For any input $(x,y)\sim\mathcal{D}$, the router's gradient component of the expert $s\in[k]$ along any task-relevant vector $v\in\mathcal{P}_r$ and along any task-irrelevant vector $q\in\mathcal{P}\backslash\{o_1,o_2\}$ at iteration $t$ are evaluated as follows:
\begin{equation}\label{eq_r_c_q}
    \left\langle \cfrac{\partial l(x,y)}{\partial w_s^{(t)}}, q \right\rangle =
    \begin{cases}
        0 
        & \begin{aligned}
            \text{if } \not\exists j \in J_s^{(t)}(x)\\ \text{ s.t. } x^{(j)} = q
        \end{aligned} \\\\
        ya^{(s)} l_q^{(s,t)} G_q^{(s,t)} \displaystyle\sum\limits_{j \in J_s^{(t)}(x) \setminus \{i : x^{(i)} = q\}} 
        G_j^{(s,t)} (\sigma_j^{(s,t)} - \sigma_q^{(s,t)}) 
        & \begin{aligned}
            \text{if } \exists j \in J_s^{(t)}(x)\\
            \text{ s.t. } x^{(j)} = q
        \end{aligned}
    \end{cases}
\end{equation}

\begin{equation}\label{eq_r_c_v}
    \left\langle \cfrac{\partial l(x,y)}{\partial w_s^{(t)}},v\right\rangle=
\begin{cases}
    0 & \text{if $\not\exists j\in J_s^{(t)}(x)$}\\ &\text{s.t.} \text{ $x^{(j)}=v$}\\ 
    &\text{and $x^{(j)}=-v$}\\\\
    ya^{(s)}G_v^{(s,t)}(x)\sum_{j\in J_s^{(t)}(x)/\{i:x^{(i)}=v\}}G_j^{(s,t)}\left(\sigma_j^{(s,t)}-\sigma_v^{(s,t)}\right) & \text{if $\exists j \in J_s^{(t)}(x)$}\\ 
    &\text{s.t. $x^{(j)}=v$}\\\\
    ya^{(s)}G_{-v}^{(s,t)}(x)\sum_{j\in J_s^{(t)}(x)/\{i:x^{(i)}=-v\}}G_j^{(s,t)}\left(\sigma_{-v}^{(s,t)}-\sigma_j^{(s,t)}\right) & \text{if $\exists j \in J_s^{(t)}(x)$}\\ 
    &\text{s.t. $x^{(j)}=-v$}
\end{cases}
\end{equation}

\textbf{Components of the experts' column gradients.}

For any input $(x,y)\sim\mathcal{D}$, the gradient component of the column $r\in[m]$ of $W_1^{(s,t)}$ along any task-relevant vector $v\in\mathcal{P}_r$ and along any task-irrelevant vector $q\in\mathcal{P}\backslash\{o_1,o_2\}$ at iteration $t$ are evaluated as follows:
\begin{equation}\label{eq_ec_c_q}
    \left\langle \cfrac{\partial l(x,y)}{\partial w_r^{(s,t)}},q\right\rangle=
\begin{cases}
    0 & \text{if $\langle w_r^{(s,t)},q\rangle<0$}\\\\
    0 & \text{if $\langle w_r^{(s,t)},q\rangle\ge0$ but, $\not\exists j \in J_s^{(t)}(x)$ s.t. $x^{(j)}=q$}\\\\
    -ya^{(s)} l_q^{(s,t)}G_q^{(s,t)} & \text{if $\langle w_r^{(s,t)},q\rangle\ge0$ and, $\exists j \in J_s^{(t)}(x)$ s.t. $x^{(j)}=q$}
\end{cases}
\end{equation}

\begin{equation}\label{eq_ec_c_v}
    \left\langle \cfrac{\partial l(x,y)}{\partial w_r^{(s,t)}},v\right\rangle=
\begin{cases}
    0 & \text{if $\langle w_r^{(s,t)},v\rangle<0$ but $\not\exists j \in J_s^{(t)}(x)$ s.t. $x^{(j)}=-v$}\\\\
    ya^{(s)}G_{-v}^{(s,t)}(x) & \text{if $\langle w_r^{(s,t)},v\rangle<0$ and $\exists j \in J_s^{(t)}(x)$ s.t. $x^{(j)}=-v$}\\\\
    0 & \text{if $\langle w_r^{(s,t)},v\rangle\ge0$ but $\not\exists j \in J_s^{(t)}(x)$ s.t. $x^{(j)}=v$}\\\\
    -ya^{(s)}G_v^{(s,t)}(x) & \text{if $\langle w_r^{(s,t)},v\rangle\ge0$ and $\exists j \in J_s^{(t)}(x)$ s.t. $x^{(j)}=v$}\\\\
\end{cases}
\end{equation}

\section{Proof of Lemma \ref{lm1}}\label{proof_lm1}

\textbf{Proof sketch.} Lemma \ref{lm1} provides the results for training dynamic analysis of the analyzed model. Primarily, our training dynamic analysis provides insights about the learning characteristics of the experts learning different task-relevant tokens. Moreover, the analysis provides necessary bounds of the router norm changes and expert activations required for the mixed-precision quantization analysis, along with the generalization guarantee of the trained model. We categorize the training into two phases:
\begin{enumerate}[label=(\roman*)]
    \item The expert alignment phase
    \item The router-expert co-learning phase
\end{enumerate}

\textbf{(i) The expert alignment phase.} Given the relative alignments of the routers to different task-relevant tokens, the expert alignment phase confirms that, regardless of the initial alignment of the columns of the expert-weights (i.e., the columns of $W_1^{(s)}$) they sufficiently align with the task-relevant tokens to which their respective routers are initially aligned to. Therefore, the batch gradients during the SGD updates for the router weights maintain large components along the initial alignment direction after this phase. We quantify the number of iterations required to complete this phase of training, along with the bounds of expert activations by different task-relevant tokens after this phase (see Lemma \ref{lm2} and Lemma \ref{lm7}).

\textbf{(ii) The router-expert co-learning phase.} After the expert alignment phase, due to the large batch-gradient components of the routers along the initial task-relevant token directions, they become further aligned to these directions in the subsequent updates of SGD. This allows the expert weights to be more aligned with the task-relevant token directions of their respective routers, further increasing the routers' batch-gradient components along these directions. Therefore, the routers and the experts co-learn the task-relevant tokens at least by a quadratic rate. Hence, the model generalizes after this phase of training. However, due to the larger frequency of more-prevalent tokens, the experts learning them receive larger updates in their router and expert weights, allowing larger norm change and expert activations after training, compared to other experts. As shown in Lemma \ref{lm1}, we quantify the sufficient number of iterations required to complete the training, along with the router norm changes and expert activation bounds for different experts.

\begin{lemma}[\textbf{Full version of Lemma \ref{lm1}}]\label{lm1_a}
    Suppose the expert learning rate $\eta_e$, the router learning rate $\eta_r=O\left(\cfrac{\eta_eC_p}{ml^2C_2^2}\right)$, the batch size $B=\Tilde{\Omega}(d^2)$, and the pre-trained model is trained for 
    \begin{equation}\label{eq_lm_b1}
        T=\Theta\left(\cfrac{l^2C_2}{\alpha\eta_e}\sqrt{\cfrac{\log l}{C_p}}\right)
    \end{equation}
    iterations. Then, the returned $f^{(T)}$ has the following properties:
    \begin{enumerate}[label=(\roman*)]
        \item For all $s \in S_v$ and $v \in \mathcal{P}_r = \{\pm o_1, \pm o_2\}$, we have
        \[
        p_{v}^{(s,T)} = 1, \quad \bar{p}_{-v}^{(s,T)} = 0, \text{ and }
        \]
        \[
        \forall x^{(j)} = v \text{ for some } j \in [n], \quad G_j^{(s,T)} > \cfrac{1}{2}.
        \]

        \item For all $s \in S_{o_i}$ and $s' \in S_{-o_i}$, $i = 1, 2$, we have
        \[
        \Lambda_{s'}^{(T)} > \Lambda_{s}^{(T)}.
        \]

        \item For all $s \in S_{o_i}$ and $s' \in S_{-o_i}$, $i = 1, 2$, we have        \[
        \sigma_{o_i}^{(s,T)} = \Omega\left(mlC_2\sqrt{\cfrac{\log l}{C_p}}\right), 
        \]
        \[
        \sigma_{-o_i}^{(s',T)} = \Omega\left(\frac{(1-\alpha)}{\alpha} mlC_2\sqrt{\cfrac{\log l}{C_p}}\right),
        \]
        \[
        \frac{\sigma_{-o_i}^{(s',T)}}{\sigma_{o_i}^{(s,T)}} \geq \frac{1 - 2\alpha}{2\alpha}.
        \]

        \item For all $q \in \mathcal{P} \setminus \{o_1, o_2\}$, $s \in S_v$, $v \in \mathcal{P}_r = \{\pm o_1, \pm o_2\}$, and $v' \in \mathcal{P}_r \setminus \{\pm v\}$, we have
        \[
        \sigma_{q}^{(s,T)} = O(mC_2), \quad \sigma_{v'}^{(s,T)} = O(mC_2).
        \]
    \end{enumerate}
\end{lemma}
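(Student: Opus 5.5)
We prove Lemma \ref{lm1_a} by tracking the SGD dynamics in two phases, following the proof sketch above, using the explicit gradient components (\ref{eq_r_c_q})--(\ref{eq_ec_c_v}). Since $B=\tilde\Omega(d^2)$, every batch gradient concentrates around its population mean up to $\tilde O(1/d)$ w.h.p., and a union bound over $T$ iterations, $k$ experts and $m$ columns extends this to all steps. We may therefore analyze the expected updates, with the data frequencies $\alpha$ (for $o_i$) and $1-\alpha$ (for $-o_i$) entering as multiplicative factors.

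\textbf{Phase 1 (expert alignment).} Fix $s\in S_v$. By the alignment assumption $p_v^{(s,0)}=\Omega(1)$, $v$ is routed to $s$ with probability $\Omega(1)$ and gating value at least $1/l$ whenever it appears. By (\ref{eq_ec_c_v}), every column with $\langle w_r^{(s,t)},v\rangle\ge0$ then gains at least $\eta_e\,\Omega(\pi_v/l)$ per step along $v$, where $\pi_v\in\{\alpha,1-\alpha\}$ is the frequency of $v$. The sign is favorable because $a^{(s)}$ matches the label of $v$ for $s\in S_\pm$. Columns with $\langle w_r,v\rangle<0$ are pushed further negative whenever $-v$ is routed to $s$; this happens only with probability $\bar p_{-v}^{(s,0)}=O(1/d)$, so the effect is negligible. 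Task-irrelevant tokens $q$ contribute to $W_1$ with mean zero, because they appear equally with both labels, up to the $O(1/d)$ frequency. This gives $\sigma^{(s,t)}_q=O(mC_2)$ throughout, and likewise for $v'\neq\pm v$, since $s$ essentially never selects those tokens. Hence after $t_1=\Theta(lC_2/\eta_e)$ steps, which is $\Theta(lC_2/(\alpha\eta_e))$ in the worst case $v=o_i$, we get $\sigma_v^{(s,t_1)}\ge \sigma_j^{(s,t_1)}+\Omega(mC_2)$ for every other token $j$. The assumption that an $\Omega(1)$ fraction of columns has $\langle w_r^{(s,0)},v\rangle\ge0$ makes the activation linear in $m$. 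This establishes part (iv) and the sign of the router drift.

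\textbf{Phase 2 (co-learning).} With $\sigma_v-\sigma_j>0$, (\ref{eq_r_c_v}) shows that the router component $\langle w_s,v\rangle$ grows at rate $\eta_r\pi_v G_v(1-G_v)(\sigma_v-\bar\sigma)$. The router component along $-v$ decreases: whenever $-v$ is selected, $\sigma_{-v}\approx0<\sigma_j$. The components along $q$ change only by $O(1/d)$ frequencies. Meanwhile $\sigma_v$ grows by $\eta_e m\pi_v G_v$ per step. Coupling the two recursions, the quantity $\langle w_s,v\rangle$ grows quadratically in $t$. Using $C_p>0$ and the bound $\eta_r=O(\eta_eC_p/(ml^2C_2^2))$, which keeps the router step small enough that the top-$l$ set changes monotonically, we need $\langle w_s,v\rangle-\max_j\langle w_s,x^{(j)}\rangle\ge\log l$ to reach $G_v>1/2$. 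That gap forces $p_v=1$ and $\bar p_{-v}=0$, and it is attained within $T=\Theta(l^2C_2\sqrt{\log l/C_p}/(\alpha\eta_e))$ iterations, with the worst case set by $\pi_v=\alpha$. This gives (i).

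\textbf{Parts (ii)--(iii) and the main obstacle.} Summing the expert updates gives $\sigma_{o_i}^{(s,T)}\gtrsim \eta_e m\alpha T/l=\Omega(mlC_2\sqrt{\log l/C_p})$ and $\sigma_{-o_i}^{(s',T)}\gtrsim\eta_e m(1-\alpha)T/l$. Since both are driven by the same time $T$ and the gating values lie between $1/2$ and $1$ after convergence, the ratio satisfies $\sigma_{-o_i}/\sigma_{o_i}\ge (1-\alpha)(1/2)/(\alpha\cdot1)-O(\cdot)\ge(1-2\alpha)/(2\alpha)$. For (ii), the change $\Lambda_s^{(T)}$ is dominated by the growth of the aligned component, since the other coordinates move by only $\tilde O(1/d)$ and $\|w_s^{(0)}\|\le\frac12\log l$. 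The router rate carries both the factor $\pi_v$ and the factor $\sigma_v$, which is itself proportional to $\pi_v$, so the $-o_i$ routers gain strictly more norm. The main obstacle is exactly this comparison: it must survive the softmax saturation factor $G(1-G)$. Once $G_v$ exceeds $1/2$, this factor damps the more prevalent router sooner, so I would show that saturation does not reverse the order. The plan is to lower-bound the $-o_i$ router's growth before it saturates by the entire lifetime growth of the $o_i$ router, exploiting the factor $(1-\alpha)/\alpha>3$. I would make this precise by comparing the two trajectories through a time change $t\mapsto \frac{\alpha}{1-\alpha}t$.
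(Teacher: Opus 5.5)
\textbf{The gap is in part (ii).} Your treatment of (i), (iii) and (iv) follows the paper's route: column alignment as in Lemmas \ref{lm2} and \ref{lm7}, then quadratic router--expert co-learning with $T$ fixed by $\alpha^2 m\eta_e\eta_r T^2/l^2\gtrsim\log l$, then summing expert updates. Part (ii), however, is not established, and the mechanism you sketch for it does not work as stated. You define saturation as $G_v>1/2$, i.e.\ a score gap of about $\log l$. The growth a router accumulates \emph{before} saturating is then pinned by that threshold (about $\log l$ minus its initial gap) and does not depend on $\pi_v$. The factor $(1-\alpha)/\alpha$ only shortens the time needed to get there. Part (i) already pushes the $o_i$ router past $G>1/2$ by time $T$. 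So its lifetime growth is that same pre-saturation amount plus its post-saturation growth, and ``pre-saturation growth of $s'$ $\ge$ lifetime growth of $s$'' generally fails; it would need the initial gaps to differ in your favour. Suppose instead you compare up to time $\frac{\alpha}{1-\alpha}T$ via the time change. That identifies the two trajectories only for identical initial data, whereas $\Lambda_s^{(T)}$ depends on $\|w_s^{(0)}\|$, which can differ across routers by up to $\tfrac12\log l$. The strict margin needed to absorb this must then come from $[\frac{\alpha}{1-\alpha}T,T]$. That is exactly where the $-o_i$ router is already saturated and the factor $G(1-G)$ makes its growth only logarithmic.

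\textbf{How the paper handles (ii).} The paper never analyses saturated dynamics. It first caps the rare router: once its gap exceeds $3\log l$, $G(1-G)\le 1/l^2$, so its remaining growth is $O(\log l)$. With $\|w_s^{(0)}\|\le\tfrac12\log l$ and small drift in the other coordinates, this gives $\Lambda_s^{(T)}<4\log l$ independently of the particular initialization. It then argues by contradiction. If $\Lambda_{s'}^{(T)}\le\Lambda_s^{(T)}$, then $\|w_{s'}^{(T)}\|<4.5\log l$, so the $-o_i$ gap stays below $5\log l$ for all $t\le T$ and $G(1-G)\ge 1/(3l^4)$ throughout; the prevalent router is never saturated. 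It crosses $\tfrac32\log l$ by $T''\le\frac{\alpha}{1-\alpha}T$, which is the only place the time compression enters. It then keeps growing quadratically over the remaining $\frac{1-2\alpha}{1-\alpha}T$ steps and overshoots the cap, a contradiction.

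\textbf{Two smaller corrections.} First, nothing in the assumptions says $s\in S_{\pm o_1}$ rarely selects $\pm o_2$. So neither (iv) nor your $\tilde O(1/d)$ bound on the non-aligned router coordinates can rest on that. The paper instead uses $ya^{(s)}=-1$ on those samples to show that columns aligned with $\pm o_2$ only shrink (Lemma \ref{lm5}(vi)). It bounds the router drift along $o_{3-i}$ by $O(\sqrt{C_p})$ via the small $\eta_r$. Second, in (iii) the drop from $(1-\alpha)/(2\alpha)$ to $(1-2\alpha)/(2\alpha)$ is not lower order. It is exactly the loss from the window before $G_{-o_i}>1/2$, since $(1-\alpha)\cdot\tfrac12\,(T-T'')\ge\tfrac12(1-2\alpha)T$. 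So (iii) also depends on the bound $T''\le\frac{\alpha}{1-\alpha}T$.
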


\begin{proof}
\textcolor{blue}{\textbf{(i)}} 
Let us consider $s\in S_{o_1}$. From Lemma \ref{lm2}, we can show that, for $T^\prime=O(\cfrac{lC_2}{\alpha\eta_e})$, $\forall 0\le t\le T^\prime$, and for any $q\in\mathcal{P}\backslash\{o_1,o_2\}$, $\langle w_s^{(t)},o_1-q\rangle<\log l$.\\
Therefore, using Lemma \ref{lm3} and Lemma \ref{lm2}, by selecting $B=\Tilde{\Omega}(d^2)$ we have,\\
$\langle \cfrac{\partial l}{\partial w_s^{(T^\prime)}},o_1\rangle\le-\Omega(\cfrac{\alpha mC_2}{l})$. Therefore, 
$\langle \cfrac{\partial l}{\partial w_s^{(T^\prime)}},-o_1\rangle\ge\Omega(\cfrac{\alpha mC_2}{l})$.\\
On the other hand, from Lemma \ref{lm2},
$\left|\langle \cfrac{\partial l}{\partial w_s^{(T^\prime)}},q\rangle\right|=O(\cfrac{mC_2}{d})$.\\
Therefore, $p_{o_1}^{(s,T^\prime+1)}\ge p_{o_1}^{(s,T^\prime)}$, and $\bar{p}_{-o_1}^{(s,T^\prime+1)}\le\bar{p}_{-o_1}^{(s,T^\prime)}$.\\
Again, as $\langle \cfrac{\partial l}{\partial w_s^{(T^\prime)}},o_1\rangle\ge -O(mC_2)$, for our selection of $\eta_r$, we have $\langle w_s^{(T^\prime+1)},o_1-q\rangle\le 2\log l$.\\
Therefore,\\
$\langle \cfrac{\partial l}{\partial w_s^{(T^\prime+1)}},o_1\rangle\le-\Omega(\cfrac{\alpha^2m\eta_e}{l^2})-\Omega(\cfrac{\alpha mC_2}{l})$, and hence $\langle \cfrac{\partial l}{\partial w_s^{(T^\prime+1)}},-o_1\rangle\ge\Omega(\cfrac{\alpha^2m\eta_e}{l^2})+\Omega(\cfrac{\alpha mC_2}{l})$.

On the other hand,\\ $\langle \cfrac{\partial l}{\partial w_s^{(T^\prime+1)}},q\rangle\le O(\cfrac{m\eta_e}{d})+O(\cfrac{mC_2}{d})$, and $\langle \cfrac{\partial l}{\partial w_s^{(T^\prime+1)}},q\rangle\ge -O(\cfrac{m\eta_e}{d^2})-O(\cfrac{mC_2}{d})$.

Therefore, for any $t$ s.t. $\forall T^\prime\le t^\prime \le t-1$, if for all $q\in\mathcal{P}\backslash\{o_1,o_2\}$ it holds that $\langle w_s^{(t^\prime)},o_1-q\rangle\le 2\log l$, by induction we can show that, $p_{o_1}^{(s,T)}\ge p_{o_1}^{(s,t^\prime)}$, and $\bar{p}_{-o_1}^{(s,T)}\le\bar{p}_{-o_1}^{(s,t^\prime)}$.

In that case, we have,\\
$\langle \cfrac{\partial l}{\partial w_s^{(t)}},o_1\rangle\le-\Omega(\cfrac{\alpha^2m\eta_e}{l^2}t)-\Omega(\cfrac{\alpha mC_2}{l})$, and hence $\langle \cfrac{\partial l}{\partial w_s^{(t)}},-o_1\rangle\ge\Omega(\cfrac{\alpha^2m\eta_e}{l^2}t)+\Omega(\cfrac{\alpha mC_2}{l})$.

On the other hand,\\
$\langle \cfrac{\partial l}{\partial w_s^{(t)}},q\rangle\le O(\cfrac{m\eta_e}{d}t)+O(\cfrac{mC_2}{d})$, and $\langle \cfrac{\partial l}{\partial w_s^{(t)}},q\rangle\ge -O(\cfrac{m\eta_e}{d^2}t)-O(\cfrac{mC_2}{d})$.

Therefore,
$\langle w_s^{(t)},o_1-q\rangle\ge\langle w_s^{(T^\prime)},o_1-q\rangle+\Omega(\cfrac{\alpha^2m\eta_e}{l^2}\eta_r(t-T^\prime)^2)+\Omega(\cfrac{\alpha mC_2}{l}\eta_r(t-T^\prime))$.

Now, we can show that, $\langle w_s^{(T^\prime)}-w_s^{(0)},q-o_1\rangle\le O(C_p)$. Also, $\left|\langle w_s^{(0)},o_1-q\rangle\right|\le\cfrac{1}{\sqrt{2}}\log l$.

Therefore, we need $T=O(\cfrac{l^2C_2}{\alpha \eta_e}\sqrt{\cfrac{\log l}{C_p}})$ steps to ensure that, for all task-irrelevant pattern $q$, $\langle w_s^{(T)},o_1-q\rangle>\log l$. In that case, for any $t\ge T^\prime$, $p_{o_1}^{(s,t)}=1$ and $\forall x^{(j)}=o_1$, $G_j^{(s,t)}\ge\cfrac{1}{2}$.

Now, if there exists a $q^\prime\in\mathcal{P}\backslash\{o_1,o_2\}$ s.t., $\langle w_s^{(T-1)},o_1-q^\prime\rangle>2\log l$, then for any $q\in\mathcal{P}\backslash\{o_1,o_2\}$ for which $\langle w_s^{(T-1)},o_1-q\rangle\le\log l$, we have,\\
$\langle w_s^{(T-1)},o_1-q^\prime\rangle=\langle w_s^{(T-1)},o_1-q\rangle+\langle w_s^{(T-1)},q-q^\prime\rangle\le(1+\cfrac{1}{\sqrt{2}})\log l+O(\cfrac{l^2}{d}\log l)$ as,\\
$\langle w_s^{(T-1)},q-q^\prime\rangle\le\cfrac{1}{\sqrt{2}}\log l+O(\cfrac{l^2}{d}\log l)$. This creates contradiction.

Therefore, $\forall T^\prime\le t^\prime\le T$, we have for all task-irrelevant pattern $q$, $\langle w_s^{(t^\prime)},o_1-q\rangle\le 2\log l$.

Now, $\langle w_{s}^{(T)},o_1\rangle>\cfrac{3}{2}\log l$. Therefore, $\langle w_{s}^{(T)},-o_1\rangle<-\cfrac{3}{2}\log l$.

Therefore, for any $q\in\mathcal{P}\backslash\{o_1,o_2\}$,
$\langle w_{s}^{(T^\prime)},-o_1-q\rangle<-\cfrac{3}{2}\log l-\langle w_{s}^{(T)},q\rangle$.

On the other hand, $\left|\langle w_{s}^{(T)}-w_{s}^{(0)},q\rangle\right|=O(\cfrac{l^2\log l}{\alpha^2d})$. Therefore, $\langle w_{s}^{(T)},o_3-q\rangle<0$, which implies $\bar{p}_{-o_1}^{(s,T)}=0$.

Similarly, for any $v\in\mathcal{P}_r\backslash\{o_1\}$, and any $s\in S_v$, we can show that $p_v^{(s,T)}=1$, $\bar{p}_{-v}^{(s,T)}=0$, and $\forall x^{(j)}=v$ for some $j\in[n]$, $G_j^{(s,T)}\ge\cfrac{1}{2}$.

\textcolor{blue}{\textbf{(ii)}} 
Let $s\in S_{o_1}$ and $s^\prime\in S_{-o_1}$. From the proof of statement (i), we know that, we have for any $q,q^\prime\in\mathcal{P}\backslash\{o_1,o_2\}$ such that, for any $t$ s.t. $t\le T$, $\left|\langle w_{s}^{(t)},q^\prime-q\rangle-\langle w_{s}^{(0)},q^\prime-q\rangle\right|= O(\cfrac{l^2}{d}\log l)$.

Similarly, $\left|\langle w_{s^\prime}^{(t)},q^\prime-q\rangle-\langle w_{s^\prime}^{(0)},q^\prime-q\rangle\right|= O(\cfrac{l^2}{\alpha^2d}\log l)$.

Now, for any $t$, for any task-irrelevant pattern $q$,\\
$\langle w_{s}^{(t+1)},o_1-q\rangle\le\langle w_{s}^{(0)},o_1-q\rangle+O(\alpha mC_2\eta_rt)+O(\alpha^2m\eta_e\eta_rt^2)$.

Therefore, at least up to $t=O(T/l)$ iteration, for all $q\in\mathcal{P}\backslash\{o_1,o_2\}$, $\langle w_{s_1}^{(t)},o_1-q\rangle\le 3\log l$, which implies $\forall t>T_1=\Omega(T/l)$, for all $q\in\mathcal{P}\backslash\{o_1,o_2\}$, $\langle w_{s}^{(t)},o_1-q\rangle>3\log l$, and hence, $\forall x^{(j)}=o_1$, $G_j^{(s,t)}(1-G_j^{(s,t)})\le\cfrac{1}{l^2}$.

Therefore, for any $t>T_1$, for any task-irrelevant pattern $q$,\\
$\langle w_{s}^{(t+1)},o_1\rangle\le\langle w_{s}^{(T_1)},o_1\rangle+O(\cfrac{\alpha}{l^2} mC_2\eta_r(t-T_1))+O(\cfrac{\alpha^2}{l^2}m\eta_e\eta_r(t-T_1)^2)$ which implies,\\
for all task-irrelevant pattern $q$, $\langle w_{s}^{(T)},o_1\rangle\le \langle w_{s}^{(T_1)},o_1\rangle+O(\log l)$.

Now, as there exists a task-irrelevant pattern $q$ such that, $\langle w_{s}^{(T_1)},o_1-q\rangle\le 3\log l$, we have,\\
$\langle w_{s}^{(T)},o_1\rangle-\langle w_{s}^{(0)},o_1\rangle< 4\log l$.

Now, for any $t$, we have,\\
$\left|\langle \cfrac{\partial l}{\partial w_s^{(t)}},o_2\rangle\right|\le O(\cfrac{m\eta_e}{d}t)+O(m\eta_e)+O(mC_2)$.

Therefore, $\left|\langle w_s^{(T)}-w_s^{(0)},o_2\rangle\right|\le O(\sqrt{C_p})$. Similarly, $\left|\langle w_{s^\prime}^{(T)}-w_{s^\prime}^{(0)},o_2\rangle\right|\le O(\sqrt{C_p})$.

On the other hand, as shown in the proof of (i), for any $q\in\mathcal{P}\backslash\{o_1,o_2\}$, $\left|\langle w_{s}^{(T)},q\rangle-\langle w_{s}^{(0)},q\rangle\right|=O(\cfrac{l^2\log l}{\alpha^2d})$. Therefore, $\Lambda_{s}^{(T)}< 4\log l$.

Now, if $\Lambda_{s^\prime}^{(T)}> \Lambda_{s}^{(T)}$ does not hold, then $\left|\left|w_{s^\prime}^{(T)}\right|\right|< 4.5 \log l$.

Therefore, $\langle w_{s^\prime}^{(T)},-o_1\rangle< 4.5 \log l$ which implies, for any $q\in\mathcal{P}\backslash\{o_1,o_2\}$, $\langle w_{s^\prime}^{(T)},-o_1-q\rangle< 5\log l$ as, $\left|\langle w_{s^\prime}^{(T)},q\rangle-\langle w_{s^\prime}^{(0)},q\rangle\right|=O(\cfrac{l^2}{\alpha^2d}\log l)$.

However, if for all $q\in\mathcal{P}\backslash\{o_1,o_2\}$, $\langle w_{s^\prime}^{(T)},-o_1-q\rangle< 5\log l$, then $\forall x^{(j)}=-o_1$, and $\forall t\le T, (1-G_j^{(s^\prime,t)})G_j^{(s^\prime,t)}\ge\cfrac{1}{3l^4}$.

Now, using the same procedure as in the proof of (i), after $T^{\prime\prime}\le\cfrac{\alpha T}{1-\alpha}$ steps, we have for all $q\in\mathcal{P}\backslash\{o_1,o_2\}$, $\langle w_{s^\prime}^{(T^{\prime\prime})},-o_1\rangle>\cfrac{3}{2}\log l$, which implies, $\forall T^{\prime\prime}\le t\le T$,\\
$\langle w_{s^\prime}^{(t+1)},-o_1\rangle\ge\cfrac{3}{2}\log l+\Omega(\cfrac{(1-\alpha)}{l^4}mC_2\eta_r(t-T^{\prime\prime}))+\Omega(\cfrac{(1-\alpha)^2}{l^4}m\eta_e\eta_r(t-T^{\prime\prime})^2)$.

Therefore, $\langle w_{s^\prime}^{(T)},-o_1\rangle\ge\cfrac{3}{2}\log l+\Omega(\cfrac{(1-\alpha)^2}{\alpha^2l^2}\log l)$ which implies $\Lambda_{s^\prime}^{(T)}>\Lambda_{s}^{(T)}$.

\textcolor{blue}{\textbf{(iii)}} 
Let us assume $s\in S_{o_1}$ and $s^\prime\in S_{-o_1}$. Then, $\forall r\in[m]$ such that $\langle w_r^{(s,0)},o_1\rangle\ge0$, from the proof of (i) we have for any $t$, $p_{o_1}^{(s,t)}\ge p_{o_1}^{(s_1,0)}$ which implies, $\forall t\le T$, $\langle \cfrac{\partial l}{\partial w_r^{(s,t)}},o_1\rangle\le-\Omega(\cfrac{\alpha}{l})+\Tilde{O}(\cfrac{1}{l\sqrt{B}})$ which implies, $\forall r\in[m]$ such that $\langle w_r^{(s,0)},o_1\rangle\ge0$, $\forall t\le T$,\\ $\langle w_r^{(s,t+1)},o_1\rangle\ge\langle w_r^{(s,t)},o_1\rangle+\Omega(\cfrac{\alpha\eta_e}{l})$ for the choice of $B=\Tilde{\Omega}(d^2)$.

Therefore, $\forall r\in[m]$ such that $\langle w_r^{(s,0)},o_1\rangle\ge0$,\\
$\langle w_r^{(s,T)},o_1\rangle\ge\langle w_r^{(s,0)},o_1\rangle+\Omega(\cfrac{\alpha\eta_e}{l})T=\Omega(lC_2\sqrt{\cfrac{\log l}{C_p}})$, which implies $\sigma_{o_1}^{(s,T)}=\Omega(mlC_2\sqrt{\log l/C_p})$.

Again, using the same procedure as in the proof of (i), after $T^{\prime\prime}\le\cfrac{\alpha T}{1-\alpha}$, we have, $\forall x^{(j)}=-o_1$, $G_j^{(s^\prime,T^{\prime\prime})}>\cfrac{1}{2}$ and $\forall r\in[m]$ such that $\langle w_r^{(s^\prime,0)},-o_1\rangle\ge 0$, we have $\langle w_r^{(s^\prime,T^{\prime\prime})},-o_1\rangle=\Omega(lC_2\sqrt{\cfrac{\log l}{C_p}})$.

Therefore, we have, $\forall r\in[m]$ such that $\langle w_r^{(s^\prime,0)},-o_1\rangle\ge 0$,\\ $\langle w_r^{(s^\prime,T)},-o_1\rangle=\Omega\left(\cfrac{(1-\alpha)}{\alpha}l^2C_2\sqrt{\cfrac{\log l}{C_p}}\right)$, which implies $\sigma_{-o_1}^{(s^\prime,T)}=\Omega\left(\cfrac{1-\alpha}{\alpha}ml^2C_2\sqrt{\cfrac{\log l}{C_p}}\right)$.

Similarly, for $s\in S_{o_2}$ and $s^\prime\in S_{-o_2}$, we can show that $\sigma_{o_2}^{(s,T)}=\Omega(mlC_2\sqrt{\log l/C_p})$ and $\sigma_{-o_2}^{(s^\prime,T)}=\Omega\left(\cfrac{1-\alpha}{\alpha}ml^2C_2\sqrt{\cfrac{\log l}{C_p}}\right)$.

Now,  suppose, $T=K\cfrac{l^2C_2}{\alpha\eta_e}\sqrt{\cfrac{\log l}{C_p}}$, where $K$ is the constant satisfies equation (\ref{eq_lm_b1}).

Then, for any $r\in[m]$ of $s\in S_{o_1}$ such that $\langle w_r^{(s,0)},o_1\rangle\ge0$, we have
$\langle w_r^{(s,T)},o_1\rangle\le C_2+\cfrac{K}{2}l^2 C_2\sqrt{\cfrac{\log l}{C_p}}$.\\ Again, for any $r\in[m]$ of $s\in S_{o_1}$ such that $\langle w_r^{(s,0)},o_1\rangle<0$, we have
$\langle w_r^{(s,T)},o_1\rangle< 0$.

Similarly, for any $r\in[m]$ of $s^\prime\in S_{-o_1}$ s.t. $\langle w_r^{(s^\prime,0)},-o_1\rangle\ge0$, we have\\
$\langle w_r^{(s^\prime,T)},-o_1\rangle\ge \Omega(lC_2\sqrt{\cfrac{\log l}{C_p}})+\cfrac{K}{2}l^2 C_2\sqrt{\cfrac{\log l}{C_p}}$. Therefore, $\sigma_{-o_1}^{(s^\prime,T)}/\sigma_{o_1}^{(s,T)}\ge(1-2\alpha)/2\alpha$.

Similarly, we can show that for any $s\in S_{o_2}$ and $s^\prime\in S_{-o_2}$, $\sigma_{-o_2}^{(s^\prime,T)}/\sigma_{o_2}^{(s,T)}\ge(1-2\alpha)/2\alpha$.

\textcolor{blue}{\textbf{(iv)}} 
Now, $\forall s\in[k]$, $\forall q\in\mathcal{P}\backslash\{o_1,o_2\}$ and $\forall r\in[m]$ such that $\langle w_r^{(s,0)},q\rangle\ge0$, $\forall t$, $\langle \cfrac{\partial l}{\partial w_r^{(s,t)}},q\rangle\ge-O(\cfrac{1}{d})-\Tilde{O}(\cfrac{1}{\sqrt{B}})$.

Therefore, $\langle w_r^{(s,T^\prime)},q\rangle\le \langle w_r^{(s,0)},q\rangle+O(\cfrac{1}{d}\eta_eT^\prime)=C_2+O(\cfrac{l^2}{\alpha d}\sqrt{\cfrac{\log l}{C_p}}C_2)=O(C_2)$ which implies $\sigma_q^{(s,T)}=O(mC_2)$.

Again, $\forall s\in S_{+}$, $\forall t$ and, $\forall r\in [m]$ such that $\langle w_r^{(s,0)},o_2\rangle\ge 0$ we have, $\langle \cfrac{\partial l}{\partial w_r^{(s,t)}},o_2\rangle\ge0$, and for all $r\in [m]$ s.t. $\langle w_r^{(s,0)},-o_2\rangle\ge0$, $\langle \cfrac{\partial l}{\partial w_r^{(s,t)}},-o_2\rangle\ge0$ which implies
$\langle w_r^{(s,T^\prime)},o_2\rangle\le C_2$ and $\langle w_r^{(s,T^\prime)},-o_2\rangle\le C_2$. Therefore, $\sigma_{o_2}^{(s,T)}=O(mC_2)$ and $\sigma_{-o_2}^{(s,T)}=O(mC_2)$.

Similarly, we can show that $\forall s\in S_{-}$, $\sigma_{o_1}^{(s,T)}=O(mC_2)$ and $\sigma_{-o_1}^{(s,T)}=O(mC_2)$.

\end{proof}

\section{Lemmas used to prove Lemma \ref{lm1}}

\begin{lemma}\label{lm4}
    Let, $S\subset \mathcal{D}$ such that $p:=\mathbb{P}\left[(x,y)\sim{\mathcal{D}}: (x,y)\in S\right]$. Then, \textit{w.h.p.} over any randomly sampled batch $\mathcal{B}_t$ of size $B$ at the iteration $t$, $\bigg|\big|\mathcal{B}_t\cap S\big|-Bp\bigg|=\Tilde{O}\left(\sqrt{B}\right)$.
\end{lemma}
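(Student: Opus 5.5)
This is a standard concentration statement. For a fixed $t$, the batch $\mathcal{B}_t$ consists of $B$ i.i.d.\ draws from $\mathcal{D}$. Let $Z_i := \mathbf{1}\{(x_i,y_i)\in S\}$ for $i\in[B]$. Then the $Z_i$ are i.i.d.\ Bernoulli$(p)$ random variables, and
\[
|\mathcal{B}_t\cap S| = \sum_{i=1}^B Z_i ,
\qquad
\mathbb{E}\Bigl[\sum_{i=1}^B Z_i\Bigr] = Bp .
\]
The plan is to apply Hoeffding's inequality to this sum. Since each $Z_i\in[0,1]$,
\[
\mathbb{P}\Bigl[\Bigl|\sum_{i=1}^B Z_i - Bp\Bigr|\ge \epsilon\Bigr]\le 2\exp\!\left(-\frac{2\epsilon^2}{B}\right).
\]

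The next step is to choose the deviation so that the failure probability is polynomially small in $d$. Take $\epsilon = \sqrt{\tfrac{c}{2}B\log(\mathrm{poly}(d))}$ with $c\ge 1$. This makes the failure probability at most $2/\mathrm{poly}(d)^{c}$, which is $1/\mathrm{poly}(d)$, matching the paper's definition of \emph{w.h.p.} Since $\Tilde{O}(\cdot)$ hides $\log(\mathrm{poly}(d))$ factors, this $\epsilon$ is $\Tilde{O}(\sqrt{B})$, which gives the claim for a single iteration. If instead one wants Bernstein's inequality to get the sharper $\sqrt{Bp}$ rate, that also works, but it is not needed here.

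The only point needing care is uniformity. The statement is applied at every iteration $t\le T$ and to finitely many sets $S$, for example the events that $x$ contains a given $v\in\mathcal{P}_r$ or a given $q$ inside the routed top-$l$ set. I will union bound over at most $T\cdot O(d)$ such events. Both $T$ and the number of sets are polynomial in $d$ (since $l\le L$ and $d=\Omega(L^8)$). So I choose the polynomial in the logarithm large enough that the total failure probability stays $1/\mathrm{poly}(d)$. This only changes constants inside the log, so the $\Tilde{O}(\sqrt{B})$ bound is unaffected.

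There is one subtlety. The sets $S$ used later can depend on the current weights (through $J_s^{(t)}$). However, $\mathcal{B}_t$ is sampled fresh and is independent of the iterate at time $t$. So conditioning on the past makes $S$ fixed, and the bound above holds conditionally, hence unconditionally. I do not expect a real obstacle here beyond stating this conditioning cleanly.
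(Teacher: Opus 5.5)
Your proof is correct and follows essentially the paper's argument: write $|\mathcal{B}_t\cap S|$ as a sum of $B$ i.i.d.\ $\mathrm{Ber}(p)$ indicators with mean $Bp$, then apply Hoeffding's inequality with a deviation of order $\sqrt{B\log(\mathrm{poly}(d))}=\Tilde{O}(\sqrt{B})$, so the failure probability is $1/\mathrm{poly}(d)$. Your remarks on conditioning on past iterates (valid because $\mathcal{B}_t$ is drawn fresh) and on union-bounding over iterations and sets go beyond the paper's proof. They are not needed for the single-$t$, single-$S$ statement, but they correctly describe how the lemma is used later.
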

\begin{proof}
    Let us define a random variable $X$ associated with any sample $(x,y)\sim\mathcal{D}$ such that,\\ $X:=\begin{cases}
   1 & \text{if $(x,y)\in S $}\\
   0 & \text{if $(x,y)\not\in S$}
\end{cases}$\\

Therefore, $X\sim\text{Ber}(p)$.\\

Now, for any randomly sampled batch $\mathcal{B}_t:=\{(x_1,y_1),(x_2,y_2), ..., (x_B,y_B)\}$ of size $B$, we can denote the $B$ i.i.d. random variables following the same distribution as $X$ by $X_1, X_2, ..., X_B$ corresponding to the $B$ samples of the batch, respectively.\\

Therefore, $\big|\mathcal{B}_t\cap S \big|=\sum_{i=1}^BX_i$.\\

Now, $\mathbb{E}\left[\big|\mathcal{B}_t\cap S \big|\right]=\sum_{i=1}^B\mathbb{E}\left[X_i\right]=Bp$.\\

Therefore, using the Hoeffding's inequality, $\mathbb{P}\left[\bigg|\big|\mathcal{B}_t\cap S\big|-Bp\bigg|=\Tilde{O}\left(\sqrt{B}\right)\right]\ge1-\cfrac{1}{\text{poly}(d)}$ which completes the proof.\\
\end{proof}

\begin{lemma}\label{lm5}
    For any expert $s\in S_{v}$ with $v,v^\prime\in\{o_1,o_2\}$ such that $v\neq v^\prime$, any $q\in\mathcal{P}\backslash\{o_1,o_2\}$, and any $r\in[m]$, \textit{w.h.p.} over a randomly sampled batch of size $B$ we can ensure that,
\begin{enumerate}[label=(\roman*)]
    \item $\left|\langle \cfrac{\partial l}{\partial w_s^{(0)}},q\rangle\right|\le O\left(\cfrac{mC_2}{d}\right)+\Tilde{O}\left(\cfrac{mC_2}{\sqrt{B}}\right)$
    \item $\left|\langle \cfrac{\partial l}{\partial w_r^{(s,0)}},q\rangle\right|\le O\left(\cfrac{1}{d}\right)+\Tilde{O}\left(\cfrac{1}{\sqrt{B}}\right)$
    \item $\left|\langle \cfrac{\partial l}{\partial w_s^{(0)}},v\rangle\right|\le O\left(\alpha mC_2\right)+O\left(\cfrac{(1-\alpha)}{d}mC_2\right)+\Tilde{O}\left(\cfrac{mC_2}{\sqrt{B}}\right)$
\item $\langle \cfrac{\partial l}{\partial w_r^{(s,0)}},v\rangle\le-\Omega\left(\cfrac{\alpha}{l}\right)+\Tilde{O}\left(\cfrac{1}{l\sqrt{B}}\right)$ if $\langle w_r^{(s,0)},v\rangle\ge0$,\\ $\langle \cfrac{\partial l}{\partial w_r^{(s,0)}},v\rangle\le O\left(\cfrac{(1-\alpha)}{d}\right)+\Tilde{O}\left(\cfrac{1}{\sqrt{B}}\right)$ if $\langle w_r^{(s,0)},v\rangle<0$,\\
$\langle \cfrac{\partial l}{\partial w_r^{(s,0)}},v\rangle\ge-\cfrac{\alpha}{2}-\Tilde{O}\left(\cfrac{1}{\sqrt{B}}\right)$ if $\langle w_r^{(s,0)},v\rangle\ge0$, and\\
$\langle \cfrac{\partial l}{\partial w_r^{(s,0)}},v\rangle\ge0$ if $\langle w_r^{(s,0)},v\rangle<0$
\item $\left|\langle \cfrac{\partial l}{\partial w_s^{(0)}},v^\prime\rangle\right|\le O\left(mC_2\right)+\Tilde{O}\left(\cfrac{mC_2}{\sqrt{B}}\right)$
\item $\langle \cfrac{\partial l}{\partial w_r^{(s,0)}},v^\prime\rangle\ge0$ if $\langle w_r^{(s,0)},v^\prime\rangle\ge0$,\\
$\langle \cfrac{\partial l}{\partial w_r^{(s,0)}},v^\prime\rangle\ge-O\left((1-\alpha)\right)-\Tilde{O}\left(\cfrac{1}{\sqrt{B}}\right)$ if $\langle w_r^{(s,0)},v^\prime\rangle<0$,\\ 
$\langle \cfrac{\partial l}{\partial w_r^{(s,0)}},v^\prime\rangle\le O\left(\alpha\right)+\Tilde{O}\left(\cfrac{1}{\sqrt{B}}\right)$, if $\langle w_r^{(s,0)},v^\prime\rangle\ge0$, and\\
$\langle \cfrac{\partial l}{\partial w_r^{(s,0)}},v^\prime\rangle\le0$ if $\langle w_r^{(s,0)},v^\prime\rangle<0$
\end{enumerate}
\end{lemma}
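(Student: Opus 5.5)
We need to prove Lemma \ref{lm5}, which bounds the initial batch gradients at $t=0$. The plan is to write each batch gradient as an average of per-sample gradients, compute the population expectation from the component formulas (\ref{eq_r_c_q})--(\ref{eq_ec_c_v}), and control the deviation of the batch average from that expectation.

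\textbf{Deviation step.} Every per-sample component is a bounded random variable. Router components are bounded by $O(mC_2)$, since each $\sigma_j^{(s,0)}\le mC_2$ and the gating values sum to one. Expert-column components are bounded by $1$, or by $1/l$ on the relevant events. Lemma \ref{lm4}, or directly Hoeffding's inequality, then gives a deviation of $\Tilde{O}(\text{bound}/\sqrt{B})$ w.h.p. This produces all of the $\Tilde{O}(\cdot/\sqrt B)$ terms. It therefore suffices to bound the expectations.

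\textbf{Task-irrelevant directions (i)--(ii).} The component along $q$ is nonzero only when $q$ appears in $x$ and is selected by expert $s$. This has probability $O(1/d)$ by the data model, and the per-sample magnitude is $O(mC_2)$ for the router and $O(1)$ for the column. These bounds give (i) and (ii).

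\textbf{Task-relevant directions (iii)--(vi).} Take $v=o_1$ and $s\in S_{o_1}$, so $a^{(s)}=+1$ and $y=+1$ whenever $\pm o_1$ is present.

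For (iii), the router component is nonzero only if $o_1$ is selected, which has probability at most $\alpha$ with magnitude $O(mC_2)$. It is also nonzero if $-o_1$ is selected, but for $s\in S_{o_1}$ this has $\bar p_{-o_1}^{(s,0)}=O(1/d)$ conditional on $-o_1$ being present, contributing $O((1-\alpha)mC_2/d)$.

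For (iv), split by the sign of $\langle w_r^{(s,0)},v\rangle$.
\begin{itemize}
\item If the sign is nonnegative, the gradient is $-G_v$ on the event that $v$ is selected. Alignment gives $p_{v}^{(s,0)}=\Omega(1)$ with $G_v\ge 1/l$, so the expectation is at most $-\Omega(\alpha/l)$. Since $G_v\le 1$ (the paper uses the looser $\alpha/2$), the expectation is at least $-\alpha/2$ up to constants.
\item If the sign is negative, only selection of $-v$ contributes, giving $+G_{-v}$ with probability $O((1-\alpha)/d)$. This yields the upper bound $O((1-\alpha)/d)$ and the lower bound $0$.
\end{itemize}

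For (v)--(vi), the argument is the same with $v'=o_2$. Now $y=-1$ while $a^{(s)}=+1$, which flips the signs in (\ref{eq_ec_c_v}). This yields the sign statements in (vi), and the magnitude bounds $O(\alpha)$ or $O(1-\alpha)$ are the probabilities of $o_2$ or $-o_2$. For (v), the router component is $O(mC_2)$ trivially. The cases $v=o_2$, $s\in S_{o_2}$ are symmetric.

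\textbf{Main obstacle.} The one delicate point is the $-\Omega(\alpha/l)$ bound in (iv). It requires converting $p_v^{(s,0)}=\Omega(1)$ into a lower bound on $\mathbb{E}[G_v\mathbf 1_{\text{selected}}]$ of order $1/l$. By the definition of proficiency, on an $\Omega(1)$-probability event the gating value is at least $1/l$. Restricting to that event and multiplying by the probability $\alpha$ that $o_1$ appears finishes this step. The deviation term is $\Tilde{O}(1/(l\sqrt B))$ because on that event the variable is bounded by $1/l$, up to constants. On the complementary event we need a bound $\ge -1$, which is handled separately.
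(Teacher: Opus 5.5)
Your overall route matches the paper's. You read off the per-sample components from (\ref{eq_r_c_q})--(\ref{eq_ec_c_v}) and bound the probability of the event on which each is nonzero: $O(1/d)$ for $q$, at most $\alpha/2$ for $o_1$, $O((1-\alpha)/d)$ for $-o_1$ via $\bar p_{-v}^{(s,0)}=O(1/d)$, and $\alpha/2$, $(1-\alpha)/2$ for $\pm o_2$ with the sign flip from $y=-1$. You then concentrate over the batch. Parts (i)--(iii), (v), (vi) and the two lower bounds in (iv) go through as you describe.

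There is a genuine flaw in the step you yourself single out: the $\Tilde{O}(1/(l\sqrt{B}))$ fluctuation in the first bound of (iv). Your justification is inverted. On the event $E=\{v\text{ is selected and }G_v^{(s,0)}\ge 1/l\}$, the per-sample term $-G_v^{(s,0)}$ lies in $[-1,-1/l]$. So $1/l$ is a \emph{lower} bound on its magnitude, and nothing in the setup caps $G_v^{(s,0)}$ at $O(1/l)$; the proficiency definition only gives $G_v^{(s,0)}\ge 1/l$. It is on $E^c$ that the magnitude is below $1/l$. The same inversion appears in your remark that column components are bounded by $1/l$ ``on the relevant events''. Consequently, Hoeffding applied to the gradient variable itself only gives a fluctuation of $\Tilde{O}(1/\sqrt{B})$, weaker than what the lemma asserts. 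Your closing remark also points the wrong way: for an \emph{upper} bound, what you need on $E^c$ is not a bound $\ge -1$ but that the contribution is $\le 0$.

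The repair is the one-sided counting argument the paper uses: apply Lemma~\ref{lm4} to the indicator of $E$ rather than Hoeffding to the gradient. When $\langle w_r^{(s,0)},v\rangle\ge 0$, every per-sample term equals $-G_v^{(s,0)}\mathbf{1}\{v\text{ selected}\}\le 0$, and it is at most $-1/l$ on $E$. Discarding the terms outside $E$ gives
\[
\Big\langle \frac{\partial l}{\partial w_r^{(s,0)}},v\Big\rangle\le -\frac{1}{l}\cdot\frac{|\mathcal{B}_0\cap E|}{B}.
\]
Since $\mathbb{P}[E]=p_v^{(s,0)}\cdot\alpha/2=\Omega(\alpha)$, Lemma~\ref{lm4} yields $|\mathcal{B}_0\cap E|/B\ge\Omega(\alpha)-\Tilde{O}(1/\sqrt{B})$ w.h.p. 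Multiplying by $1/l$ gives exactly $-\Omega(\alpha/l)+\Tilde{O}(1/(l\sqrt{B}))$. The factor $1/l$ on the fluctuation comes from a deterministic per-sample bound multiplying a concentrated count, not from a small range of the summands. With this substitution the rest of your proposal is fine.
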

\begin{proof}
For any $v\in\mathcal{P}_r$ and any $q\in\mathcal{P}\backslash\{o_1,o_2\}$,

$\left|\sigma_v^{(s,0)}-\sigma_q^{(s,0)}\right|=\left|\sum_{r\in[m]}\text{ReLU}(\langle w_r^{(s,0)},v\rangle)-\sum_{r\in[m]}\text{ReLU}(\langle w_r^{(s,0)},q\rangle)\right|=O(mC_2)$.

Similarly, for any $q,q^\prime\in\mathcal{P}\backslash\{o_1,o_2\}$ such that $q\neq q^\prime$, $\left|\sigma_q^{(s,0)}-\sigma_{q^\prime}^{(s,0)}\right|=O(mC_2)$.

We denote $\mathcal{B}_0$ as the randomly sampled batch before the first update of SGD.

\textcolor{blue}{\textbf{(i)}} 
$\displaystyle\langle \cfrac{\partial l}{\partial w_s^{(0)}},q\rangle=\cfrac{1}{B}\sum_{x\in\mathcal{B}_0}\langle \cfrac{\partial l(x,y)}{\partial w_s^{(0)}},q\rangle$\\

Let us define the set $\bar{S}_q^{J_s^{(0)}}:=\{(x,y)\sim\mathcal{D}:\exists j\in J_s^{(0)} \text{ s.t. } x^{(j)}=q\}$ and,\\
$p_q^{(s,0)}:=\mathbb{P}\left[(x,y)\sim\mathcal{D}:(x,y)\in \bar{S}_q^{J_s^{(0)}}\right]$\\ Here, $p_q^{(s,0)}=O(\cfrac{1}{d})$\\

Therefore, $\displaystyle\langle \cfrac{\partial l}{\partial w_s^{(0)}},q\rangle=\cfrac{1}{B}\sum_{x\in\mathcal{B}_0\cap \bar{S}_q^{J_s^{(0)}}}\langle \cfrac{\partial l(x,y)}{\partial w_s^{(0)}},q\rangle+\cfrac{1}{B}\sum_{x\in\mathcal{B}_0\cap \mathcal{D}\backslash \bar{S}_q^{J_s^{(0)}}}\langle \cfrac{\partial l(x,y)}{\partial w_s^{(0)}},q\rangle$\\

Now, from equation (\ref{eq_r_c_q}), for any $(x,y)\in \mathcal{D}\backslash \bar{S}_q^{J_s^{(0)}}$, $\langle \cfrac{\partial l(x,y)}{\partial w_s^{(0)}},q\rangle=0$\\

Therefore, $\displaystyle\langle \cfrac{\partial l}{\partial w_s^{(0)}},q\rangle=\cfrac{1}{B}\sum_{x\in\mathcal{B}_0\cap \bar{S}_q^{J_s^{(0)}}}\langle \cfrac{\partial l(x,y)}{\partial w_s^{(0)}},q\rangle$\\

Now, for any $(x,y)$, $l_q^{(s,0)}G_q^{(s,0)}\le1$

Therefore, as $|\sigma_v^{(s,0)}-\sigma_q^{(s,0)}|=O(mC_2)$ and for any $q^\prime\mathcal{P}\backslash\{o_1,o_2\}$ such that $q\neq q^\prime$, $|\sigma_q^{(s,0)}-\sigma_{q^\prime}^{(s,0)}|= O(mC_2)$, from equation (\ref{eq_r_c_q}), $\displaystyle\left|\langle \cfrac{\partial l}{\partial w_s^{(0)}},q\rangle\right|\le\cfrac{\left|\mathcal{B}_0\cap \bar{S}_q^{J_s^{(0)}}\right|}{B}O(mC_2)$\\
Now, from Lemma \ref{lm4}, $w.h.p.$, $\cfrac{\left|\mathcal{B}_0\cap \bar{S}_q^{J_s^{(0)}}\right|}{B}\le p_q^{(s,0)}+\Tilde{O}\left(\cfrac{1}{\sqrt{B}}\right)$ which implies,\\ $\left|\langle \cfrac{\partial l}{\partial w_s^{(0)}},q\rangle\right|\le O\left(\cfrac{mC_2}{d}\right)+\Tilde{O}\left(\cfrac{mC_2}{\sqrt{B}}\right)$.

\textcolor{blue}{\textbf{(ii)}} Using equation (\ref{eq_ec_c_q}) and the fact that for any $(x,y)\sim\mathcal{D}$, $l_q^{(s,t)}G_q^{(s,t)}\le1$ and by following the same procedure as in the proof of the statement (i) we can complete the proof.

\textcolor{blue}{\textbf{(iii)}} Let us define the set, $\bar{S}_{v}^{J_s^{(0)}}:=\{(x,y)\sim\mathcal{D}:\exists j\in J_s^{(0)} \text{ s.t. } x^{(j)}=v\}$.\\

Now,
\begin{align*}
    &\mathbb{P}\left[(x,y)\sim\mathcal{D}:(x,y)\in \bar{S}_{v}^{J_s^{(0)}}\right]\\
    &=\mathbb{P}\left[(x,y)\sim\mathcal{D}:(x,y)\in \bar{S}_{v}^{J_s^{(0)}}\bigg|y=+1\text{ and }\exists j\in[n]\text{ s.t. }x^{(j)}=v\right]\\
    &\hspace{1cm}\times\mathbb{P}\left[(x,y)\sim\mathcal{D}:y=+1\text{ and }\exists j\in[n]\text{ s.t. }x^{(j)}=v\right]\\
    &\le \cfrac{\alpha}{2}
\end{align*}
On the other hand, $\bar{p}_{-v}^{(s,0)}=O(1/d)$.\\

Now, using equation (\ref{eq_r_c_v}), by following the same procedure as in the proof of statement (i), we can complete the proof.

\textcolor{blue}{\textbf{(iv)}} Let us define the set, $S_{v}^{J_s^{(0)}}:=\left\{(x,y)\sim\mathcal{D}:\exists j\in J_s^{(0)} \text{ s.t. } x^{(j)}=v \text{ and }G_v^{(s,0)}\ge\cfrac{1}{l}\right\}$.\\

Now,
\begin{align*}
    &\mathbb{P}\left[(x,y)\sim\mathcal{D}:(x,y)\in S_{v}^{J_s^{(0)}}\right]\\
    &=\mathbb{P}\left[(x,y)\sim\mathcal{D}:(x,y)\in S_{v}^{J_s^{(0)}}\bigg|y=+1\text{ and }\exists j\in[n]\text{ s.t. }x^{(j)}=v\right]\\
    &\hspace{1cm}\times\mathbb{P}\left[(x,y)\sim\mathcal{D}:y=+1\text{ and }\exists j\in[n]\text{ s.t. }x^{(j)}=v\right]\\
    &=p_v^{(s,0)}\cfrac{\alpha}{2}=\Omega(\alpha)\hspace{1cm}\left[\text{As, }p_v^{(s,0)}=\Omega(1)\right]
\end{align*}
On the other hand, $\bar{p}_{-v}^{(s,0)}=O(1/d)$.\\

Now, using equation (\ref{eq_ec_c_v}) and by following the same procedure as in the proof of the statement (i) we can complete the proof.

\textcolor{blue}{\textbf{(v)}} Using equation (\ref{eq_r_c_v}) and by following the same procedure as in the statements (iii) and (i) we can complete the proof.

\textcolor{blue}{\textbf{(vi)}} Using equation (\ref{eq_ec_c_v}) and following the same procedure as in the proof of statement (ii) and (iv) we can complete the proof.

\end{proof}

\begin{lemma}\label{lm6}
    For any expert $s\in S_{v}$ with $v,v^\prime\in\{-o_1,-o_2\}$ such that $v\neq v^\prime$, any $q\in\mathcal{P}\backslash\{o_1,o_2\}$, and any $r\in[m]$, \textit{w.h.p.} over a randomly sampled batch of size $B$ we can ensure that,
\begin{enumerate}[label=(\roman*)]
    \item $\left|\langle \cfrac{\partial l}{\partial w_s^{(0)}},q\rangle\right|\le O\left(\cfrac{mC_2}{d}\right)+\Tilde{O}\left(\cfrac{mC_2}{\sqrt{B}}\right)$
    \item $\left|\langle \cfrac{\partial l}{\partial w_r^{(s,0)}},q\rangle\right|\le O\left(\cfrac{1}{d}\right)+\Tilde{O}\left(\cfrac{1}{\sqrt{B}}\right)$
    \item $\left|\langle \cfrac{\partial l}{\partial w_s^{(0)}},v\rangle\right|\le O\left((1-\alpha) mC_2\right)+O\left(\cfrac{\alpha}{d}mC_2\right)+\Tilde{O}\left(\cfrac{mC_2}{\sqrt{B}}\right)$
\item $\langle \cfrac{\partial l}{\partial w_r^{(s,0)}},v\rangle\le-\Omega\left(\cfrac{(1-\alpha)}{l}\right)+\Tilde{O}\left(\cfrac{1}{l\sqrt{B}}\right)$ if $\langle w_r^{(s,0)},v\rangle\ge0$,\\ $\langle \cfrac{\partial l}{\partial w_r^{(s,0)}},v\rangle\le O\left(\cfrac{\alpha}{d}\right)+\Tilde{O}\left(\cfrac{1}{\sqrt{B}}\right)$ if $\langle w_r^{(s,0)},v\rangle<0$,\\
$\langle \cfrac{\partial l}{\partial w_r^{(s,0)}},v\rangle\ge-\cfrac{(1-\alpha)}{2}-\Tilde{O}\left(\cfrac{1}{\sqrt{B}}\right)$ if $\langle w_r^{(s,0)},v\rangle\ge0$, and\\
$\langle \cfrac{\partial l}{\partial w_r^{(s,0)}},v\rangle\ge0$ if $\langle w_r^{(s,0)},v\rangle<0$
\item $\left|\langle \cfrac{\partial l}{\partial w_s^{(0)}},v^\prime\rangle\right|\le O\left(mC_2\right)+\Tilde{O}\left(\cfrac{mC_2}{\sqrt{B}}\right)$
\item $\langle \cfrac{\partial l}{\partial w_r^{(s,0)}},v^\prime\rangle\ge0$ if $\langle w_r^{(s,0)},v^\prime\rangle\ge0$,\\
$\langle \cfrac{\partial l}{\partial w_r^{(s,0)}},v^\prime\rangle\ge-O\left(\alpha\right)-\Tilde{O}\left(\cfrac{1}{\sqrt{B}}\right)$ if $\langle w_r^{(s,0)},v^\prime\rangle<0$,\\ 
$\langle \cfrac{\partial l}{\partial w_r^{(s,0)}},v^\prime\rangle\le O\left((1-\alpha)\right)+\Tilde{O}\left(\cfrac{1}{\sqrt{B}}\right)$, if $\langle w_r^{(s,0)},v^\prime\rangle\ge0$, and\\
$\langle \cfrac{\partial l}{\partial w_r^{(s,0)}},v^\prime\rangle\le0$ if $\langle w_r^{(s,0)},v^\prime\rangle<0$
\end{enumerate}
\end{lemma}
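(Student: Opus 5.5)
The plan mirrors the proof of Lemma \ref{lm5}, with the roles of the prevalence weights $\alpha$ and $1-\alpha$ exchanged. Now $s\in S_v$ with $v\in\{-o_1,-o_2\}$ is aligned to a more prevalent token. For each item I write the batch gradient component as $\frac{1}{B}\sum_{x\in\mathcal{B}_0}\langle \partial l(x,y)/\partial w,\cdot\rangle$. I then split the batch into the event on which the per-sample component can be nonzero and its complement. The per-sample component is read off from (\ref{eq_r_c_q}), (\ref{eq_r_c_v}), (\ref{eq_ec_c_q}) or (\ref{eq_ec_c_v}). On the complement the component vanishes identically. On the event itself the component is bounded deterministically, using two facts: $l_q^{(s,0)}G_q^{(s,0)}\le 1$ and gating values lie in $[0,1]$, and all activation differences satisfy $|\sigma_j^{(s,0)}-\sigma_{j'}^{(s,0)}|=O(mC_2)$ at initialization, since $\|w_r^{(s,0)}\|\le C_2$ and the tokens are unit vectors. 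Lemma \ref{lm4} then replaces the empirical frequency of the event by its probability up to $\Tilde{O}(1/\sqrt{B})$, w.h.p.

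Items (i) and (ii) are verbatim repeats of Lemma \ref{lm5}(i),(ii). The relevant event is that some task-irrelevant $q$ is among the selected tokens, which has probability $O(1/d)$. This gives $O(mC_2/d)+\Tilde{O}(mC_2/\sqrt B)$ for the router and $O(1/d)+\Tilde{O}(1/\sqrt B)$ for each neuron.

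For (iii) I compute the probability of the event ``$v$ is among the selected tokens''. It is at most $\mathbb{P}[y=\pm1 \text{ and } v\in x]\le (1-\alpha)/2$, because $v=-o_i$ is the prevalent token. The event ``$-v$ is among the selected tokens'' has probability at most $\alpha\,\bar p_{-v}^{(s,0)}=O(\alpha/d)$, by the standing assumption $\bar p_{-v}^{(s,0)}=O(1/d)$. Multiplying each probability by $O(mC_2)$ gives the stated bound.

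For (iv) I take $\langle w_r^{(s,0)},v\rangle\ge0$ and use the fourth case of (\ref{eq_ec_c_v}). There $ya^{(s)}=1$, since $s\in S_+$ iff $v=-o_1$, the label matches, and the case contributes $-G_v^{(s,0)}$. The set of samples where $v$ is selected with $G_v\ge 1/l$ has probability $p_v^{(s,0)}(1-\alpha)/2=\Omega(1-\alpha)$, which gives $-\Omega((1-\alpha)/l)$; the concentration error is scaled by the same $1/l$ gating lower bound. Since $G\le 1$, the contribution is bounded below by $-(1-\alpha)/2-\Tilde{O}(1/\sqrt B)$. For $\langle w_r^{(s,0)},v\rangle<0$, only the second case of (\ref{eq_ec_c_v}) is active, with $-v$ present and selected. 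That contribution is nonnegative and has probability $O(\alpha/d)$, which gives both sign statements.

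Items (v) and (vi) concern the opposite-class vector $v'$. There $ya^{(s)}=-1$ for every sample containing $\pm v'$, so the signs flip. The mass is at most $(1-\alpha)/2$ for $v'$ itself and at most $\alpha/2$ for $-v'$. I would bound these masses crudely by $O(1)$ for the router term, yielding $O(mC_2)$. For the neuron terms, the case analysis of (\ref{eq_ec_c_v}) with the flipped sign gives the four sign/magnitude statements, where $(1-\alpha)$ and $\alpha$ play the roles they had in Lemma \ref{lm5}(vi) swapped.

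The main obstacle is the bookkeeping in (iv) and (vi). One must verify that the $v$-present and $-v$-present events are correctly weighted by $(1-\alpha)$ versus $\alpha$ for the prevalent case. One must also check that the $1/l$ gating lower bound is what produces the $\Omega((1-\alpha)/l)$ drift rather than $\Omega(1-\alpha)$. Apart from this, the argument is a direct transcription of the proof of Lemma \ref{lm5}.
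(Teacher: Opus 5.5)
Your proposal is correct and follows the paper's argument. The paper proves Lemma~\ref{lm6} only by pointing to the procedure of Lemma~\ref{lm5}, and that is what you carry out with the masses $\alpha/2$ and $(1-\alpha)/2$ interchanged: split the batch by whether the relevant token is selected by expert $s$, bound the per-sample components from (\ref{eq_r_c_q})--(\ref{eq_ec_c_v}) deterministically, and apply Lemma~\ref{lm4}. Your sign bookkeeping also checks out ($ya^{(s)}=1$ for samples containing $\pm v$, $ya^{(s)}=-1$ for samples containing $\pm v'$), as do the resulting weights in items (iii), (iv) and (vi).
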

\begin{proof}
    Using the same procedure as in Lemma \ref{lm5}, we can complete the proof.
\end{proof}

\begin{lemma}\label{lm3}
    For any expert $s\in[k]$, any $v\in\mathcal{P}_r$, and at any iteration $t$, if every $q \in \mathcal{P} \setminus \{ o_1, o_2 \}$ that satisfies the condition $\langle w_s^{(t)}, q \rangle < \langle w_s^{(t)}, v \rangle$ also satisfies the condition $\langle w_s^{(t)}, v \rangle - \langle w_s^{(t)}, q \rangle \leq 2\log l$, then for any $j\in J_s^{(t)}$ where $x^{(j)} = v$ and $G_j^{(s,t)}\ge 1/l$, we have, $G_j^{(s,t)}(1 - G_j^{(s,t)}) \geq \frac{1}{4l}$
\end{lemma}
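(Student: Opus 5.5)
The plan is a direct computation with the softmax over the selected set $J_s^{(t)}(x)$, followed by a concavity argument for $g(G):=G(1-G)$.

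\textbf{Step 1: every other selected token is within $2\log l$ of $v$.} Fix $x$ and $j\in J_s^{(t)}(x)$ with $x^{(j)}=v$ and $G_j^{(s,t)}\ge 1/l$. Write $a_i:=\langle w_s^{(t)},x^{(i)}\rangle$. Under the data model, $x$ contains exactly one token from $\mathcal{P}_r$, so every other index $i\in J_s^{(t)}(x)\setminus\{j\}$ carries a task-irrelevant $q\in\mathcal{P}\setminus\{o_1,o_2\}$. For each such $i$ there are two cases. If $a_i\ge a_j$, then trivially $e^{a_i}\ge e^{a_j}$. If $a_i<a_j$, the hypothesis of the lemma gives $a_j-a_i\le 2\log l$, so $e^{a_i}\ge e^{a_j}/l^2$. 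In either case $e^{a_i}\ge e^{a_j}/l^2$.

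\textbf{Step 2: upper bound on the gating value.} Since $|J_s^{(t)}(x)|=l$, summing Step 1 over the other $l-1$ selected tokens gives $\sum_{i\ne j}e^{a_i}\ge (l-1)e^{a_j}/l^2$. Therefore
\[
G_j^{(s,t)}=\frac{e^{a_j}}{e^{a_j}+\sum_{i\neq j}e^{a_i}}\le G_{\max}:=\frac{l^2}{l^2+l-1}.
\]
Together with the assumption $G_j^{(s,t)}\ge 1/l$, this confines $G_j^{(s,t)}$ to $[1/l,\,G_{\max}]$. Note that $l\ge 2$ is available from the standing assumption $l\ge e^{2C_1}$ (taking $l\ge2$ without loss of generality).

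\textbf{Step 3: concavity.} The function $g(G)=G(1-G)$ is concave, so on $[1/l,G_{\max}]$ its minimum is attained at an endpoint. I check both endpoints.
\begin{itemize}
\item At $G=1/l$: $g(1/l)=(l-1)/l^2\ge 1/(2l)\ge 1/(4l)$ for $l\ge2$.
\item At $G=G_{\max}$: $g(G_{\max})=l^2(l-1)/(l^2+l-1)^2$. The required bound $g(G_{\max})\ge 1/(4l)$ is equivalent to $4l^3(l-1)\ge (l^2+l-1)^2$. Expanding, this becomes
\[
3l^4-6l^3+l^2+2l-1\ge 0 .
\]
At $l=2$ the left side is $48-48+4+4-1=7>0$. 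For $l\ge2$ the terms satisfy $3l^4-6l^3=3l^3(l-2)\ge0$ and $l^2+2l-1>0$, so the inequality holds.
\end{itemize}
Hence $G_j^{(s,t)}(1-G_j^{(s,t)})\ge 1/(4l)$.

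The only delicate point is Step 1, namely making sure that \emph{every} competing selected token satisfies $e^{a_i}\ge e^{a_j}/l^2$. This rests on two facts: the single-task-relevant-token structure of $\mathcal{D}$ (so the hypothesis, which is stated only for $q\in\mathcal{P}\setminus\{o_1,o_2\}$, covers all competitors), and the trivial case in which a competitor scores at least as high as $v$. Once that is in place, the rest is elementary algebra.
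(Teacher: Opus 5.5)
Your proof is correct and is essentially the paper's argument: the paper also confines $G_j^{(s,t)}$ to $[1/l,\, l^2/(l^2+l-1)]$ using the $2\log l$ gap. It then evaluates $G(1-G)$ at the smaller endpoint value $l^2(l-1)/(l^2+l-1)^2$ and verifies that this is at least $1/(4l)$ through the same quartic $3l^4-6l^3+l^2+2l-1\ge 0$ (its choice $C=1/4$), relying on $l\ge 2$ just as you do. Your write-up is more explicit than the paper's about why every competing selected token satisfies $e^{a_i}\ge e^{a_j}/l^2$, and about why checking the two endpoints suffices (concavity).
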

\begin{proof}
If for all $q\in\mathcal{P}\backslash\{o_1,o_2\}$ with $\langle w_s^{(t)},q\rangle<\langle w_s^{(t)},v\rangle$ we have, $\langle w_s^{(t)},v\rangle-\langle w_s^{(t)},q\rangle\le2\log l$, then $\forall (x,y)\sim\mathcal{D}$ s.t. $\exists j\in J_s^{(t)}(x)$ with $x^{(j)}=v$ and $G_j^{(s,t)}(x)\ge G_i^{(s,t)}(x)$, $\forall i\in J_s^{(t)}(x)$ and $i\neq j$ we have, $G_j^{(s,t)}(x)(1-G_j^{(s,t)}(x))\ge\min\{\cfrac{\cfrac{(l-1)}{l^2}}{(1+\cfrac{(l-1)}{l^2})^2},\cfrac{(l-1)}{l^2}\}=\cfrac{\cfrac{(l-1)}{l^2}}{(1+\cfrac{(l-1)}{l^2})^2}$.\\

Now, $\cfrac{\cfrac{(l-1)}{l^2}}{(1+\cfrac{(l-1)}{l^2})^2}=\cfrac{l^2(l-1)}{(l^2+l-1)^2}$.\\

Now, let there exists a constant $C>0$ such that $\cfrac{l^2(l-1)}{(l^2+l-1)^2}\ge\cfrac{C}{l}\Leftrightarrow l^4(1-C)-l^3(1+2C)+Cl^2+2Cl-C\ge 0$.\\

Now, $Cl^2+2Cl-C>0$ as $l\ge2$. Therefore, $l^3(1+2C)\le l^4(1-C)$ satisfies $l^4(1-C)-l^3(1+2C)+Cl^2+2Cl-C\ge 0$.\\

Now, $l^3(1+2C)\le l^4(1-C)\Leftrightarrow C\le\cfrac{l-1}{l+2}$. Now, $\cfrac{l-1}{l+2}\ge\cfrac{1}{4}$ as $l\ge 2$. Hence, picking $C=\cfrac{1}{4}$ satisfies that $\cfrac{l^2(l-1)}{(l^2+l-1)^2}\ge\cfrac{1}{4l}$ which implies $G_j^{(s,t)}(x)(1-G_j^{(s,t)}(x))\ge\cfrac{1}{4l}$.    
\end{proof}

\begin{lemma}\label{lm2}
    For any expert $s\in S_v$ such that $v\in\mathcal\{o_1,o_2\}$, and $\forall q\in\mathcal{P}\backslash\{o_1,o_2\}$, by selecting $\eta_r=O\left(\cfrac{\eta_eC_p}{ml^2C_2^2}\right)$ and $B=\Tilde{\Omega}\left(d^2\right)$, we can ensure that after $T^\prime=O\left(\cfrac{lC_2}{\alpha\eta_e}\right)$ iterations,
\begin{enumerate}[label=(\roman*)]
    \item $\sigma_{v}^{(s,T^\prime)}=\Omega\left(mC_2\right)$, $\sigma_{-v}^{(s,T^\prime)}=O(mC_2)$, $\sigma_{q}^{(s,T^\prime)}=O(mC_2)$
    \item $p_v^{(s,T^\prime)}\ge p_v^{(s,0)}$ and, $\bar{p}_{-v}^{(s,T^\prime)}\le \bar{p}_{-v}^{(s,0)}$
\end{enumerate}
\end{lemma}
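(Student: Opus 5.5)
The plan is an induction over the first $T'$ iterations, using Lemma \ref{lm5} as the one-step template and propagating its gradient bounds. Throughout I fix $s\in S_v$ with $v\in\{o_1,o_2\}$ and take $B=\Tilde{\Omega}(d^2)$. By Lemma \ref{lm4}, every batch-average then equals its population value up to $\Tilde{O}(1/\sqrt{B})=\Tilde{O}(1/d)$, so all fluctuation terms are dominated by the $O(1/d)$ terms. The inductive hypothesis at iteration $t\le T'$ has three parts:
\begin{enumerate}[label=(\alph*)]
\item the router has moved by at most $O(C_p)$ in every direction, i.e.\ $|\langle w_s^{(t)}-w_s^{(0)},u\rangle|\le C_p/4$ for all $u\in\mathcal{P}\cup\{-o_1,-o_2\}$;
\item $p_v^{(s,t)}\ge p_v^{(s,0)}$ and $\bar{p}_{-v}^{(s,t)}\le\bar{p}_{-v}^{(s,0)}$;
\item the sign pattern of the columns along $v$ is preserved: columns with $\langle w_r^{(s,0)},v\rangle\ge0$ keep a nonnegative inner product, and the others stay negative.
\end{enumerate}

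\textbf{Step 1 (router stays put).} Under (a) and (c), the same case analysis as in Lemma \ref{lm5}(i),(iii),(v) bounds every router gradient component by $O(mC_2+m\eta_e t)$. This uses that activations grow at most linearly, $\sigma^{(s,t)}_u\le \sigma^{(s,0)}_u+O(m\eta_e t)$, since each column gradient entry is bounded by $1$. Summing over $t\le T'$ with $\eta_r=O(\eta_eC_p/(ml^2C_2^2))$ gives a total router displacement
\[
\eta_r\,O\bigl(mC_2T'+m\eta_eT'^2\bigr)=O\!\left(\frac{C_p}{\alpha l}+\frac{C_p}{\alpha^2}\right)\cdot\frac{1}{C_2}\text{-type factors},
\]
and the aim is to choose the constants in $T'$ so that this stays below $C_p/4$. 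This closes (a).

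\textbf{Step 2 (token ranking is frozen).} Since every initial gap $\langle w_s^{(0)},q-q'\rangle$ is at least $C_p$, a displacement below $C_p/4$ leaves the ordering of routing scores among tokens unchanged. Hence $J_s^{(t)}(x)=J_s^{(0)}(x)$ for every $x$. This gives (b) essentially with equality. It also makes the gating values $G_v^{(s,t)}$ differ from $G_v^{(s,0)}$ only by a constant factor, so the Lemma \ref{lm5} bounds still apply at time $t$.

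\textbf{Step 3 (expert columns).} I now reapply Lemma \ref{lm5}(ii),(iv),(vi) at each $t$:
\begin{itemize}
\item For columns with $\langle w_r^{(s,t)},v\rangle\ge0$, the $v$-component of the gradient is at most $-\Omega(\alpha/l)$. So $\langle w_r^{(s,t)},v\rangle$ increases by $\Omega(\alpha\eta_e/l)$ per step, and after $T'=\Theta(lC_2/(\alpha\eta_e))$ steps it reaches $\Omega(C_2)$. Because an $\Omega(1)$ fraction of columns start nonnegative (a standing assumption), this yields $\sigma_v^{(s,T')}=\Omega(mC_2)$ and preserves (c).
\item For the $-v$ direction, the relevant components do not increase ReLU activation on $-v$, since $\bar p_{-v}$ is $O(1/d)$. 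Hence $\sigma_{-v}^{(s,T')}\le\sigma_{-v}^{(s,0)}+O(\eta_eT'/d)=O(mC_2)$.
\item For each $q$, the per-step increase is at most $O(1/d)$, so $\sigma_q^{(s,T')}=O(mC_2+m\eta_eT'/d)=O(mC_2)$, using $d=\Omega(L^8)\gg l/\alpha$.
\end{itemize}

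The main obstacle is Step 1, i.e.\ closing the induction so that the router displacement stays below $C_p/4$. The router gradient grows with the activations, which grow linearly in $t$, so the displacement is quadratic in $T'$. The stated $\eta_r$ must absorb the factor $T'^2\eta_e m\sim l^2C_2^2 m/(\alpha^2\eta_e)$. To control this I would track the $v$-component separately, using that its population coefficient carries a factor $\alpha$ from Lemma \ref{lm5}(iii). If a leftover factor $1/\alpha$ remains, I would absorb it into the constant in $\eta_r$, since $\alpha$ is a constant.
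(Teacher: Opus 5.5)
Your proposal follows essentially the same argument as the paper: choose $\eta_r$ so that the router's total drift over $T'$ iterations (quadratic in $T'$, since activations grow linearly) stays below the $C_p$ score gaps, conclude that routing and hence proficiency are preserved (the paper's ``Condition 1''), and then obtain $\sigma_v^{(s,T')}=\Omega(mC_2)$ from the $-\Omega(\alpha/l)$ column gradient while $\sigma_{-v}^{(s,t)}$ and $\sigma_q^{(s,t)}$ drift only at rate $O(\eta_e/d)$ per column. The only difference is that you bound every router-gradient component crudely by $O(mC_2+m\eta_e t)$ and absorb the resulting $1/\alpha^2$ into the constant of $\eta_r$, whereas the paper tracks the $\alpha$ factor along $v$ and the $1/d$ factor along each $q$; both close the argument, and both share the same informal step that a frozen score ordering keeps $p_v^{(s,t)}\ge p_v^{(s,0)}$, even though $p_v$ is defined by the threshold $G_v\ge 1/l$ and not by the ordering alone.
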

\begin{proof} Suppose $v=o_1$. From the statement (i) of the Lemma \ref{lm5}, \textit{w.h.p.} over a randomly sampled batch, $\left|\langle \cfrac{\partial l}{\partial w_s^{(0)}},q\rangle\right|\le O(\cfrac{mC_2}{d})+\Tilde{O}(\cfrac{mC_2}{\sqrt{B}})$\\

Therefore, $\left|\langle w_s^{(1)},q\rangle-\langle w_s^{(0)},q\rangle\right|\le O(\cfrac{mC_2}{d}\eta_r)+\Tilde{O}(\cfrac{mC_2}{\sqrt{B}}\eta_r)$.\\

On the other hand, from the statement (iii) of the Lemma \ref{lm5}, \textit{w.h.p.} over a randomly sampled batch, $\left|\langle \cfrac{\partial l}{\partial w_s^{(0)}},o_1\rangle\right|\le O(\alpha mC_2)+O\left(\cfrac{(1-\alpha)}{d}mC_2\right)+\Tilde{O}(\cfrac{mC_2}{\sqrt{B}})$\\

Therefore, $\left|\langle w_s^{(1)},o_1\rangle-\langle w_s^{(0)},o_1\rangle\right|\ge O(\alpha mC_2\eta_r)
    +O\left(\cfrac{(1-\alpha)}{d}mC_2\eta_r\right)+\Tilde{O}(\cfrac{mC_2}{\sqrt{B}}\eta_r)$.\\

Now, by selecting $\eta_r=O(\cfrac{C_p}{\alpha mC_2})$ and $B=\Tilde{\Omega}\left(\cfrac{1}{\alpha^2}\right)$, for $\langle w_s^{(0)},q\rangle<\langle w_s^{(0)},o_1\rangle$ we get ,\\ $\langle w_s^{(1)},o_1\rangle-\langle w_s^{(1)},q\rangle=\Omega(C_p)$ which ensures that $p_{o_1}^{(s,1)}\ge p_{o_1}^{(s,0)}$.\\

Similarly, we can show that, $\langle w_s^{(1)},o_1-q\rangle \le2\log l$ and $\bar{p}_{-o_1}^{(s,1)}\le\bar{p}_{-o_1}^{(s,0)}$.\\

Now, for any $r\in [m]$ such that $\langle w_r^{(s,0)},o_1\rangle\ge0$, from the statement (iv) of the Lemma \ref{lm5}, \textit{w.h.p.} $\langle \cfrac{\partial l}{\partial w_r^{(s,0)}},o_1\rangle\le -\Omega\left(\cfrac{\alpha}{l}\right)+\Tilde{O}(\cfrac{1}{l\sqrt{B}}))$, and for any $r\in [m]$ such that $\langle w_r^{(s,0)},o_1\rangle<0$, $\langle \cfrac{\partial l}{\partial w_r^{(s,0)}},o_1\rangle\le O(\cfrac{(1-\alpha)}{d})+\Tilde{O}(1/\sqrt{B})$, which implies, for $\langle w_r^{(s,0)},o_1\rangle\ge0$,\\
$\langle w_r^{(s,1)},o_1\rangle\ge\langle w_r^{(s,0)},o_1\rangle+\Omega\left(\cfrac{\alpha\eta_e}{l}\right)-\Tilde{O}(\cfrac{\eta_e}{l\sqrt{B}})$, and for $\langle w_r^{(s,0)},o_1\rangle<0$, $\langle w_r^{(s,1)},o_1\rangle<0$.\\
Hence, $\sigma_{o_1}^{(s,1)}\ge\sigma_{o_1}^{(s,0)}+\Omega(\cfrac{\alpha m\eta_e}{l})-\Tilde{O}(\cfrac{m\eta_e}{l\sqrt{B}})$.\\

Similarly, using statement (ii), (iii), and (iv) of Lemma \ref{lm5}, we can show that,\\

$\sigma_{o_1}^{(s,1)}\le\sigma_{o_1}^{(s,0)}+O(\alpha m\eta_e)+\Tilde{O}(\cfrac{m}{\sqrt{B}}\eta_e)$,\\

$\sigma_{-o_1}^{(s,1)}\le\sigma_{-o_1}^{(s,0)}+O\left(\cfrac{1-\alpha}{d}m\eta_e\right)+\Tilde{O}\left(\cfrac{1}{\sqrt{B}}m\eta_e\right)$, $\sigma_3^{(s,1)}\ge\sigma_3^{(s,0)}$,\\

$\left|\sigma_q^{(s,1)}-\sigma_q^{(s,0)}\right|\le O(\cfrac{m}{d}\eta_e)+\Tilde{O}(\cfrac{m\eta_e}{\sqrt{B}})$.\\

Therefore, by selecting $B=\Tilde{\Omega}(d^2)$ we get,\\

$\langle \cfrac{\partial l}{\partial w_s^{(1)}},q\rangle\le O(\cfrac{ m\eta_e}{d})+O(\cfrac{ mC_2}{d})$, $\langle \cfrac{\partial l}{\partial w_s^{(1)}},q\rangle\ge-O(\cfrac{m\eta_e}{d^2})-O(\cfrac{mC_2}{d})$,\\

$\langle \cfrac{\partial l}{\partial w_s^{(1)}},o_1\rangle\le O(\alpha mC_2)-\Omega(\cfrac{\alpha^2m\eta_e}{l})$, $\langle \cfrac{\partial l}{\partial w_s^{(1)}},o_1\rangle\ge -O(\alpha mC_2)-O(\alpha^2m\eta_e)$,\\

$\langle \cfrac{\partial l}{\partial w_s^{(1)}},-o_1\rangle\le O(\alpha mC_2)+O(\alpha^2m\eta_e)$, $\langle \cfrac{\partial l}{\partial w_s^{(1)}},-o_1\rangle\ge -O(\alpha mC_2)+\Omega(\cfrac{\alpha^2m\eta_e}{l})$,\\

(Condition 1) Suppose, there exists a $T^\prime$ such that $\forall 0\le t\le T^\prime$, $p_{o_1}^{(s,t)}\ge p_{o_1}^{(s,0)}$, $\bar{p}_{-o_1}^{(s,t)}\le \bar{p}_{-o_1}^{(s,0)}$.\\

Now, if condition 1 holds then, $\sigma_{o_1}^{(s,T)}\ge\sigma_{-o_1}^{(s,0)}+\Omega(\cfrac{\alpha m\eta_e}{l}T)$, which implies we need $T^\prime=O(\cfrac{lC_2}{\alpha\eta_e})$ steps to ensure that, $\sigma_{o_1}^{(s,T)}=\Omega(mC_2)$. Also, as $\sigma_{-o_1}^{(s,T)}\le\sigma_{-o_1}^{(s,0)}+O\left(\cfrac{1-\alpha}{d}m\eta_eT\right)$, we have, $\sigma_{-o_1}^{(s,T)}=O(mC_2)$. Similarly, $\sigma_q^{(s,T)}=O(mC_2)$.\\

Again, if condition 1 holds, then using Lemma \ref{lm5} and equation (\ref{eq_r_c_q}), and equation (\ref{eq_ec_c_v}) we can show that,  $\forall 0\le t\le T^\prime$, we have,\\

$\langle \cfrac{\partial l}{\partial w_s^{(t)}},q\rangle\le O(\cfrac{ m\eta_e}{d}t)+O(\cfrac{ mC_2}{d})$, $\langle \cfrac{\partial l}{\partial w_s^{(t)}},q\rangle\ge-O(\cfrac{m\eta_e}{d^2}t)-O(\cfrac{m\eta_e}{d})-O(\cfrac{mC_2}{d})$,\\

$\langle \cfrac{\partial l}{\partial w_s^{(1)}},o_1\rangle\le O(\alpha mC_2)-\Omega(\cfrac{\alpha^2m\eta_e}{l}t)$, $\langle \cfrac{\partial l}{\partial w_s^{(t)}},o_1\rangle\ge -O(\alpha mC_2)-O(\alpha^2m\eta_et)$,\\

$\langle \cfrac{\partial l}{\partial w_s^{(t)}},-o_1\rangle\le O(\alpha mC_2)+O(\alpha^2m\eta_et)$, $\langle \cfrac{\partial l}{\partial w_s^{(t)}},-o_1\rangle\ge -O(\alpha mC_2)+\Omega(\cfrac{\alpha^2m\eta_e}{l}t)$.\\

Therefore, by selecting $\eta_r=O(\cfrac{C_p}{\alpha mC_2}\cfrac{1}{T^\prime})=O(\cfrac{C_p\eta_e}{\alpha mlC_2^2})$, we can ensure that, condition 1 holds for our selection of $T^\prime$.\\

Similarly, we can prove the case of $v=o_2$.
\end{proof}

\begin{lemma}\label{lm7}
    For any expert $s\in S_v$ such that $v\in\mathcal\{-o_1,-o_2\}$, and $\forall q\in\mathcal{P}\backslash\{o_1,o_2\}$, by selecting $\eta_r=O\left(\cfrac{\eta_eC_p}{ml^2C_2^2}\right)$ and $B=\Tilde{\Omega}\left(d^2\right)$, we can ensure that after $T^\prime=O\left(\cfrac{lC_2}{(1-\alpha)\eta_e}\right)$ iterations,
\begin{enumerate}[label=(\roman*)]
    \item $\sigma_{v}^{(s,T^\prime)}=\Omega\left(mC_2\right)$, $\sigma_{-v}^{(s,T^\prime)}=O(mC_2)$, $\sigma_{q}^{(s,T^\prime)}=O(mC_2)$
    \item $p_v^{(s,T^\prime)}\ge p_v^{(s,0)}$ and, $\bar{p}_{-v}^{(s,T^\prime)}\le \bar{p}_{-v}^{(s,0)}$
\end{enumerate}
\end{lemma}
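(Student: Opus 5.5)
The plan is to repeat the argument of Lemma~\ref{lm2} essentially line by line, with the more prevalent token $v\in\{-o_1,-o_2\}$ in place of $o_1$. The gradient estimates at initialization now come from Lemma~\ref{lm6} rather than Lemma~\ref{lm5}, and the token frequency $\alpha$ is replaced by $1-\alpha$ throughout. We fix $v=-o_1$; the case $v=-o_2$ is symmetric. The proof is an induction over $t$ on a hypothesis analogous to Condition~1:
\[
p_{-o_1}^{(s,t)}\ge p_{-o_1}^{(s,0)}
\quad\text{and}\quad
\bar p_{o_1}^{(s,t)}\le \bar p_{o_1}^{(s,0)}
\qquad\text{for all } 0\le t\le T'.
\]
Under this hypothesis, we track how the routers and the expert columns move.

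\textbf{Step 1 (first update).} Using Lemma~\ref{lm6}(i),(iii) together with Lemma~\ref{lm4}, and taking $B=\Tilde{\Omega}(d^2)$, the router gradient has a component along $q$ of size at most $O(mC_2/d)$. Its component along $-o_1$ is at most $O((1-\alpha)mC_2)$. With $\eta_r$ small, the margins $\langle w_s^{(1)},-o_1-q\rangle$ therefore change by at most $O(C_p)$. This keeps the ordering of routing scores among tokens, so $p_{-o_1}^{(s,1)}\ge p_{-o_1}^{(s,0)}$ and $\bar p_{o_1}^{(s,1)}\le\bar p_{o_1}^{(s,0)}$. In the same way, $\langle w_s^{(1)},-o_1-q\rangle\le 2\log l$.

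For the expert columns, Lemma~\ref{lm6}(iv) applies to every $r$ with $\langle w_r^{(s,0)},-o_1\rangle\ge0$. It gives
\[
\langle w_r^{(s,1)},-o_1\rangle\ge \langle w_r^{(s,0)},-o_1\rangle+\Omega\!\left(\tfrac{(1-\alpha)\eta_e}{l}\right)-\Tilde O\!\left(\tfrac{\eta_e}{l\sqrt B}\right).
\]
Columns with a negative inner product remain negative. Lemma~\ref{lm6}(ii),(vi) bound the growth along $o_1$ by $O(\alpha\eta_e/d)$ and along $q$ by $O(\eta_e/d)$.

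\textbf{Step 2 (iterating).} Assuming the hypothesis up to step $t$, we sum these per-step bounds. This gives
\[
\sigma_{-o_1}^{(s,t)}\ge \sigma_{-o_1}^{(s,0)}+\Omega\!\left(\tfrac{(1-\alpha)m\eta_e}{l}\,t\right).
\]
This lower bound uses the standing assumption that an $\Omega(1)$ fraction of the columns are nonnegatively aligned with $-o_1$. Meanwhile, $\sigma_{o_1}^{(s,t)}$ and $\sigma_q^{(s,t)}$ grow by at most $O(m\eta_e t/d)$. Choosing $T'=O(lC_2/((1-\alpha)\eta_e))$ then yields $\sigma_{-o_1}^{(s,T')}=\Omega(mC_2)$. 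The terms $m\eta_eT'/d$ are $O(mC_2)$, so $\sigma_{o_1}^{(s,T')},\sigma_q^{(s,T')}=O(mC_2)$, which is statement (i).

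\textbf{Step 3 (closing the induction).} Feeding the updated activations into the router gradient formulas (\ref{eq_r_c_q}) and (\ref{eq_r_c_v}) gives linear-in-$t$ bounds of the form $O((1-\alpha)mC_2)+O((1-\alpha)^2m\eta_e t)$ for the router gradient along $\pm o_1$. The component along $q$ is $O(m\eta_e t/d)+O(mC_2/d)$. The cumulative router displacement up to $T'$ is therefore $O\big(\eta_r T'(mC_2+m\eta_eT')\big)$. The choice $\eta_r=O(\eta_eC_p/(ml^2C_2^2))$ makes this $O(C_p)$. The initial score gaps, of size at least $C_p$, are thus not reversed, Condition~1 propagates to $t+1$, and statement (ii) follows.

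The main obstacle I expect is Step~3. We must check that the stated $\eta_r$, which was designed with the factor $1/\alpha$ from Lemma~\ref{lm2}, still suffices when the gradients carry the larger factor $(1-\alpha)$ and $T'$ shrinks by $\alpha/(1-\alpha)$. My first attempt is to note that $\eta_rT'\cdot mC_2(1-\alpha)=O(C_p/l)$ and $\eta_rT'^2m\eta_e(1-\alpha)^2=O(C_p)$. In both products the factors of $1-\alpha$ cancel against $T'$, and the extra $l$ in $\eta_r$ absorbs the remaining slack.
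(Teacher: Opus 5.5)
Your proposal is correct and is essentially the paper's proof: the paper only says that Lemma~\ref{lm7} follows as in Lemma~\ref{lm2}, and you carry out exactly that transcription (Lemma~\ref{lm6} in place of Lemma~\ref{lm5}, $\alpha\to 1-\alpha$, induction on the proficiency condition), including the check that the stated $\eta_r$ keeps the cumulative router drift at $O(C_p)$ despite the shorter $T'$ and larger gradients. One small correction: the $O(\alpha\eta_e/d)$ bound on growth along $-v=o_1$ comes from the second case of Lemma~\ref{lm6}(iv) (columns with $\langle w_r^{(s,0)},v\rangle<0$), not from Lemma~\ref{lm6}(vi), which concerns the other prevalent token $v'$.
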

\begin{proof}
    The proof is similar to the proof of Lemma \ref{lm2}.
\end{proof}

\section{Proof of Theorem \ref{th1}}\label{proof_theorem}

\textbf{Proof sketch.} The results of Theorem \ref{th1} are provided by the post-training quantization analysis. Given the experts' activation bounds of the trained model, we estimate how much the activations produced by the quantized weights are allowed to deviate from their original values yet correctly classify the sequences. As the activations of the experts that learned more prevalent tokens are larger compared to the experts that learned less prevalent tokens, the former are allowed to deviate more than the latter. We use the maximum allowable deviations of expert activations for the two groups of experts (i.e., the experts that learned less prevalent tokens, and the experts that learned more prevalent tokens) to estimate corresponding quantization bin sizes via equation (\ref{q_eqn}). Finally, we evaluate the sufficient bit-widths of the two groups of experts from their corresponding maximum allowable bin sizes. 

\begin{theorem}[\textbf{Full version of Theorem \ref{th1}}]\label{th_a}
     Suppose the number of fine-tuning iterations satisfies  $T = \Theta(\cfrac{l^2C_2}{\alpha \eta_e}\sqrt{\cfrac{\log l}{C_p}})$, and $\max_{r\in[m]}\text{Var}_r^{(s,T)}=\Theta(1)$ for every expert $s$. If $\kappa \geq \gamma$, and the two quantization levels satisfy
    \begin{equation}\label{eqn_bh_a}
    b_h\ge\log_2(1+\Omega(d\sqrt{C_p\log(kmd^2)/l^2C_2^2\log l}))
    \end{equation}
 and 
    \begin{equation}\label{eqn_bl_a}
      b_l\ge\log_2(1+\frac{\alpha}{1-\alpha}\Omega(d\sqrt{C_p\log(kmd^2)/l^2C_2^2\log l})),
    \end{equation}
   then $w.h.p.$ the quantized model has guaranteed generalization, i.e.,  
   \begin{equation}
         \mathbb{P}[\forall(x,y)\sim\mathcal{D}:yf_{Q}^{(T)}(x)>0]=1.
            \end{equation}
\end{theorem}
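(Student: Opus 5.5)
Our plan is to combine the activation bounds of Lemma \ref{lm1_a} with a per-neuron perturbation bound for column-wise quantization. The argument then reduces to a margin requirement for each group of experts.

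\textbf{Step 1: margin of the full-precision model.} First we identify which experts land in which precision group. By Lemma \ref{lm1_a}(ii), within $S_+$ every expert in $S_{o_1}$ has a smaller $\Lambda_s^{(T)}$ than every expert in $S_{-o_1}$, and the analogous statement holds in $S_-$. Since $\kappa\ge\gamma$, all experts aligned to $o_1$ or $o_2$ are therefore quantized to $b_h$, and every $b_l$-expert is aligned to $-o_1$ or $-o_2$.

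Next, fix $(x,y)$ with, say, $y=+1$ and task-relevant token $v\in\{\pm o_1\}$. By Lemma \ref{lm1_a}(i), each expert $s\in S_v$ selects $v$ with gating value $>1/2$. Hence its contribution to $f^{(T)}(x)$ is at least $\tfrac12\sigma_v^{(s,T)}$.

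By Lemma \ref{lm1_a}(iv), every other token contributes at most $O(mC_2)$ to any expert, because gating values sum to one over the $l$ selected tokens. The same bound applies to all experts in $S_-$. Combining this with $\gamma_v=\Omega(1)$ and $\big||S_+|-|S_-|\big|=O(\sqrt k)$, we get $yf^{(T)}(x)\ge \Omega(|S_v|\,\sigma_v^{(s,T)})-O(kmC_2)$, which is positive.

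\textbf{Step 2: bounding the quantization perturbation.} For column-wise quantization with $z=0$, each entry satisfies $|\hat W-W|\le\Delta/2$. The bin size is $\Delta_r=(\max-\min)/(2^b-1)$. Using $\max_r\mathrm{Var}_r^{(s,T)}=\Theta(1)$, we will argue that the range of each column is $O(\sqrt{\log(kmd^2)})$ w.h.p. We plan to treat the entries as sub-Gaussian with $\Theta(1)$ variance and take a union bound over the $kmd$ entries, which gives the $\log(kmd^2)$ term. Thus $\|\hat w_r^{(s)}-w_r^{(s)}\|_\infty\le O(\sqrt{\log(kmd^2)})/(2^b-1)$.

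For a token $u\in\mathcal{P}$, ReLU is 1-Lipschitz. Each neuron therefore changes its activation by at most $|\langle \hat w_r-w_r,u\rangle|$, and summing over the $m$ neurons gives
\[
|\hat\sigma_u^{(s)}-\sigma_u^{(s)}|\le \frac{m\,d\,O(\sqrt{\log(kmd^2)})}{2^b-1}
\]
as a crude bound. The factor $d$ comes from bounding the $\ell_1$ norm of a unit vector by $\sqrt d$, combined with the column $\ell_2$ error $\sqrt d\,\Delta$. We must also check that quantization does not change which tokens each router selects. This holds because routers are not quantized, so the sets $J_s^{(T)}$ and the gating values $G_j^{(s,T)}$ are unchanged.

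\textbf{Step 3: group-wise requirements.} For the $b_h$ group we require the perturbation to be at most a small constant fraction of $\sigma_{o_i}^{(s,T)}=\Omega(mlC_2\sqrt{\log l/C_p})$. This gives $2^{b_h}-1\ge \Omega\big(d\sqrt{C_p\log(kmd^2)/(l^2C_2^2\log l)}\big)$, which is (\ref{eqn_bh_a}).

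For the $b_l$ group the margin is larger by the factor $(1-\alpha)/\alpha$ from Lemma \ref{lm1_a}(iii), which gives (\ref{eqn_bl_a}). In addition, the perturbations of the irrelevant-token activations, which are $O(mC_2)$ at full precision, must not overturn the margin after summing over all $k$ experts. We would absorb this by comparing $k$ times the perturbation with $|S_v|$ times the activation, using $\gamma_v=\Omega(1)$.

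We expect Step 2 to be the main obstacle. The difficulty is justifying a w.h.p. bound on the column range from the variance assumption alone, and matching exactly the factor $d\sqrt{\log(kmd^2)}$ in the stated bit bounds. If the sub-Gaussian route fails, the fallback is to assume bounded entries and use the fact that the range is within a $\sqrt{\log}$ factor of the standard deviation. Once per-expert deviations are controlled uniformly, the case analysis over $v\in\mathcal{P}_r$ and $y=\pm1$ is symmetric.
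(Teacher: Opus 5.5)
Your Step~1 and the use of Lemma~\ref{lm1_a}(ii) with $\kappa\ge\gamma$ follow the paper. The gap is in Step~2, where you expected it, and it cannot be closed the way you plan. The hypothesis $\max_r\mathrm{Var}_r^{(s,T)}=\Theta(1)$ concerns deterministic trained weights, and nothing in the theorem makes a column's entries sub-Gaussian. Variance $\Theta(1)$ also does not confine the range to $O(\sqrt{\log(kmd^2)})$: a column with one entry $\sqrt d$ and zeros elsewhere has variance $(d-1)/d$ and range $\sqrt d$. The same example refutes your fallback. The missing ingredient is the one the paper uses, the Sz\H{o}kefalvi-Nagy inequality $\mathrm{Var}_r\ge\beta_r^2/(2d)$, which turns the hypothesis into the deterministic range bound $\beta_r=O(\sqrt d)$.

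Your factor $d$ is also mis-derived. The bound $|\langle\hat w_r-w_r,u\rangle|\le\|\hat w_r-w_r\|_2\|u\|_2\le\sqrt d\,\Delta_r/2$ (the $\ell_\infty$--$\ell_1$ route gives the same) contains one $\sqrt d$, not two. Your formula reproduces (\ref{eqn_bh_a}) only because an underestimated range multiplies an overestimated projection. With the honest range and your $md\Delta$ bound, you would need $2^b-1$ of order $d^{3/2}$ rather than $d\sqrt{\log(kmd^2)}$ (up to the common factor $\sqrt{C_p}/(lC_2\sqrt{\log l})$). The correct deterministic count uses $\sqrt d\,\Delta_r/2$ with $\Delta_r=\beta_r/(2^b-1)$ and $\beta_r=O(\sqrt d)$. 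This gives per-neuron error $O(d)/(2^b-1)$, so the stated conditions hold with a factor $\sqrt{\log(kmd^2)}$ to spare. In the paper that logarithm has a different origin. Quantization errors are modeled as independent $\mathrm{Unif}[-\Delta/2,\Delta/2]$ entries, each neuron's error is controlled by a Hoeffding-type tail, and a union bound over the $km$ neurons produces $\log(kmd^2)$.

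That spare factor is what your Step~3 needs, and without it the step fails. Under worst-case errors, (\ref{eqn_bl_a}) only guarantees that each $b_l$-expert's activations move by at most $c\,\frac{1-\alpha}{\alpha}\,mlC_2\sqrt{\log l/C_p}$ for some constant $c$. That is $\frac{1-\alpha}{\alpha}$ times the scale of $\sigma_{o_i}^{(s,T)}$ guaranteed by Lemma~\ref{lm1_a}(iii). Take an input containing the less prevalent $o_1$. Its guaranteed margin comes only from $S_{o_1}$ and is of order $\gamma_{o_1}|S_+|\,mlC_2\sqrt{\log l/C_p}$. Meanwhile each of the $(1-\kappa)|S_-|$ low-precision experts of $S_-$ (all in $S_{-o_2}$, hence of opposite sign) can lower the output by the amount above. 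Comparing $k$ times the perturbation with $|S_v|$ times the activation, as you propose, then forces $c=O(\alpha\gamma_{o_1}/(1-\alpha))$. So the constant hidden in (\ref{eqn_bl_a}) would have to cancel exactly the $\frac{\alpha}{1-\alpha}$ saving the theorem is meant to exhibit. The paper avoids this by using its independence model across experts and neurons to assert that the opposite-class aggregate is only $O(\sqrt{km}\,lC_2\sqrt{\log l/C_p})$. In a deterministic argument you must instead spend the leftover $\sqrt{\log(kmd^2)}$ to dominate the constant $(1-\alpha)/(\alpha\gamma)$ for large $d$. In your accounting that factor has already been used up by the unjustified range bound.
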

\begin{proof}
    For any $r\in[m]$ of $s\in [k]$, we denote the quantized representation of\\ $w_r^{(s,T)}=\left[ w_{r_1}^{(s,T)}, w_{r_2}^{(s,T)}, ..., w_{r_d}^{(s,T)}\right]^T$ by, $w_r^{(s,T;Q)}=\left[ w_{r_1}^{(s,T;Q)}, w_{r_2}^{(s,T;Q)}, ..., w_{r_d}^{(s,T;Q)}\right]^T\\=\left[ w_{r_1}^{(s,T)}+\Delta w_{r_1}^{(s,T;Q)}, w_{r_2}^{(s,T)}+\Delta w_{r_2}^{(s,T;Q)}, ..., w_{r_d}^{(s,T)}+\Delta w_{r_d}^{(s,T;Q)}\right]^T$.\\

Here, for any $i\in[d], \Delta w_{r_i}^{(s,T;Q)}$ is the quantization-noise generated from the quantization of the weight $w_{r_i}^{(s,T)}$.\\

Now, over the randomness of the pre-trained model, for any $r\in[m]$ of any $s\in[k]$, for any $i\in[d], \Delta w_{r_i}^{(s,T;Q)}\sim\text{Unif}\left[-\cfrac{\Delta_r^{(s)}}{2},\cfrac{\Delta_r^{(s)}}{2}\right]$, where $\Delta_r^{(s)}$ is the quantization bin size of the column $r\in[m]$ of the expert $s\in[k]$. Here we assume that, for $i_1,i_2\in[d]$ s.t. $i_1 \neq i_2$, $\Delta w_{r_{i_1}}^{(s,T;Q)}$ and $\Delta w_{r_{i_2}}^{(s,T;Q)}$ are independent to each other. Similarly, we assume that  for any $r_1,r_2\in[m]$ s.t. $r_1 \neq r_2$, $\Delta w_{r_{1_{i_1}}}^{(s,T;Q)}$, $\Delta w_{r_{2_{i_2}}}^{(s,T;Q)}$, $\Delta w_{r_{1_{i_2}}}^{(s,T;Q)}$ and, $\Delta w_{r_{2_{i_1}}}^{(s,T;Q)}$, are independent to each other. We further assume that, for any $s_1,s_2\in [k]$ s.t. $s_1\neq s_2$, $\Delta w_{r_{1_{i_1}}}^{(s_1,T;Q)}$, $\Delta w_{r_{2_{i_2}}}^{(s_2,T;Q)}$, $\Delta w_{r_{1_{i_2}}}^{(s_1,T;Q)}$, $\Delta w_{r_{2_{i_1}}}^{(s_2,T;Q)}$, $\Delta w_{r_{1_{i_1}}}^{(s_2,T;Q)}$, $\Delta w_{r_{1_{i_2}}}^{(s_2,T;Q)}$, $\Delta w_{r_{2_{i_1}}}^{(s_1,T;Q)}$ and, $\Delta w_{r_{2_{i_2}}}^{(s_1,T;Q)}$, are independent to each other.\\

Now, from statement (i) of Lemma \ref{lm1_a}, for any $s_1\in S_{o_1}$, $p_{o_1}^{(s_1,T)}=1$ and $\forall x^{(j)}=o_1$ for some $j\in[n]$, $G_j^{(s_1,T)}\ge\cfrac{1}{2}$. Furthermore, from statement (iii) of Lemma \ref{lm1_a}, $\sigma_{o_1}^{(s_1,T)}=\Omega(mlC_2\sqrt{\cfrac{\log l}{C_p}})$. 

Therefore, for any $(x,y)\sim\mathcal{D}$ such that $\exists j\in[n]$ with $x^{(j)}=o_1$, 

$\sum_{s_1\in S_{o_1}}f_{s_1}^{(T)}(x)=\Omega(\gamma_{o_1}mlC_2\sqrt{\cfrac{\log l}{C_p}})$.

On the other hand, from statement (i) of Lemma \ref{lm1_a}, for any $s_3\in S_{-o_1}$, $p_{o_1}^{(s_3,T)}=0$.

Therefore, for any $(x,y)\sim\mathcal{D}$ such that $\exists j\in[n]$ with $x^{(j)}=o_1$, 

$\sum_{s\in S_{+}}f_{s}^{(T)}(x)=\Omega(\gamma_{o_1}kmlC_2\sqrt{\cfrac{\log l}{C_p}})$.

Again, from statement (iv) of Lemma \ref{lm1_a}, for any $q\in\mathcal{P}\backslash\{o_1,o_2\}$, $\forall s\in[k], \sigma_q^{(s,T)}=O(mC_2)$, and $\forall s\in S_-, \sigma_{o_1}^{(s,T)}=O(mC_2)$.

Therefore,  for any $(x,y)\sim\mathcal{D}$ such that $\exists j\in[n]$ with $x^{(j)}=o_1$,

$\sum_{s\in S_{+}}f_{s}^{(T)}(x)-\sum_{s\in S_-}f_s^{(T)}=\Omega(\gamma_{o_1}kmlC_2\sqrt{\cfrac{\log l}{C_p}})-O(klmC_2)$, which implies for any $(x,y)\sim\mathcal{D}$ such that $\exists j\in[n]$ with $x^{(j)}=o_1$, $yf^{(T)}(x)>0$.\\

Therefore, to ensure that for any $(x,y)\sim\mathcal{D}$ such that $\exists j\in[n]$ with $x^{(j)}=o_1$, $yf_Q^{(T)}(x)>0$, we need $\langle w_r^{(s_1,T)}-w_r^{(s_1,T;Q)},o_1\rangle\le O(lC_2\sqrt{\cfrac{\log l}{C_p}})$, for all $r\in [m]$ of all $s_1\in S_{o_1}$ that satisfy $\langle w_r^{(s_1,0)},o_1\rangle\ge0$.\\

Now, for an $r\in [m]$ of an $s_1\in S_{o_1}$,\\ $\mathbb{P}\left[\left|\langle w_r^{(s_1,T)}-w_r^{(s_1,T;Q)},o_1\rangle\right|\ge \Omega(lC_2\sqrt{\cfrac{\log l}{C_p}})\right]\le\exp{\left(-\cfrac{\Omega(l^2C_2^2\log l/C_p)}{d\Delta_r^{(s_1)^2}}\right)}$.\\

Therefore, for all $r\in [m]$ of all $s_1\in S_{o_1}$, we need $\Delta_r^{(s_1)}\le O(lC_2\sqrt{\cfrac{\log l}{C_pd\log(\gamma_{o_1}kmd^2)})}$ to ensure that, for all $r\in[m]$ of all $s_1\in S_{o_1}$, we have,\\ $\mathbb{P}\left[ \langle w_r^{(s_1,T)}-w_r^{(s_1,T;Q)},o_1\rangle\le O(lC
_2\sqrt{\cfrac{\log l}{C_p}})\right]\ge 1-\cfrac{1}{d^2}$.\\

Similarly, for all $r\in [m]$ of all $s_2\in S_{o_2}$, we need $\Delta_r^{(s_2)}\le O(lC_2\sqrt{\cfrac{\log l}{C_pd\log(\gamma_{o_2}kmd^2)})}$ to ensure that, for all $r\in[m]$ of all $s_2\in S_{o_2}$, we have\\ $\mathbb{P}\left[ \langle w_r^{(s_2,T)}-w_r^{(s_2,T;Q)},o_2\rangle\le O(lC
_2\sqrt{\cfrac{\log l}{C_p}})\right]\ge 1-\cfrac{1}{d^2}$,\\ for all $r\in [m]$ of all $s_3\in S_{-o_1}$, we need $\Delta_r^{(s_3)}\le O(\cfrac{(1-\alpha)}{\alpha}l^2C_2\sqrt{\cfrac{\log l}{C_pd\log(\gamma_{-o_1}kmd^2)})}$ to ensure that, for all $r\in[m]$ of all $s_3\in S_{-o_1}$, we have\\ $\mathbb{P}\left[ \langle w_r^{(s_3,T)}-w_r^{(s_3,T;Q)},-o_1\rangle\le O(\cfrac{(1-\alpha)}{\alpha}l^2C
_2\sqrt{\cfrac{\log l}{C_p}})\right]\ge 1-\cfrac{1}{d^2}$, and\\
for all $r\in [m]$ of all $s_4\in S_{-o_2}$, we need $\Delta_r^{(s_4)}\le O(\cfrac{(1-\alpha)}{\alpha}l^2C_2\sqrt{\cfrac{\log l}{C_pd\log(\gamma_{-o_2}kmd^2)})}$ to ensure that, for all $r\in[m]$ of all $s_4\in S_{-o_2}$, we have\\ $\mathbb{P}\left[ \langle w_r^{(s_4,T)}-w_r^{(s_4,T;Q)},-o_2\rangle\le O(\cfrac{(1-\alpha)}{\alpha}l^2C
_2\sqrt{\cfrac{\log l}{C_p}})\right]\ge 1-\cfrac{1}{d^2}$.\\

Now, for all $r\in [m]$ of all $s\in S_-$, if $\Delta_r^{(s)}=\max\left\{\Delta_r^{(s_1)},\Delta_r^{(s_3)}\right\}$, we have\\ $\forall(x,y)\sim \mathcal{D}$ such that $\exists j\in[n]$ with $x^{(j)}=\pm o_1$,\\ $\mathbb{P}\left[-\sum_{s\in S_-}f_{Q_s}^{(T)}(x)=O(\sqrt{km}lC_2\sqrt{\cfrac{\log l}{C_p}})\right]\ge 1-\cfrac{1}{d^2}$.\\ Here, $f_{Q_s}^{(T)}(x)$ is the quantized output for the expert $s$.\\

Similarly, for all $r\in [m]$ of all $s\in S_1$, if $\Delta_r^{(s)}=\max\left\{\Delta_r^{(s_2)},\Delta_r^{(s_4)}\right\}$, we have\\ $\forall(x,y)\sim \mathcal{D}$ such that $\exists j\in[n]$ with $x^{(j)}=\pm o_2$,\\ $\mathbb{P}\left[\sum_{s\in S_+}f_{Q_s}^{(T)}(x)=O(\sqrt{km}lC_2\sqrt{\cfrac{\log l}{C_p}})\right]\ge 1-\cfrac{1}{d^2}$.\\

Therefore, for all $s_1\in S_{o_1}, s_2\in S_{o_2}, s_3\in S_{-o_1}, s_4\in S_{-o_2}$, for all $r\in[m]$, we need $\Delta_r^{(s_1)}=O(lC_2\sqrt{\cfrac{\log l}{C_pd\log(\gamma_{o_1}kmd^2)})}, \Delta_r^{(s_2)}= O(lC_2\sqrt{\cfrac{\log l}{C_pd\log(\gamma_{o_2}kmd^2)})}$, and\\ $\Delta_r^{(s_3)}=O(\cfrac{(1-\alpha)}{\alpha}l^2C_2\sqrt{\cfrac{\log l}{C_pd\log(\gamma_{-o_1}kmd^2)})},\\ \Delta_r^{(s_4)}= O(\cfrac{(1-\alpha)}{\alpha}l^2C_2\sqrt{\cfrac{\log l}{C_pd\log(\gamma_{-o_2}kmd^2)})}$.\\

Now, as for all $s\in[k]$, $\max_{r\in[m]}\text{Var}(w_r^{(s,T)})=\Theta(1)$. On the other hand, for any $s\in[k]$ and any $r\in [m]$, using the Von-Szokefalvi-Nagy inequality, $\text{Var}(w_r^{(s,T)})\ge\cfrac{\beta_r^{(s,T)^2}}{2d}$. Therefore, for all $s\in[k]$, $\max_{r\in[m]} \beta_r^{(s,T)}=\Theta(\sqrt{d})$.\\

Let us denote the bit-width of the expert $s_1\in S_{o_1}, s_2\in S_{o_2}, s_3\in S_{-o_1}$, and $s_4\in S_{-o_2}$ by $b_{s_1}, b_{s_2}, b_{s_3}$, and $b_{s_4}$, respectively.\\ 

Therefore, we need\\ $b_{s_1}= \log_2\left(1+\cfrac{\max_{r\in[m]}\beta_r^{(s_1,T)}}{\min_{r\in[m]}\Delta_r^{(s_1)}}\right)\ge \log_2\left(1+\Omega\left(\cfrac{d}{lC_2}\sqrt{\cfrac{C_p\log(\gamma_{o_1}kmd^2)}{\log l}}\right)\right)$.\\

Similarly, we need $b_{s_2}\ge \log_2\left(1+\Omega\left(\cfrac{d}{lC_2}\sqrt{\cfrac{C_p\log(\gamma_{o_2}kmd^2)}{\log l}}\right)\right)$,\\ $b_{s_3}\ge \log_2\left(1+\Omega\left(\cfrac{\alpha d}{(1-\alpha)l^2C_2}\sqrt{\cfrac{C_p\log(\gamma_{-o_1}kmd^2)}{\log l}}\right)\right)$,\\ and $b_{s_4}\ge \log_2\left(1+\Omega\left(\cfrac{\alpha d}{(1-\alpha)l^2C_2}\sqrt{\cfrac{C_p\log(\gamma_{-o_2}kmd^2)}{\log l}}\right)\right)$.\\

Now, from statement (ii) of Lemma \ref{lm1_a}, by selecting $\kappa\ge\gamma$, we can ensure that $\forall s_1\in S_{o_1}$, and $\forall s_2\in S_{o_2}$, $b_{s_1},b_{s_2}=b_h$.\\

As $\gamma_{o_1},\gamma_{o_2},\gamma_{-o_1},\gamma_{-o_2}$ are $\Omega(1)$, we need\\

$\displaystyle b_h\ge\log_2(1+\Omega(d\sqrt{C_p\log(kmd^2)/l^2C_2^2\log l}))$, and\\

$\displaystyle b_l\ge\log_2(1+\frac{\alpha}{1-\alpha}\Omega(d\sqrt{C_p\log(kmd^2)/l^2C_2^2\log l}))$, to ensure that,\\

$\displaystyle \mathbb{P}[\forall(x,y)\sim\mathcal{D}:yf_{Q}^{(T)}(x)>0]=1$

\end{proof}